\renewcommand*{\backref}[1]{}
\renewcommand*{\backrefalt}[4]{{%
    \ifcase #1 Not cited.%
          \or Cited on page~#2.%
          \else Cited on pages #2.%
    \fi%
    }}
\newcommand{\figleft}{{\em (Left)}}
\newcommand{\figcenter}{{\em (Center)}}
\newcommand{\figright}{{\em (Right)}}
\newcommand{\figtop}{{\em (Top)}}
\newcommand{\figbottom}{{\em (Bottom)}}
\def\eqref#1{equation~\ref{#1}}
\def\1{\bm{1}}
\def\vzero{{\bm{0}}}
\def\vone{{\bm{1}}}
\def\vphi{{\bm{\phi}}}
\def\va{{\bm{a}}}
\def\vb{{\bm{b}}}
\def\vc{{\bm{c}}}
\def\vf{{\bm{f}}}
\def\vq{{\bm{q}}}
\def\vr{{\bm{r}}}
\def\vu{{\bm{u}}}
\def\vv{{\bm{v}}}
\def\vw{{\bm{w}}}
\def\vx{{\bm{x}}}
\def\vy{{\bm{y}}}
\def\mA{{\bm{A}}}
\def\mB{{\bm{B}}}
\def\mI{{\bm{I}}}
\def\mK{{\bm{K}}}
\def\mL{{\bm{L}}}
\def\mU{{\bm{U}}}
\def\mW{{\bm{W}}}
\def\mLambda{{\bm{\Lambda}}}
\DeclareMathAlphabet{\mathsfit}{\encodingdefault}{\sfdefault}{m}{sl}
\SetMathAlphabet{\mathsfit}{bold}{\encodingdefault}{\sfdefault}{bx}{n}
\def\gG{{\mathcal{G}}}
\def\gN{{\mathcal{N}}}
\def\gO{{\mathcal{O}}}
\def\gP{{\mathcal{P}}}
\newcommand{\R}{\mathbb{R}}
\newcommand{\diag}{\mathrm{diag}}
\newcommand{\Var}{\mathrm{Var}}
\newcommand{\Cov}{\mathrm{Cov}}
\newcommand{\rmu}{\mathrm{u}}
\newcommand{\rmd}{\mathrm{d}}
\newcommand{\im}{\mathrm{im}}
\newcommand{\textGrad}{\mathrm{grad}}
\newcommand{\textCurl}{\mathrm{curl}}
\newcommand{\textDiv}{{\mathrm{div}}}
\newcommand{\textSC}{\mathrm{SC}_2}
\newcommand{\Matern}{Mat\'ern}
\mathchardef\mhyphen="2D 
\newcommand{\spn}{\mathrm{span}}
\def\vu{{\bm{u}}}
\crefname{assumption}{Assumption}{Assumptions}
\theoremstyle{plain}
\newtheorem{theorem}{Theorem}
\newtheorem{proposition}[theorem]{Proposition}
\newtheorem{lemma}[theorem]{Lemma}
\newtheorem{corollary}[theorem]{Corollary}
\theoremstyle{definition}
\newtheorem{definition}[theorem]{Definition}
\theoremstyle{remark}
\newtheorem{remark}[theorem]{Remark}
\begin{document}

%

%

\twocolumn[

\aistatstitle{Hodge-Compositional Edge Gaussian Processes}

\aistatsauthor{Maosheng Yang \And Viacheslav Borovitskiy \And  Elvin Isufi }

\aistatsaddress{TU Delft, Netherlands \And  ETH Zürich, Switzerland \And TU Delft, Netherlands } ]

\begin{abstract}
  We propose principled Gaussian processes (GPs) for modeling functions defined 
  over the edge set of a simplicial 2-complex, a structure similar to a graph in which edges may form triangular faces.
  This approach is intended for learning flow-type data on networks where edge flows can be characterized by the discrete divergence and curl.
  Drawing upon the Hodge decomposition, we first develop classes of divergence-free and curl-free edge GPs, suitable for various applications. 
  We then combine them to create \emph{Hodge-compositional edge GPs} that are expressive enough to represent any edge function. 
  These GPs facilitate direct and independent learning for the different Hodge components of edge functions, enabling us to capture their relevance during hyperparameter optimization.
  To highlight their practical potential, we apply them for flow data inference in currency exchange, ocean currents and water supply networks, comparing them to alternative models.
\end{abstract}

\section{INTRODUCTION} \label{sec:introduction}
Gaussian processes (GPs) are a widely used class of statistical models capable of quantifying uncertainty associated to their own predictions \citep{rasmussenGaussianProcessesMachine2006}. 
These models are determined by covariance kernels which encode prior knowledge about the unknown function. 
Choosing an appropriate kernel is often challenging, particularly when the input space is non-Euclidean \citep{duvenaud2014automatic}.

Developing GPs on graphs has been a subject of recent work, which requires structured kernels to encode the dependence between nodes \citep{venkitaramanGaussianProcessesGraphs2018,zhiGaussianProcessesGraphs2023}, like the diffusion \citep{smolaKernelsRegularizationGraphs2003} or random walk kernels \citep{vishwanathan2010graph}. 
More recently, \citet{borovitskiyMatErnGaussian2021} derived the more general family of \Matern~kernels on graphs from stochastic partial differential equations (SPDEs) thereon, mirroring the continuous approaches on manifolds \citep{borovitskiy2020matern,azangulov2022stationary,azangulov2023stationary}.
\citet{nikitinNonseparableSpatiotemporalGraph2022} incorporated the temporal factor in this framework to build temporal-graph kernels.
However, GPs in these works are targeted for modeling functions on the nodes of networks. 

We instead focus on functions defined on the \emph{edges}, of particular interest for modeling edge-based dynamical processes in many complex networks, such as flows of energy, signal or mass \citep{schaub2014structure}.
For example, in water supply networks, we typically monitor the flow rates within pipes (edges) connecting tanks (nodes) \citep{zhou2022bridging}.
Other examples include energy flows in power grids \citep{jiaGraphbasedSemiSupervisedActive2019}, synaptic signals between neurons in brain networks \citep{faskowitz2022edges}, and exchange rates on trading paths (edges) of currencies (nodes) \citep{jiangStatisticalRankingCombinatorial2011}.

\begin{table}[b!]
\vspace*{-3.75ex}
\footnotesize
\urlstyle{same}
\rule{0.8in}{0.4pt}\\[0.75ex]
Correspondence to: \href{mailto:m.yang-2@tudelft.nl}{m.yang-2@tudelft.nl}. Code available at: \url{https://github.com/cookbook-ms/Hodge-Edge-GP}.$\!$
\end{table}

While it might seem intuitive to use node-based methods for edge-based tasks using line-graphs \citep{godsilAlgebraicGraphTheory2001}, this often yields sub-optimal solutions \citep{jiaGraphbasedSemiSupervisedActive2019}. 
Alternatively, recent successes in signal processing and neural networks for edge data have emerged from modeling flows on the edge set of a simplicial 2-complex ($\textSC$), including \citep{jiaGraphbasedSemiSupervisedActive2019,barbarossaTopologicalSignalProcessing2020,schaubSignalProcessingHigherOrder2021,yangSimplicialConvolutionalNeural2022,yangSimplicialConvolutionalFilters2022,roddenberry2021principled,yangConvolutionalLearningSimplicial2023}, among others. 
A $\textSC$ can be viewed as a graph with the additional set of triangular faces, encoding how edges are adjacent to each other via nodes or faces.
A $\textSC$ also allows to characterize key properties of edge flows using discrete concepts of \emph{divergence} (div) and \emph{curl} \citep{lovaszDiscreteAnalyticFunctions2004,limHodgeLaplaciansGraphs2020}, measuring how they diverge at nodes and circulate along faces.
For example, electric currents in circuit networks respecting the Kirchhoff's law are div-free \citep{gradyDiscreteCalculus2010}, and arbitrage-free exchange rates are curl-free along loops of trading paths \citep{jiangStatisticalRankingCombinatorial2011}.
Moreover, edge functions on a $\textSC$ admit the \emph{Hodge decomposition} into three parts: gradient, curl and harmonic components, being either curl-free, div-free or both div- and curl-free \citep{limHodgeLaplaciansGraphs2020}.
This provides unique insights in various applications including ranking \citep{jiangStatisticalRankingCombinatorial2011}, gaming theory \citep{candoganFlowsDecompositionsGames2011}, brain networks \citep{vijay2022hodge} and finance \citep{fujiwaraHodgeDecompositionBitcoin2020}.  
Nevertheless, existing works on edge-based learning remain deterministic and there is a lack of principled ways to define GP priors on the edge set of SCs, which is the central goal of this work.

Our main contribution lies in the proposal of \emph{Hodge-compositional edge GPs}.
We build them as combinations of three GPs, each modeling a specific part of the Hodge decomposition of an edge function, namely the gradient, curl and harmonic parts.
With a focus on the \Matern~GP family, we show that each of them can be linked to a SPDE, extending the framework used by \citet{borovitskiy2020matern,borovitskiyMatErnGaussian2021,borovitskiyIsotropicGaussianProcesses2023}.
Compared to a direct extension of graph GPs, they enable separate learning of the different Hodge components, which allows us to capture the practical behavior of edge flows.
We also demonstrate their practical potential in edge-based learning tasks in foreign currency exchange markets, ocean current analysis and water supply networks. 



\section{BACKGROUND} \label{sec:background}
A random function $f:X\to\R$ defined over a set $X$ is a Gaussian process $f\sim \gG\gP(\mu,k)$ with mean function $\mu(\cdot)$ and kernel $k(\cdot,\cdot)$ if, for any finite set of points $\vx = (x_1,\dots,x_n)^\top\in X^n$, the random vector $f(\vx) = (f(x_1),\dots,f(x_n))^\top$ is multivariate Gaussian with mean vector $\mu(\vx)$ and covariance matrix $k(\vx,\vx)$.

The kernel $k$ of a \emph{prior} GP encodes prior knowledge about the unknown function while its mean $\mu$ is usually assumed to be zero.
GP regression combines such a \emph{prior} with training data $x_1,y_1,\dots,x_n,y_n$ where $x_i\in X$, $y_i\in\R$ with $y_i=f(x_i)+\epsilon_i$,  $\epsilon_i\sim\gN(0,\sigma_\epsilon^2)$. 
This results in a posterior $f_{\mid \vy}$ which is another GP: $f_{\mid \vy} \sim \mathcal{GP}(\mu_{\mid \vy}, k_{\mid \vy})$.
For any new input $x^* \in X$, the mean $\mu_{\mid \vy}(x^*)$ is the prediction and the posterior variance $ k_{\mid \vy}(x^*, x^*)$ quantifies the uncertainty.
We refer the reader to \citet{rasmussenGaussianProcessesMachine2006} for more details.
Defining an appropriate kernel is one of the main challenges in GP modeling \citep{duvenaud2014automatic}.

\subsection{GPs on Graphs} \label{subsec:graph-gp}
Let $G=(V, E)$ be an unweighted graph where $V=\{1,\dots,N_0\}$ is the set of nodes and $E$ is the set of $N_1$ edges such that if nodes $i,j$ are connected, then $e=(i,j)\in E$. 
We can define real-valued functions on its node set $f_0:V\to\R$, collected into a vector $\vf_0=(f_0(1),\dots,f_0({N_0}))^\top \in \R^{N_0}$.  
Denote the \emph{oriented} node-to-edge incidence matrix by $\mB_1$ of dimension $N_0\times N_1$.
It has entries $[\mB_1]_{ie}=-1$ and $[\mB_1]_{je}=1$, if an edge $e=\{i,j\}$ exists, and zero otherwise. 
One can view oriented graphs as undirected graphs having an additional orientation structure on the edge set \citep[Sec 8.3]{godsilAlgebraicGraphTheory2001}.
They are different from directed graphs, as discussed next subsection.
The \emph{graph Laplacian} is then given by $\mL_0=\mB_1\mB_1^\top$, which is a positive semi-definite linear operator on the space $\R^{N_0}$ of node functions.
It admits an eigendecomposition $\mL_0 = \mU_0\mLambda_0\mU_0^\top$ where $\mLambda_0$ collects its eigenvalues on the diagonal and $\mU_0$ collects the orthogonal eigenvectors of $\mL_0$ \citep{chung1997spectral}.

A GP on graphs $\vf_0\sim \gG\gP(\vzero, \mK_0)$ assumes $\vf_0$ is a random function with zero mean and a graph kernel $\mK_0$ which encodes the covariance between pairs of nodes. 
To construct principled graph GPs, \citet{borovitskiyMatErnGaussian2021} extended the idea of deriving continuous GPs from SPDEs \citep{whittle1963stochastic,lindgren2011explicit} to the domain of graphs. 
Specifically, given the following SPDE on graphs with a Gaussian noise $\vw_0\sim\gN(\vzero,\mI)$
\begin{equation} \label{eq.graph-spde}
  \Phi({\mL}_0) \vf_0 = \vw_0, \text{ with } \Phi({\mL}_0) =   \Bigl( \frac{2\nu}{\kappa^2}\mI+\mL_0 \Bigr)^\frac{\nu}{2},
\end{equation}
where $\Phi(\mL_0) = \mU_0\Phi(\mLambda_0)\mU_0^\top$ and $\Phi(\cdot)$ applies to $\mLambda_0$ element-wise, its solution is the \Matern~graph GP
\begin{equation} \label{eq.graph-matern-gp}
  \vf_0 \sim \gG\gP   \Big( \vzero, \Bigl( \frac{2\nu}{\kappa^2}\mI+\mL_0 \Bigr)^{-\nu} \Big)
\end{equation}
with positive parameters $\kappa,\nu$. 
When scaled properly, the \Matern~kernel gives the graph diffusion kernel for $\nu\to\infty$, which in turn relates to the random walk kernel by \citet{kondor2002diffusion}. 
This SPDE framework can be extended to spatial-temporal data yielding respective graph kernels \citep{nikitinNonseparableSpatiotemporalGraph2022}.

\begin{figure*}[hpt!]
  \makebox[\textwidth][c]{
  \begin{subfigure}{0.2\linewidth}
    \includegraphics[width=1.1\linewidth]{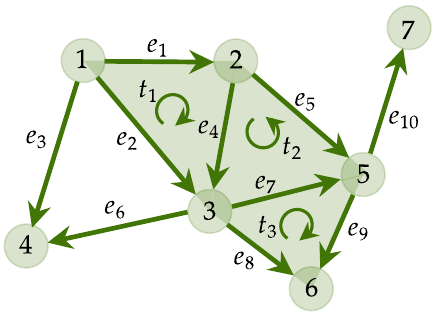}
    \caption{$\textSC$}
    \label{fig:sc_example}
  \end{subfigure}
  \begin{subfigure}{0.2\linewidth}
    \includegraphics[width=1.1\linewidth]{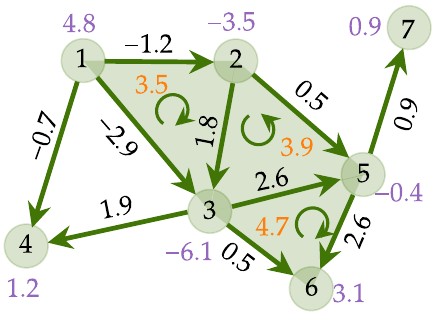}
    \caption{$\vf_1$}
    \label{fig:sc_edge_function}
  \end{subfigure}
  \hspace{7.5pt}
  \begin{subfigure}{0.2\linewidth}
    \includegraphics[width=1.1\linewidth,right]{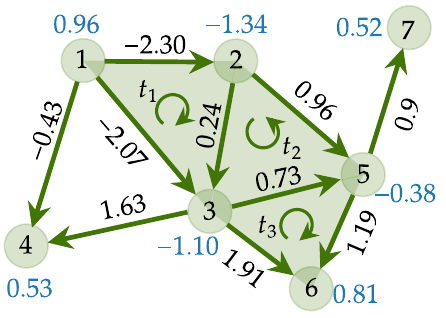}
    \caption{$\vf_G$}
    \label{fig:sc_edge_gradient}
  \end{subfigure}
  \begin{subfigure}{0.2\linewidth}
    \includegraphics[width=1.1\linewidth,right]{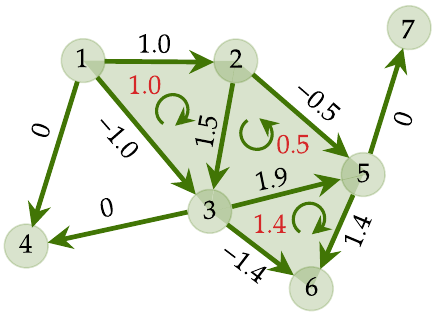}
    \caption{$\vf_C$}
    \label{fig:sc_edge_curl}
  \end{subfigure}
  \begin{subfigure}{0.2\linewidth}
    \includegraphics[width=1.1\linewidth,right]{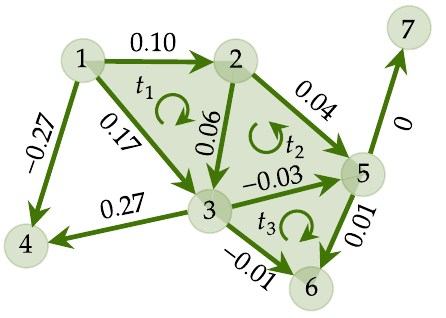}
    \caption{$\vf_H$}
    \label{fig:sc_edge_harmonic}
  \end{subfigure}
  }
  \caption{(a) A $\textSC$ where we shade (closed) triangles in green and denote reference orientations of edges/triangles by arrows. (b) An edge function $\vf_1$ with its divergence (\textcolor{Purple}{purple} values on nodes) and curl (\textcolor{orange}{orange} values in triangles). (c-e) Hodge decomposition: (c) gradient flow $\vf_G=\mB_1^\top\vf_0$, obtained as the gradient of some node function $\vf_0$ (given in \textcolor{RoyalBlue}{blue}). It is curl-free: the net-circulation along each triangle is zero; (d) curl flow $\vf_C=\mB_2\vf_2$, induced by some circulating triangle signal $\vf_2$ (given in \textcolor{red}{red}). It is div-free: the net-flow at each node is zero; and (e) harmonic flow $\vf_H$, circulating around the 1-dimensional ``hole'' (open triangle $\{1,3,4\}$), where the net-flow on nodes and net-circulation in triangles are zero. All numbers are rounded to two decimal places.}
  \label{fig:hodge_decomposition_illustration}
\end{figure*}

\subsection{Edge Functions on Simplicial Complexes}
Simplicial 2-complexes  represent discrete geometry more expressively than graphs.
They are triples $\textSC = (V,E,T)$ where $V, E$ are the sets of nodes and edges, same as for graphs, and $T$ is the set of triangular faces (shortened as triangles) such that if $(i,j),(j,k),(i,k)$ form a \emph{closed} triangle, then $t=(i,j,k)\in T$ \citep{munkresElementsAlgebraicTopology2018}.
Note that \emph{not} all three pairwise connected edges are necessarily closed and included in $T$.
An example of a $\textSC$ is shown in \cref{fig:sc_example} where the set $\{1,3,4\}$ is an {open} triangle and thus is not in $T$.

For each edge and each triangle, we assume the increasing order of their node labels as their reference \emph{orientation}\footnote{The orientation of a general simplex is an equivalence class of permutations of its labels---two orientations are equivalent (respectively, opposite) if they differ by an even (respectively, odd) permutation \cite[Sec 4]{limHodgeLaplaciansGraphs2020}.}.
An oriented edge, denoted as $e=[i,j]$, is an ordering of $\{i,j\}$.
This is not a directed edge allowing flow only from $i$ to $j$, but rather an assignment of the sign of the flow: from $i$ to $j$ it is positive and the reverse is negative.
Same goes for oriented triangles denoted as $t=[i,j,k]$ and we have $[i,j,k]=[j,k,i]=[k,i,j]=\!\!-[i,k,j]=\!\!-[k,j,i]=\!\!-[j,i,k]$.

In a $\textSC$, the functions, $f_1:E\to \R$, on its edges $E$ are required to be \emph{alternating} \citep{limHodgeLaplaciansGraphs2020}, meaning that, we have $f_1(\bar{e}) = -f_1(e)$ if $\bar{e}=[j,i]$ is oriented opposite to the reference $e=[i,j]$. 
For example, in \cref{fig:sc_edge_function}, $f_1(1,2)=-1.2$ means there is a $1.2$ unit of flow from 2 to 1.
This property keeps the flow unchanged with respect to the edge orientation.
We collect the edge functions on $E$ into $\vf_1 = (f_1(e_1),\dots,f_1(e_{N_1}))^\top\in\R^{N_1}$, as in \cref{fig:sc_edge_function}, which we also call as an \emph{edge flow}. 

We can also define alternating functions on triangles in $T$ where $f_2(\bar{t}) = -f_2(t)$ if $\bar{t}$ is an odd permutation of reference $t=[i,j,k]$ \citep{limHodgeLaplaciansGraphs2020}. 
We collect them in $\vf_2 \in \R^{N_2}$ where $N_2=|T|$.
In topology, functions $f_0, f_1, f_2$ are called \emph{0-, 1-, 2-cochains}, which are discrete analogs of differential forms on manifolds \citep{gradyDiscreteCalculus2010}. 
This motivates the use of subscripts $0, 1, 2$.
Here we can view these functions as vectors of data on nodes, edges and triangles.

\subsection{Hodge Laplacians} 
In the similar spirit as $\mL_0$ operating on node functions, we can define the discrete \emph{Hodge Laplacian} operating on the space $\R^{N_1}$ of edge functions
\begin{equation}
  \mL_1 = \mB_1^\top\mB_1 + \mB_2\mB_2^\top := \mL_{\rmd} + \mL_{\rmu}
\end{equation} 
where $\mB_2$ is the edge-to-triangle incidence matrix.
Its entries are $[\mB_2]_{et} = 1$, for $e=[i,j] \text{ or } e=[j,k]$, and $[\mB_2]_{et} = -1$ for $e=[i,k]$, if a triangle $t=[i,j,k]$ exists, and zero otherwise. 
Matrix $\mL_1$ describes the connectivity of edges where the \emph{down} part $\mL_{\rmd}$ and the \emph{up} part $\mL_{\rmu}$ encode how edges are adjacent, respectively, through nodes and via triangles. 
For example, $e_3$ and $e_6$ are down neighbors sharing node $4$ in \cref{fig:sc_example} and $e_1$ and $e_2$ are up neighbors, collocated in $t_1$. 
Matrix $\mL_1$ is positive semi-definite, admitting an eigendecomposition $\mL_1=\mU_1\mLambda_1\mU_1^\top$ where diagonal matrix $\mLambda_1 = \diag(\lambda_1,\dots,\lambda_{N_1})$ collects the eigenvalues and $\mU_1$ is the eigenvector matrix.
Likewise, one can define $\mL_2 = \mB_2^\top\mB_2$ encoding the adjacency between triangles.  
Our discussion henceforth considers the unweighted $\mL_1$ but it also holds for the weighted variants in \citet{gradyDiscreteCalculus2010,schaubRandomWalksSimplicial2020}.

\section{EDGE GAUSSIAN PROCESSES}\label{sec:edge-gp}
We now define GPs on edges of a $\textSC$, specifically, $\vf_1 \sim \gG\gP(\vzero, \mK_1)$ with zero mean and edge kernel $\mK_1$. 
Throughout this work, we refer to them as \emph{edge GPs}, and call graph GPs in \cref{subsec:graph-gp} as \emph{node GPs} because they are both multivariate Gaussian but the former is indexed by $X=E$ and the latter by $X=V$. 
We start with deriving edge GPs from SPDEs on edges as a natural extension of \cref{eq.graph-spde}. 
Then, by introducing basic notions from discrete calculus \citep{gradyDiscreteCalculus2010} and the Hodge decomposition theorem, we propose the divergence-free and curl-free GPs, combining them into Hodge-compositional GPs.

\subsection{Edge GPs from SPDEs on Edges} \label{subsec:non-hodge-gp}
The derivation of graph GPs in \cref{eq.graph-matern-gp} as solutions of graph SPDE in \cref{eq.graph-spde} motivates the following SPDEs on edges, with edge Gaussian noise $\vw_1 \sim \gN(\vzero, \mI)$,  
\begin{equation} \label{eq.spde-edge}
  \Phi(\mL_1)\vf_1 = \vw_1  
\end{equation}
where $\Phi(\mL_1) = \mU_1 \Phi(\mLambda_1)\mU_1^\top$ is a differential operator defined through $\mL_1$.
When we consider the operators 
\begin{equation} \label{eq.diff-opreators-matern-diffusion}
  \Phi(\mL_1) =  \Bigl( \frac{2\nu}{\kappa^2}\mI+\mL_1 \Bigr)^\frac{\nu}{2} \text{ and } \Phi(\mL_1) = e^{\frac{\kappa^2}{4}\mL_1},
\end{equation}  
the solutions to \cref{eq.spde-edge} give two edge GPs 
\begin{equation}\label{eq.simple_matern_diffusion}
\begin{aligned}
    \vf_{1,\text{\Matern}} &\sim \gG\gP \Bigl(\vzero, \Bigl( \frac{2\nu}{\kappa^2}\mI+\mL_1 \Bigr)^{-\nu} \Bigr), \\
    \vf_{1,\rm{diffusion}} &\sim \gG\gP \Bigl(\vzero,e^{-\frac{\kappa^2}{2}\mL_1}\Bigr),
\end{aligned}
\end{equation}
which are the \emph{edge \Matern} and \emph{diffusion} GPs, respectively.
These edge GPs impose structured prior covariance that encodes the dependence between edges.
A related \emph{Hodge Laplacian kernel} $(\mL_1^\top\mL_1)^\dagger$ can be obtained by setting $\Phi(\mL_1) = \mL_1$, i.e., $\mL_1 \vf_1 = \vw_1$. 
This kernel was used to penalize the smoothness of edge functions in \citet{schaubSignalProcessingHigherOrder2021}.
The kernels of \cref{eq.simple_matern_diffusion} are more flexible though and allow encoding non-local edge-to-edge adjacency while $\mL_1$ instead encodes the local direct (one-hop) adjacency.

\subsection{Div-free and Curl-free Edge GPs} \label{subsec:divcurl_free_gps}
The edge GPs in~\Cref{subsec:non-hodge-gp} define distributions over all edge functions.
As opposed to this, here we seek to define GPs on the classes of divergence-free and curl-free edge functions.
We start with defining the appropriate notions of discrete derivatives, expressed in terms of the incidence matrices.

\paragraph{Discrete Derivatives}
The \emph{gradient} is a linear operator from the space of node functions to that of edge functions. At edge $e=[i,j]$, it is defined as 
\begin{equation}
  (\textGrad \, f_0)(e) = (\mB_1^\top\vf_0)_e = f_0(j) - f_0(i), 
\end{equation}
which computes the difference between the values of a function on adjacent nodes, resulting in a flow on the connecting edge.
We call $\vf_G=\mB_1^\top\vf_0$ a gradient flow and $\vf_0$ a node potential, as shown in \cref{fig:sc_edge_gradient}.

The \emph{divergence}, the adjoint of gradient, is a linear operator from the space of edge functions to that of node functions. At node $i$, it is defined as 
\begin{equation}
  (\textDiv \, f_1)(i) = (\mB_1\vf_1)_i = -  \sum_{j\in N(i)} f_1(i,j) 
\end{equation}
with $N(i)$ the neighbors of $i$. Physically, it computes the net-flow of edge functions passing through node $i$, i.e., the in-flow minus the out-flow, as shown in \cref{fig:sc_edge_function}. 
A \emph{divergence-free} flow has a zero net-flow everywhere. 

Lastly, the \emph{curl} operator is a linear operator from the space of edge functions to that of triangle functions. At triangle $t=[i,j,k]$, it is defined as 
\begin{equation}
    \hspace{-7pt} (\textCurl \, f_1)(t) = (\mB_2^\top\vf_1)_t  = f_1(i,j) + f_1(j,k) - f_1(i,k)
\end{equation}
which computes the \emph{net-circulation} of edge functions along the edges of $t$, as a rotational measure of $\vf_1$, as shown in \cref{fig:sc_edge_function}. 
A \emph{curl-free} flow has zero curl over each triangle. 
As in calculus, we have the identity $\textCurl\,\textGrad = \mB_2^\top\mB_1^\top = \vzero$, i.e., gradient flow is curl-free.

Analogous to their continuous vector field counterparts, div-free and curl-free edge functions are ubiquitous, e.g., the electric currents and the exchange rates later in \cref{subsec:forex_experiments}. 
We refer to \citet{gradyDiscreteCalculus2010,limHodgeLaplaciansGraphs2020} for more examples. 
From this perspective, we can view the graph Laplacian as $\mL_0 = \textDiv\, \textGrad = \mB_1\mB_1^\top$, which is a graph-theoretic analog of the Laplace-Beltrami operator $\Delta_0$ on manifolds. Also, the SPDE on graphs in \cref{eq.graph-spde} is a discrete counterpart of the continuous one for scalar functions on manifolds. 
Moreover, the Hodge Laplacian $\mL_1$ can be viewed as $\mL_1 = \textGrad\,\textDiv + \textCurl^*\,\textCurl =  \mB_1^\top\mB_1 + \mB_2\mB_2^\top$, which is a discrete analog of the vector Laplacian (or Helmholtzian) $\Delta_1$ for vector fields.

\paragraph{Hodge Decomposition}
The following \emph{Hodge decomposition theorem}, unfolding an edge function, will allow us to improve the edge GPs in \cref{eq.simple_matern_diffusion}. 
\begin{theorem}[\citet{hodge1989theory}] \label{thm:hodge-them}
  The space $\R^{N_1}$ of edge functions is a direct sum of three subspaces
  \begin{equation} \label{eq.hodge-decomposition}
    \R^{N_1} = 
    \im(\mB_1^\top) \oplus \ker(\mL_1)\oplus \im(\mB_2),
  \end{equation}
  where $\im(\mB_1^\top)$ is the gradient space, $\ker(\mL_1)$ the harmonic space and $\im(\mB_2)$ the curl space.   
\end{theorem}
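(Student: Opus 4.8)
The plan is to prove this by finite-dimensional linear algebra alone, combining two ingredients: the fundamental theorem of linear algebra (that $\R^{N_1}$ splits orthogonally as the image of a map plus the kernel of its transpose) and the nilpotency identity $\mB_1\mB_2=\vzero$, which is just the transpose of the curl-of-gradient identity $\mB_2^\top\mB_1^\top=\vzero$ already established. First I would record the two consequences of $\mB_1\mB_2=\vzero$: it gives the inclusion $\im(\mB_2)\subseteq\ker(\mB_1)$, and it makes the gradient and curl spaces orthogonal, since for any $\va,\vb$ one has $(\mB_1^\top\va)^\top(\mB_2\vb)=\va^\top(\mB_1\mB_2)\vb=0$, so $\im(\mB_1^\top)\perp\im(\mB_2)$.

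Next I would characterize the harmonic space $\ker(\mL_1)$. Because $\mL_1=\mB_1^\top\mB_1+\mB_2\mB_2^\top$ is a sum of two positive-semidefinite terms, every $\vx$ satisfies
\[
  \vx^\top\mL_1\vx = \|\mB_1\vx\|^2+\|\mB_2^\top\vx\|^2 .
\]
Hence $\mL_1\vx=\vzero$ forces both $\mB_1\vx=\vzero$ and $\mB_2^\top\vx=\vzero$, and conversely these two conditions give $\mL_1\vx=\vzero$ at once. This yields the clean description $\ker(\mL_1)=\ker(\mB_1)\cap\ker(\mB_2^\top)$, which is what links the harmonic space to the two incidence maps.

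Then comes the assembly. Applying the fundamental theorem of linear algebra to $\mB_1$ gives the orthogonal splitting $\R^{N_1}=\im(\mB_1^\top)\oplus\ker(\mB_1)$, since $\ker(\mB_1)=\im(\mB_1^\top)^\perp$. It remains to split $\ker(\mB_1)$ itself: inside it sits the subspace $\im(\mB_2)$, and its orthogonal complement taken \emph{relative to} $\ker(\mB_1)$ is $\ker(\mB_1)\cap\im(\mB_2)^\perp=\ker(\mB_1)\cap\ker(\mB_2^\top)$, which by the previous step is exactly $\ker(\mL_1)$. Therefore $\ker(\mB_1)=\im(\mB_2)\oplus\ker(\mL_1)$, and stacking the two splittings yields
\[
  \R^{N_1}=\im(\mB_1^\top)\oplus\im(\mB_2)\oplus\ker(\mL_1),
\]
with all three summands mutually orthogonal; directness of the sum is then automatic, as orthogonal subspaces meet only in $\vzero$.

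I expect the only delicate point to be this final assembly: one must verify that the orthogonal complement of $\im(\mB_2)$ computed \emph{inside} $\ker(\mB_1)$ really coincides with the harmonic space, which is precisely where both the inclusion $\im(\mB_2)\subseteq\ker(\mB_1)$ and the identity $\im(\mB_2)^\perp=\ker(\mB_2^\top)$ are needed together. Everything else — the two orthogonality statements and the sum-of-squares characterization of $\ker(\mL_1)$ — is routine.
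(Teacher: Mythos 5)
Your proof is correct, but the paper itself never proves this statement: it is imported wholesale as a citation to \citet{hodge1989theory}, with \citet{limHodgeLaplaciansGraphs2020} as the background reference for the discrete version. What you have supplied is therefore a self-contained proof of a result the paper treats as external. Your argument is the standard linear-algebra route to the combinatorial Hodge decomposition: everything reduces to the chain-complex identity $\mB_1\mB_2=\vzero$ (the transpose of the curl-grad identity $\mB_2^\top\mB_1^\top=\vzero$ that the paper does state), the fundamental theorem of linear algebra applied twice ($\ker(\mB_1)=\im(\mB_1^\top)^\perp$ and $\ker(\mB_2^\top)=\im(\mB_2)^\perp$), and the positive-semidefinite splitting $\vx^\top\mL_1\vx=\|\mB_1\vx\|^2+\|\mB_2^\top\vx\|^2$, which identifies $\ker(\mL_1)=\ker(\mB_1)\cap\ker(\mB_2^\top)$. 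All three ingredients are sound, and your assembly---splitting $\R^{N_1}$ along $\mB_1$ first, then splitting $\ker(\mB_1)$ relative to its subspace $\im(\mB_2)$---correctly handles the one delicate point you flag, namely that the complement of $\im(\mB_2)$ must be taken \emph{inside} $\ker(\mB_1)$, which is exactly where the inclusion $\im(\mB_2)\subseteq\ker(\mB_1)$ is consumed. What your approach buys over the paper's citation is transparency about hypotheses: the decomposition uses nothing about simplicial structure beyond $\mB_1\mB_2=\vzero$ and finite dimensionality, so it holds verbatim for any chain complex of real matrices and, with the obvious modifications of the inner products, for the weighted Laplacians the paper mentions; it also makes explicit the identity $\ker(\mL_1)=\ker(\mB_1)\cap\ker(\mB_2^\top)$ that the paper uses implicitly when it asserts harmonic functions are both div- and curl-free.
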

It states that any edge function $\vf_1$ is composed of three orthogonal parts: gradient, curl, harmonic functions  
  \begin{equation} \label{eq.hodge-composition-expression}
    \vf_1 = \vf_G + \vf_H + \vf_C
  \end{equation}
where $\vf_G=\mB_1^\top\vf_0$, being curl-free, is the gradient of some node function $\vf_0$, and $\vf_C=\mB_2\vf_2$, being div-free, is the curl-adjoint of some triangle function $\vf_2$. Lastly, $\vf_H$ is harmonic (both div- and curl-free, $\mL_1\vf_H=\vzero$).
This decomposition is illustrated in \cref{fig:hodge_decomposition_illustration}.
It provides a crucial tool for understanding edge functions and has been used in many applications as we discussed above. 

Furthermore, the eigenspace $\mU_1$ of $\mL_1$ can be reorganized in terms of the three Hodge subspaces as  
\begin{equation} \label{eq.U_Hodge}
  \begin{aligned}
    \mU_1 = [\mU_H\,\,\mU_G\,\,\mU_C]
  \end{aligned}
\end{equation}
where $\mU_H$ is the eigenvector matrix associated to zero eigenvalues $\mLambda_H=\vzero$ of $\mL_1$, $\mU_G$ is associated to the nonzero eigenvalues $\mLambda_G$ of $\mL_{\rmd}$, and $\mU_{C}$ is associated to the nonzero eigenvalues $\mLambda_C$ of $\mL_{\rmu}$.
Moreover, they span the Hodge subspaces:
\begin{equation}
\begin{aligned}
\spn(\mU_H) &= \ker(\mL_1), \,\, \spn(\mU_G) = \im(\mB_1^\top), \\
\spn(\mU_C) &= \im(\mB_2),
\end{aligned}
\end{equation}
where $\spn(\bullet)$ denotes all possible linear combinations of columns of $\bullet$ \citep{yangSimplicialConvolutionalFilters2022}.

\paragraph{Div-free, Curl-free Edge GPs} Given the eigendecomposition in \cref{eq.U_Hodge}, we can obtain special classes of edge GPs by only using a certain type of eigenvectors when building edge kernels of \cref{eq.simple_matern_diffusion}. 
Specifically, we define \emph{gradient} and \emph{curl edge GPs} as follows 
\begin{equation} \label{eq.grad-curl-gps}
  \vf_G \sim \gG\gP(\vzero, \mK_G),  \quad \vf_C \sim \gG\gP(\vzero, \mK_C) 
\end{equation}
where the gradient kernel and the curl kernel are 
\begin{equation}
  \mK_G = \mU_G \Psi_G(\mLambda_G) \mU_G^\top, \,\, \mK_C = \mU_C \Psi_C(\mLambda_C) \mU_C^\top. 
\end{equation}
We also define the \emph{harmonic GPs} $\vf_H\sim\gG\gP(\vzero,\mK_H)$ with the harmonic kernel $\mK_H=\mU_H\Psi_H(\mLambda_H)\mU_H^\top$.
\begin{proposition} \label{prop:samples-of-grad-curl-gps}
    Let $\vf_G$ and $\vf_C $ be the gradient and curl Gaussian processes, respectively. 
    Then, $\textCurl\,\vf_G = \vzero$ and $\textDiv\,\vf_C=\vzero$ with probability one. 
    Moreover, a harmonic Gaussian process $\vf_H$ follows $\textCurl\,\vf_H = \vzero$ and $\textDiv\,\vf_H=\vzero$ with probability one. 
\end{proposition}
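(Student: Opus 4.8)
The plan is to reduce each claim to an algebraic fact about the incidence matrices by first showing that a sample of each GP almost surely lies in the corresponding Hodge subspace, and then invoking the nullity identities $\mB_2^\top\mB_1^\top = \vzero$ and its adjoint $\mB_1\mB_2 = \vzero$. The first identity $\textCurl\,\textGrad = \mB_2^\top\mB_1^\top = \vzero$ is already recorded in the excerpt, and the second is simply its transpose. So the only probabilistic content is the support statement, and everything downstream is deterministic linear algebra.

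First I would establish the support property: a zero-mean Gaussian vector with covariance $\Sigma$ concentrates on the column space $\im(\Sigma)$. For $\vf_G \sim \gG\gP(\vzero,\mK_G)$ with $\mK_G = \mU_G\Psi_G(\mLambda_G)\mU_G^\top$, the column space of $\mK_G$ is contained in $\spn(\mU_G)$, so $\vf_G \in \spn(\mU_G)$ almost surely. By the Hodge reorganization of the eigenbasis we have $\spn(\mU_G) = \im(\mB_1^\top)$, hence $\vf_G = \mB_1^\top\vf_0$ for some node potential $\vf_0$, again with probability one. The identical argument gives $\vf_C \in \spn(\mU_C) = \im(\mB_2)$, i.e.\ $\vf_C = \mB_2\vf_2$, and $\vf_H \in \spn(\mU_H) = \ker(\mL_1)$.

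It then remains to apply the derivative operators on the probability-one event that the sample lands in its Hodge subspace. For the gradient GP, $\textCurl\,\vf_G = \mB_2^\top\mB_1^\top\vf_0 = \vzero$. For the curl GP, $\textDiv\,\vf_C = \mB_1\mB_2\vf_2 = \vzero$. For the harmonic GP, $\mL_1\vf_H = \vzero$ gives, upon pairing with $\vf_H$, that $\|\mB_1\vf_H\|^2 + \|\mB_2^\top\vf_H\|^2 = 0$, so $\textDiv\,\vf_H = \mB_1\vf_H = \vzero$ and $\textCurl\,\vf_H = \mB_2^\top\vf_H = \vzero$; equivalently $\ker(\mL_1) = \ker(\mB_1)\cap\ker(\mB_2^\top)$. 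This yields all three assertions.

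The only point I would state carefully rather than take for granted is the support property, namely why a possibly rank-deficient Gaussian lives in $\im(\Sigma)$. I would avoid any measure-theoretic discussion by instead writing $\vf_G = \mU_G\Psi_G(\mLambda_G)^{1/2}\vw$ for a standard Gaussian $\vw$, which reproduces the covariance $\mK_G$ while exhibiting every realization as an element of $\spn(\mU_G)$ by construction; the same reparametrization handles $\vf_C$ and $\vf_H$. After that, the proof is immediate from the two incidence identities and the kernel characterization of $\ker(\mL_1)$.
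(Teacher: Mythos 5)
Your proof is correct and takes essentially the same route as the paper's: both arguments exhibit the samples via the reparametrization $\vf_\Box = \mU_\Box\Psi_\Box^{1/2}(\mLambda_\Box)\vv$ with $\vv$ standard Gaussian, so that every realization lies in $\spn(\mU_\Box)$ by construction, and then annihilate the relevant derivative by deterministic linear algebra---the paper applies $\mB_2^\top\mU_G = \vzero$ directly, while you route through $\spn(\mU_G)=\im(\mB_1^\top)$ and the identity $\mB_2^\top\mB_1^\top=\vzero$, which is the same fact stated differently. One minor point in your favor: the paper's written proof only treats the gradient and curl cases (leaving the harmonic claim implicit), whereas you handle it explicitly via the quadratic-form argument showing $\ker(\mL_1)=\ker(\mB_1)\cap\ker(\mB_2^\top)$, which cleanly completes the statement as given.
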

See proof in \cref{proof:prop-gp-samples}.
These Hodge GPs provide more targeted priors for special edge functions which are either div- or curl-free, capable of capturing these key properties.
In the case of \Matern~kernels, we set 
\begin{equation}  \label{eq.grad-curl-matern-kernel}
  \Psi_\Box(\mLambda_\Box) = \sigma_\Box^2 \Big(\frac{2\nu_\Box}{\kappa_\Box^2}\mI + \mLambda_\Box \Big)^{-\nu_\Box}, 
\end{equation}
for $\Box\in\{H,G,C\}$, where $\sigma_\Box^2$ controls the variance we assign to the function in the subspace, and $\nu_\Box, \kappa_\Box$ are the regular \Matern~parameters. 
Note that since $\mLambda_H=\vzero$, we consider a scaling function for $\mK_H$ as $\Psi_H(\vzero) = \sigma_H^2$. 
We illustrate such a \Matern~kernel function in \cref{fig:spectral_kernel_illustration} (left). 
These Hodge GPs can be derived from SPDEs on edges as well.
\begin{proposition}\label{prop:spde-grad-curl-gp}
    Given a scaled curl white noise  $\vw_C\sim\gN(\vzero,\mW_C)$ where $\mW_C = \sigma_C^2\mU_C\mU_C^\top$, consider the following SPDE on edges: 
    \begin{equation} \label{eq.grad-curl-spde}
      \Phi_C(\mL_{\rmu}) \vf_C = \vw_C, 
    \end{equation}
    with differential operators
    \begin{equation}
        \Phi_C(\mL_{\rmu}) =   \Big(\frac{2\nu_C}{\kappa_C^2} \mI + \mL_{\rmu} \Big)^{\frac{\nu_C}{2}}, \,\, \Phi_C(\mL_{\rmu})=e^{\frac{\kappa_C^2}{4}\mL_{\rmu}}.
    \end{equation}
    The respective solutions give the curl edge GPs with \Matern~kernel in \cref{eq.grad-curl-matern-kernel} and diffusion kernel
    \begin{equation} \label{eq.diffusion_kernel_spectra}
      \Psi_C(\mLambda_C) = \sigma_C^2 e^{-\frac{\kappa_C^2}{2} \mLambda_C} .
    \end{equation}
    Likewise, we can derive the gradient \Matern~and diffusion GPs from the SPDEs as \cref{eq.grad-curl-spde} but with operators $\Phi_G(\mL_{\rmd})$ and a scaled gradient white noise. 
\end{proposition}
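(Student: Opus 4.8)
The plan is to solve \cref{eq.grad-curl-spde} explicitly by inverting the differential operator and then reading off the covariance of the resulting Gaussian vector. First I would note that for both choices of $\Phi_C$ the scalar symbol does not vanish at zero: in the \Matern~case $\Phi_C(0)=\bigl(2\nu_C/\kappa_C^2\bigr)^{\nu_C/2}$ and in the diffusion case $\Phi_C(0)=1$. Since $\Phi_C(\mL_{\rmu})$ is a function of the symmetric positive semi-definite $\mL_{\rmu}$, it is symmetric and, with $\Phi_C(0)\neq 0$, invertible on all of $\R^{N_1}$. Hence the SPDE has the unique solution $\vf_C = \Phi_C(\mL_{\rmu})^{-1}\vw_C$, which is a linear image of a Gaussian vector and therefore a zero-mean Gaussian process.

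Next I would compute its covariance. Using the symmetry of $\Phi_C(\mL_{\rmu})^{-1}$,
\[ \mK_C = \Phi_C(\mL_{\rmu})^{-1}\,\mW_C\,\Phi_C(\mL_{\rmu})^{-1}. \]
The key step is to exploit that the scaled white noise is supported on the curl subspace, $\mW_C = \sigma_C^2\mU_C\mU_C^\top$. Because the columns of $\mU_C$ are precisely the eigenvectors of $\mL_{\rmu}$ associated with its nonzero eigenvalues $\mLambda_C$, we have $\Phi_C(\mL_{\rmu})^{-1}\mU_C = \mU_C\,\Phi_C(\mLambda_C)^{-1}$, where $\Phi_C$ acts elementwise on $\mLambda_C$. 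Substituting this and using orthonormality $\mU_C^\top\mU_C=\mI$ collapses the product to
\[ \mK_C = \sigma_C^2\,\mU_C\,\Phi_C(\mLambda_C)^{-2}\,\mU_C^\top = \mU_C\,\Psi_C(\mLambda_C)\,\mU_C^\top, \quad \Psi_C(\mLambda_C)=\sigma_C^2\,\Phi_C(\mLambda_C)^{-2}. \]
Plugging in the two symbols then gives $\Psi_C(\mLambda_C)=\sigma_C^2\bigl(\tfrac{2\nu_C}{\kappa_C^2}\mI+\mLambda_C\bigr)^{-\nu_C}$, matching \cref{eq.grad-curl-matern-kernel}, and $\Psi_C(\mLambda_C)=\sigma_C^2\,e^{-\kappa_C^2\mLambda_C/2}$, matching \cref{eq.diffusion_kernel_spectra}.

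The step that needs the most care — and the reason the driving noise must be the projected $\mW_C$ rather than an isotropic one — is precisely this support argument. I would make explicit that $\vf_C=\Phi_C(\mL_{\rmu})^{-1}\vw_C$ lies in $\spn(\mU_C)=\im(\mB_2)$ almost surely, since $\vw_C$ has no component outside the curl subspace and $\Phi_C(\mL_{\rmu})^{-1}$, being a function of $\mL_{\rmu}$, preserves each of its eigenspaces. Driving the SPDE with isotropic noise $\gN(\vzero,\mI)$ would instead inject gradient and harmonic components (scaled by $\Phi_C(0)^{-1}$), so the solution would not be a genuine curl GP; the projector $\mU_C\mU_C^\top$ in $\mW_C$ is exactly what confines the process to the curl space. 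Finally, I would remark that the gradient case is obtained verbatim by replacing $(\mL_{\rmu},\mU_C,\mLambda_C,\mW_C)$ with $(\mL_{\rmd},\mU_G,\mLambda_G,\sigma_G^2\mU_G\mU_G^\top)$ and using $\spn(\mU_G)=\im(\mB_1^\top)$, so no separate computation is required.
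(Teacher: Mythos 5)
Your proposal is correct and follows essentially the same route as the paper: solve the SPDE by inverting $\Phi_C(\mL_{\rmu})$, compute the covariance $\Phi_C(\mL_{\rmu})^{-1}\mW_C\Phi_C(\mL_{\rmu})^{-1}$, and use the fact that $\mU_C$ spans the eigenspace of $\mL_{\rmu}$ with eigenvalues $\mLambda_C$ to collapse it to $\sigma_C^2\,\mU_C\,\Phi_C(\mLambda_C)^{-2}\,\mU_C^\top$; the paper does the same via explicit block decompositions in the full basis $[\mU_H\,\,\mU_G\,\,\mU_C]$, whereas you use the equivalent intertwining relation $\Phi_C(\mL_{\rmu})^{-1}\mU_C=\mU_C\Phi_C(\mLambda_C)^{-1}$, which is merely a presentational difference. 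Your added observations---that $\Phi_C(0)\neq 0$ makes the operator genuinely invertible (the paper writes a pseudo-inverse) and that isotropic noise would leak gradient and harmonic components into the solution---are correct and sharpen the exposition without changing the argument.
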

See proof in  \cref{app:derivation-grad-curl-gps}. 
We can draw the intuition of SPDE in \cref{eq.grad-curl-spde} from the continuous analogy. 
In the case of $\mL_{\rmu}\vf_C=\vw_C$, the equation $\textCurl^* \textCurl \, f_1(\vx) = w_1(\vx)$ is a stochastic vector Laplace's equation of a div-free (solenoidal) vector field, where $w_1(\vx)$ the curl adjoint of some vector potential. 
In physics, this describes the static magnetic field from a magnetic vector potential, as well as an incompressible fluid.  
\begin{figure}[t!]
  \centering
  \includegraphics[width=1\linewidth]{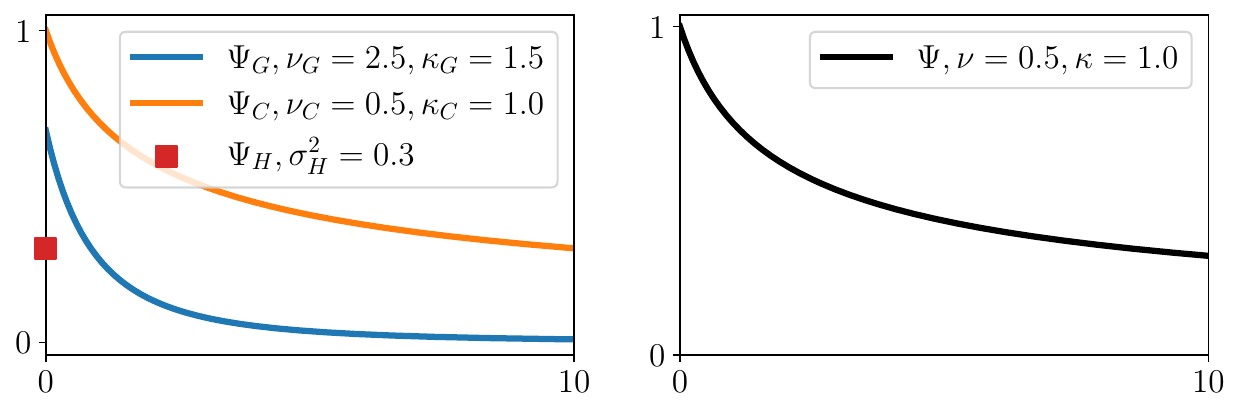}
  \caption{\figleft~\Matern~kernel functions $\Psi_{\Box}(\lambda)$ for $\Box=\{H,G,C\}$ in \cref{eq.grad-curl-matern-kernel} of gradient, curl and harmonic GPs in the eigen-spectrum $\lambda$ ranging in the min and man eigenvalues of $\mL_1$. \figright~\Matern~kernel function $\Psi(\lambda)$ of non-HC GP in \cref{eq.simple_matern_diffusion}. }
  \label{fig:spectral_kernel_illustration}
\end{figure}

\subsection{Hodge-compositional Edge GPs} \label{subsec:hc-gp}
Many edge functions of interest are indeed div- or curl -free, but not all.
In this section we combine the gradient, curl and harmonic GPs to define the Hodge-compositional (HC) edge GPs.

\begin{definition} \label{def:hodge-compositional-gp}
  A Hodge-compositional edge Gaussian process $\vf_1\sim \gG\gP(\vzero, \mK_1)$ is a sum of gradient, curl and harmonic GPs, i.e., 
  $\vf_1 = \vf_G + \vf_C + \vf_H$
  where    
  \begin{equation} \label{eq.hodge-gp-per-comp}
    \vf_{\Box } \sim \gG\gP(\vzero, \mK_{\Box}) \text{ with } \mK_{\Box} = \mU_{\Box}\Psi_{\Box}(\mLambda_{\Box})\mU^\top_{\Box}
  \end{equation}
  for $\Box = H,G,C$ where their kernels do not share hyperparameters.
\end{definition}
Given this definition, we can obtain the following properties of HC edge GPs.
\begin{lemma} \label{lemma:property-hc-gp}
    Let $\vf_1\sim\gG\gP(\vzero,\mK_1)$ be an edge GP in \cref{def:hodge-compositional-gp}. Its realizations then give all possible edge functions\footnote{Note that HC edge GPs do not represent all possible edge GPs. They are a particular GP family satisfying the independence hypothesis on the three Hodge GPs. This, however, does not contradict with that their realizations can represent all edge functions.}. It further holds that $\mK_1 = \mK_H + \mK_G + \mK_C$, and the three Hodge GPs are mutually independent, i.e., $\text{Cov}(\bm{f}_G, \bm{f}_C)=  \text{Cov}(\bm{f}_G,\bm{f}_H) = \text{Cov}(\bm{f}_C,\bm{f}_H) =\bm{0}$. 
\end{lemma}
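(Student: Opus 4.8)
The plan is to exploit the orthonormality of the reorganized eigenbasis in \cref{eq.U_Hodge} together with the Hodge decomposition of \cref{thm:hodge-them}; all three assertions then reduce to standard facts about jointly Gaussian vectors. Throughout I treat the three component GPs as built via the SPDE construction of \cref{prop:spde-grad-curl-gp}, so that each $\vf_\Box$ is a deterministic linear image of a white noise $\vw_\Box$ supported on the corresponding Hodge subspace $\spn(\mU_\Box)$. Concretely, one may take a single full white noise $\vw_1\sim\gN(\vzero,\mI)$ and set $\vw_\Box=\mU_\Box\mU_\Box^\top\vw_1$, the orthogonal projection onto $\spn(\mU_\Box)$ (equivalently, draw the three noises independently); fixing this joint law is what lets the independence claim be stated at all.

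First I would establish the zero cross-covariances, which is the technical heart of the argument. The stacked vector $(\vf_G,\vf_C,\vf_H)$ is jointly Gaussian, being a linear transform of the jointly Gaussian $(\vw_G,\vw_C,\vw_H)$. For the noises, $\Cov(\vw_G,\vw_C)=\mU_G\mU_G^\top\,\mI\,(\mU_C\mU_C^\top)^\top=\mU_G(\mU_G^\top\mU_C)\mU_C^\top=\vzero$, because the columns of $\mU_1=[\mU_H\,\,\mU_G\,\,\mU_C]$ are orthonormal, so the off-diagonal block $\mU_G^\top\mU_C$ vanishes; the same holds for every distinct pair. Since each $\vf_\Box$ is a deterministic linear function of $\vw_\Box$ alone, the cross-covariances $\Cov(\vf_G,\vf_C)$, $\Cov(\vf_G,\vf_H)$, $\Cov(\vf_C,\vf_H)$ vanish as well. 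For jointly Gaussian vectors, vanishing cross-covariance is equivalent to independence, which yields the mutual independence claim.

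Next, the kernel decomposition is immediate: expanding $\mK_1=\Cov(\vf_G+\vf_C+\vf_H)$ by bilinearity, the three diagonal terms give $\mK_G+\mK_C+\mK_H$ while all cross terms vanish by the previous step, so $\mK_1=\mK_H+\mK_G+\mK_C$. Finally, for the support claim I would use that the support of a centered Gaussian equals the range of its covariance. Since $\Psi_\Box$ is strictly positive on the relevant eigenvalues (including $\Psi_H(\vzero)=\sigma_H^2>0$), the range of $\mK_\Box=\mU_\Box\Psi_\Box(\mLambda_\Box)\mU_\Box^\top$ is exactly $\spn(\mU_\Box)$. By independence, the support of the sum $\vf_1$ is the sum of subspaces $\spn(\mU_G)+\spn(\mU_C)+\spn(\mU_H)$, which by \cref{eq.U_Hodge} and \cref{thm:hodge-them} equals $\im(\mB_1^\top)\oplus\ker(\mL_1)\oplus\im(\mB_2)=\R^{N_1}$; hence realizations of $\vf_1$ exhaust all edge functions.

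The main subtlety to get right is the independence step: it is not enough that the three kernels merely have ranges in orthogonal subspaces, since uncorrelated subspace supports do not by themselves force independence. One must pin down the joint law and invoke joint Gaussianity so that uncorrelatedness upgrades to independence, and the computation $\mU_G^\top\mU_C=\vzero$ arising from orthonormality of $\mU_1$ is precisely what drives both the cross-covariance vanishing and the clean kernel sum. Everything else is routine bookkeeping on top of \cref{thm:hodge-them}.
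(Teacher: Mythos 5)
Your proof is correct and takes essentially the same route as the paper's: both rest on the orthogonality of the reorganized eigenbasis $\mU_1=[\mU_H\,\,\mU_G\,\,\mU_C]$ plus joint Gaussianity, the paper's own proof being a terse two-line sketch of exactly the argument you spell out (kernel sum and independence ``from the construction and orthogonality,'' the support claim via a Karhunen--Lo\`eve analogy versus your range-of-covariance argument). Your one addition --- observing that orthogonal supports alone do not force independence, so the joint law must be pinned down (you do it via projected white noises in the SPDE construction, while the paper declares independence to be part of the hypothesis in its footnote) --- is a legitimate sharpening of the same argument, not a different approach.
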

See proof in \cref{app:proof-properties-hc-gp}. 
Naturally, we can construct a \Matern~HC GP as the sum of \Matern~GPs in the three subspaces with their kernels given by \cref{eq.grad-curl-matern-kernel}, and likewise for the diffusion HC GP by \cref{eq.diffusion_kernel_spectra}.
Compared to the GPs in \cref{eq.simple_matern_diffusion}, referred to as non-HC GPs henceforth, HC GPs are more flexible and expressive, having more degrees of freedom.
We discuss their practical advantages below. 

\vspace{-1mm}
\paragraph{Inductive GP Prior}
The HC GP encodes the prior covariance $\Cov(f_1(e), f_1(e^\prime))$ between edge functions over two edges $e, e^\prime$ as follows: 
(i) the covariance is the sum of three covariances $\Cov_\Box=\Cov(f_\Box(e), f_\Box(e^\prime))$ for $\Box=H,G,C$; 
(ii) each $\Cov_\Box$ encodes the covariance between the corresponding Hodge parts of $f_1$ without affecting the others; and 
(iii) no covariance is imposed across different Hodge components, e.g., $\Cov(f_G(e),f_C(e^\prime))=0$.

In the spatial/edge domain, this is related to separating the down and up adjacencies encoded in the SPDE operators $\Phi(\cdot)$.
From an eigen-spectrum perspective, the eigenvalues $\Psi_\Box$ of HC GP's kernels associated to the three Hodge subspaces have individual parameters.
This enables capturing the different Hodge components of edge functions, as well as their relevance during hyperparameter optimization, further allowing us to directly recover the Hodge components in predictions, which we detail in \cref{app-subsec:posterior-hodge-components}.
Non-HC GPs instead require solving the Hodge decomposition in \cref{eq.hodge-composition-expression} (least squares problems) \citep{limHodgeLaplaciansGraphs2020}.
Another implication is that, unlike for non-HC GPs, we do not require specific knowledge about the div or curl of the underlying function. 

\vspace{-1mm}
\paragraph{Comparison to non-HC GPs}
When we view non-HC GPs in terms of the Hodge decomposition, we notice that they put priors on the three Hodge GPs in a way that shares hyperparameters.
This enforces learning the same hyperparameters for different Hodge components, resulting in a single function covering the entire edge spectrum, as shown in \cref{fig:spectral_kernel_illustration} (right), as opposed to the three individual functions of the HC one.

This raises issues when separate learning, say, different lengthscales, is required for the gradient and curl components. 
Non-HC GPs are strictly incapable of this practical need when an eigenvalue is associated to both gradient and curl spaces.  
We also delve into this in terms of \emph{edge Fourier features} in \cref{app:fourier-feature-perspective} where we compare the priors induced on the edge Fourier coefficients by HC and non-HC GPs.

\paragraph{Connection to Diffusion on Edges}
The HC diffusion kernel, given by $\mK_1 = \exp({-(\frac{\kappa_G^2}{2} \mL_{\rmd} + \frac{\kappa_C^2}{2} \mL_{\rmu})})$, when $\sigma^2_\Box$s are one, is the Green's function for the edge diffusion of a function $\vphi:[0,\infty)\times E\to \R$
\begin{equation}
      \odv{\vphi(t)}{t} =  -(\mu \mL_{\rmd} + \gamma \mL_{\rmu}) \vphi(t), \text{ where }\mu,\gamma>0
\end{equation}
with $\vphi|_{t=\tau} = e^{-(\mu\tau\mL_\rmd + \gamma\tau\mL_\rmu)}\vphi(0)$.
This equation describes the diffusion process on the edge space of $\textSC$ that was used for network analysis \citep{muhammadControlUsingHigher2006,devilleConsensusSimplicialComplexes2021}, often arising as the limit of random walks on edges \citep{schaubRandomWalksSimplicial2020}. 
The covariance $\mK_1$ within this context encodes the proportion of edge flow traveling from edge $e$ to $e^\prime$ via down and up edge adjacencies.
Its vector field counterpart was used for shape analysis \citep{zobelGeneralizedHeatKernel,sharpVectorHeatMethod2020}.
Compared to the graph (node) diffusion converging ($t\to\infty$) to the state that is constant on all nodes as long as the graph is connected, the harmonic state of the edge diffusion can be non-constant, but lies in the span of $\mU_H$. We refer to \cref{app:diffusion_on_edges} for visualizations of the two harmonic states.

\paragraph{Complexity}
The kernels of HC edge GPs can be constructed in a scalable way by considering the $l$ largest eigenvalues with off-the-shelf eigen-solvers, e.g., Lanczos algorithm. We refer to \cref{app-subsec:complexity} for more details on the complexity of implementing HC GPs, as well as sampling from them.

\subsection{Node-Edge-Triangle GP Interactions} \label{sec:alternative_constuction}
The gradient and curl components of edge functions are (co)derivatives of some node and triangle functions, specifically, $\vf_G=\mB_1^\top\vf_0$ and $\vf_C=\mB_2\vf_2$ as in \cref{eq.hodge-composition-expression}. 
Since the derivative of a GP is also a GP, we can then construct a gradient GP from node GPs. 
\begin{corollary} \label{cor:gradient-of-node-gp}
  Suppose a node function $\vf_0$ is a GP $\vf_0\sim\gG\gP(\vzero,\mK_0)$ with $\mK_0=\Psi_0(\mL_0)=\mU_0 \Psi_0(\mLambda_0)\mU_0^\top$. 
  Then, its gradient is an edge GP $\vf_G\sim\gG\gP(\vzero,\mK_G)$ where $\mK_G = \mB_1^\top\mK_0\mB_1 = \mU_G\Psi_G(\mLambda_G)\mU_G^\top$ with 
  \begin{equation}
    \Psi_G(\mLambda_G) = \mLambda_G \Psi_0(\mLambda_G).
  \end{equation}
\end{corollary}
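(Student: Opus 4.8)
The plan is to treat the statement as two separate claims and dispatch them in turn: first that $\vf_G=\mB_1^\top\vf_0$ is a zero-mean edge GP with covariance $\mB_1^\top\mK_0\mB_1$, and second that this covariance equals $\mU_G\,\mLambda_G\Psi_0(\mLambda_G)\,\mU_G^\top$. For the first claim I would invoke the standard fact that a fixed linear image of a multivariate Gaussian is again Gaussian: for any finite collection of edges, the corresponding entries of $\vf_G$ are a linear map applied to a jointly Gaussian vector $\vf_0$, hence jointly Gaussian with mean $\mB_1^\top\vzero=\vzero$ and covariance
\begin{equation}
  \E[\vf_G\vf_G^\top] = \mB_1^\top\,\E[\vf_0\vf_0^\top]\,\mB_1 = \mB_1^\top\mK_0\mB_1 .
\end{equation}
This already gives $\vf_G\sim\gG\gP(\vzero,\mK_G)$ with $\mK_G=\mB_1^\top\mK_0\mB_1$, and since every realization lies in $\im(\mB_1^\top)=\spn(\mU_G)$, it is indeed a gradient GP in the sense of \cref{eq.grad-curl-gps}.

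The substantive step is the spectral identity. I would introduce a (full) singular value decomposition $\mB_1=\mU_0\mSigma\mV^\top$, choosing the left singular vectors so that $\mU_0$ coincides with the eigenvector matrix of $\mL_0=\mB_1\mB_1^\top=\mU_0\mSigma\mSigma^\top\mU_0^\top$; this is consistent because $\mK_0=\mU_0\Psi_0(\mLambda_0)\mU_0^\top$ is invariant to the choice of basis within each eigenspace. The same decomposition gives $\mL_{\rmd}=\mB_1^\top\mB_1=\mV\mSigma^\top\mSigma\mV^\top$, so $\mL_0$ and $\mL_{\rmd}$ share their nonzero eigenvalues, and the right singular vectors attached to nonzero singular values span $\im(\mB_1^\top)=\spn(\mU_G)$; I would identify these columns with $\mU_G$ and the associated eigenvalues with $\mLambda_G$. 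Writing $\mU_0=[\,\mU_0^{+}\ \ \mU_0^{0}\,]$ for the nonzero- and zero-eigenvalue blocks, the singular-vector relations yield the key bridge $\mB_1^\top\mU_0^{+}=\mU_G\,\mLambda_G^{1/2}$.

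I would then combine the two pieces. The crucial observation is that $\mB_1^\top$ annihilates $\ker(\mL_0)$: for $\vu\in\ker(\mL_0)$ one has $\|\mB_1^\top\vu\|^2=\vu^\top\mL_0\vu=0$, so $\mB_1^\top\mU_0^{0}=\vzero$ and the value of $\Psi_0$ on the zero eigenvalues never enters. Substituting the decomposition of $\mK_0$ and discarding the annihilated block gives
\begin{equation}
  \mK_G = \mB_1^\top\mU_0^{+}\,\Psi_0(\mLambda_G)\,(\mU_0^{+})^\top\mB_1
        = \mU_G\,\mLambda_G^{1/2}\,\Psi_0(\mLambda_G)\,\mLambda_G^{1/2}\,\mU_G^\top
        = \mU_G\,\mLambda_G\Psi_0(\mLambda_G)\,\mU_G^\top,
\end{equation}
where the last equality uses that diagonal matrices commute, so $\Psi_G(\mLambda_G)=\mLambda_G\Psi_0(\mLambda_G)$, as claimed. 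The main obstacle is purely the linear-algebra bookkeeping in this middle paragraph—pinning down the correspondence between the nonzero spectrum of $\mL_0$ and the gradient block of $\mL_{\rmd}$, and tracking which singular directions are killed by $\mB_1^\top$—after which the computation is routine.
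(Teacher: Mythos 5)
Your proof is correct. The first step---Gaussianity under fixed linear maps, giving $\mK_G=\mB_1^\top\mK_0\mB_1$---is identical to the paper's. The difference lies in how the spectral identity is established. The paper handles it in one line by citing the matrix-function intertwining identity $\mB_1^\top\Psi_0(\mB_1\mB_1^\top)=\Psi_0(\mB_1^\top\mB_1)\,\mB_1^\top$ \citep[Corollary 1.34]{higham2008functions}, which yields the coordinate-free formula $\mK_G=\mL_{\rmd}\Psi_0(\mL_{\rmd})$; reading off the eigendecomposition of $\mL_{\rmd}$, whose nonzero eigenvalues sit exactly on the gradient block $(\mU_G,\mLambda_G)$, then gives $\Psi_G(\mLambda_G)=\mLambda_G\Psi_0(\mLambda_G)$. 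You instead prove that identity from scratch via a full SVD of $\mB_1$, using the bridge $\mB_1^\top\mU_0^{+}=\mU_G\mLambda_G^{1/2}$ together with the observation that $\mB_1^\top$ annihilates $\ker(\mL_0)$, so the value $\Psi_0(0)$ never enters. What your route buys is self-containedness and transparency: no external lemma on functions of matrices is needed, and the computation makes explicit exactly which directions survive. What it costs is the basis-identification bookkeeping you flag yourself; note that the cleanest justification for identifying the right singular vectors with $\mU_G$ is not the invariance of $\mK_0$ but the fact that the end product $\mU_G\mLambda_G\Psi_0(\mLambda_G)\mU_G^\top=\mL_{\rmd}\Psi_0(\mL_{\rmd})$ is itself a function of $\mL_{\rmd}$, hence independent of how one rotates bases inside eigenspaces. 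The paper's route is shorter and transfers verbatim to the analogous curl-of-triangle-GP corollary; yours exposes the mechanism behind the cited identity.
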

The proof follows from (i) derivatives preserving Gaussianity, and (ii) $\mL_0$ and $\mL_{\rmd}$ having the same nonzero eigenvalues.
We can also obtain a curl edge GP from a GP on triangles likewise. 
In turn, for an edge GP, its div is a node GP and its curl is a GP on triangles.
We refer to \cref{app:interaction-node-edge} for the proof and  more details.

\begin{figure*}[t!]
  \makebox[\textwidth][c]{
  \hspace{-10pt}
  \begin{subfigure}{0.2\linewidth}
    \includegraphics[width=1.07\linewidth]{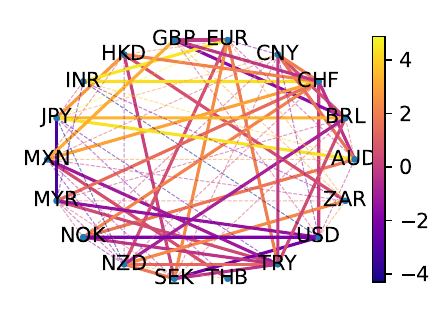} 
    \caption{Ground truth}
  \end{subfigure}
  \begin{subfigure}{0.2\linewidth}
    \includegraphics[width=1.07\linewidth]{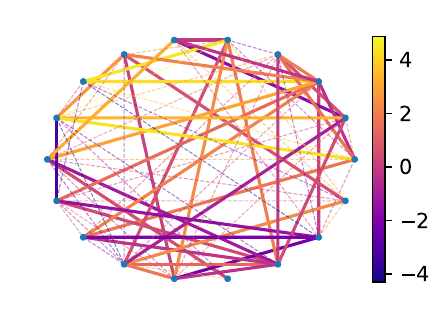} 
    \caption{Mean, HC \Matern}
  \end{subfigure}
  \begin{subfigure}{0.2\linewidth}
    \includegraphics[width=1.07\linewidth]{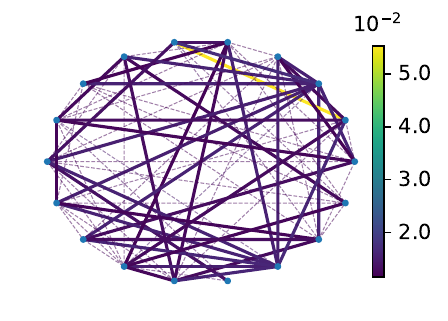} 
    \caption{Standard deviation}
  \end{subfigure}
  \begin{subfigure}{0.2\linewidth}
    \includegraphics[width=1.07\linewidth]{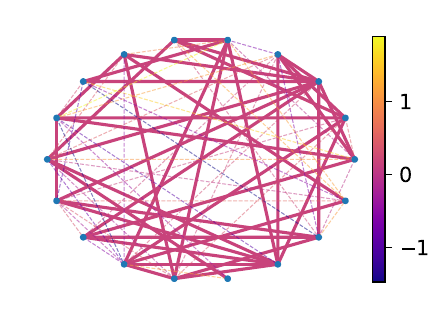} 
    \caption{Mean, Non-HC}
  \end{subfigure}
  \begin{subfigure}{0.2\linewidth}
    \includegraphics[width=1.03\linewidth]{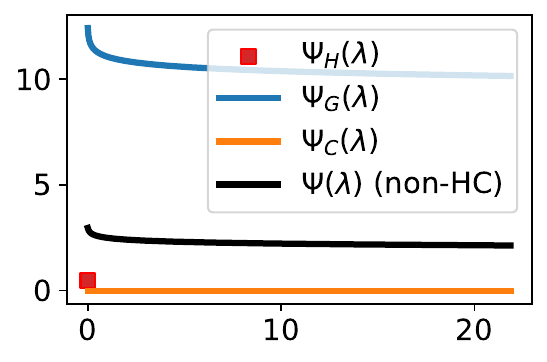} 
    \caption{Learned kernels}
    \label{fig-forex:spectral_learned_kernel}
  \end{subfigure}
  }
  \caption{(a-d) Interpolating a smaller forex market (for better visibility) with train ratio $50\%$ where dashed (solid) edges are used for training (testing).
  (e) Learned \Matern~kernels in the spectrum of the Hodge Laplacian, $\Psi(\lambda)$ for non-HC GP and $\Psi_\Box(\lambda)$ with $\Box=\{H,G,C\}$ for HC GPs.}
  \label{fig:visualization_forex}
\end{figure*}

Exploiting this interaction between GPs on nodes, edges and triangles can lead to new useful GPs, especially when functions on nodes, edges and triangles are intrinsically related by physical laws. 
For example, in water networks, water flowrates in pipes are often related to the gradient of hydraulic heads on nodes, as we will show in \cref{subsec:wsn_experiments}. 
This implies that given an appropriate node GP, say, node \Matern~GP in \cref{eq.graph-matern-gp}, a good edge GP prior can be imposed as its gradient, as in \cref{cor:gradient-of-node-gp}.
Furthermore, by leveraging this interaction, we can construct HC edge GPs as follows. 
\begin{proposition} \label{prop:alternative-construction}
  Let $\vf_1$ be an edge function defined in \cref{eq.hodge-composition-expression} with harmonic component $\vf_H$, node function $\vf_0$ and triangle function $\vf_2$. 
  If we model $\vf_0$ as a GP on nodes $\vf_0 \sim \gG\gP(\vzero, \mK_0)$, model $\vf_2$ as a GP on triangles $\vf_2 \sim \gG\gP(\vzero, \mK_2)$, and $\vf_H$ as a harmonic GP $\vf_H\sim \gG\gP(\vzero,\mK_H)$, then we have GP $\vf_1 \sim \gG\gP(\vzero, \mK_1)$ with  
  \begin{equation}\label{eq.alternative-kernel}
    \mK_1 = \mK_H + \mB_1^\top\mK_0\mB_1 + \mB_2 \mK_2 \mB_2^\top.
  \end{equation}
\end{proposition}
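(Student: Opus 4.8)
The plan is to reduce the claim to two facts already established in the excerpt: a linear image of a Gaussian is Gaussian (the mechanism behind \cref{cor:gradient-of-node-gp}), and covariances add for independent GPs (the additivity in \cref{lemma:property-hc-gp}). Since $X=E$ is finite, every process here is a finite-dimensional Gaussian vector on $\R^{N_1}$, $\R^{N_0}$ or $\R^{N_2}$, so it suffices to track mean vectors and covariance matrices.

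First I would treat the gradient and curl components. Applying \cref{cor:gradient-of-node-gp} to the node GP $\vf_0\sim\gG\gP(\vzero,\mK_0)$ under the linear map $\mB_1^\top$ gives $\vf_G=\mB_1^\top\vf_0\sim\gG\gP(\vzero,\mB_1^\top\mK_0\mB_1)$, since the mean is $\mB_1^\top\vzero=\vzero$ and the covariance is $\E[\mB_1^\top\vf_0\vf_0^\top\mB_1]=\mB_1^\top\mK_0\mB_1$. By the triangle analog of the same corollary (stated after \cref{cor:gradient-of-node-gp} and proved in \cref{app:interaction-node-edge}), $\vf_C=\mB_2\vf_2\sim\gG\gP(\vzero,\mB_2\mK_2\mB_2^\top)$, and the harmonic part $\vf_H\sim\gG\gP(\vzero,\mK_H)$ is given. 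Because $\im(\mB_1^\top)$, $\im(\mB_2)$ and $\ker(\mL_1)$ are exactly the three orthogonal Hodge subspaces of \cref{thm:hodge-them}, and since $\im(\mB_1^\top\mK_0\mB_1)\subseteq\im(\mB_1^\top)$ and $\im(\mB_2\mK_2\mB_2^\top)\subseteq\im(\mB_2)$, these covariances are supported in the gradient, curl and harmonic subspaces respectively; hence $\vf_G,\vf_C,\vf_H$ qualify as the gradient, curl and harmonic GPs of \cref{def:hodge-compositional-gp} with $\mK_G=\mB_1^\top\mK_0\mB_1$ and $\mK_C=\mB_2\mK_2\mB_2^\top$.

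Next I would assemble the sum. Modeling $\vf_0$, $\vf_2$ and $\vf_H$ \emph{independently} makes the induced processes $\vf_G$, $\vf_C$, $\vf_H$ mutually independent, so $\vf_1=\vf_G+\vf_C+\vf_H$ is again Gaussian with zero mean, and by vanishing cross-covariances its covariance is the sum of the three. Invoking the additivity $\mK_1=\mK_H+\mK_G+\mK_C$ of \cref{lemma:property-hc-gp} and substituting the two expressions above yields the stated $\mK_1=\mK_H+\mB_1^\top\mK_0\mB_1+\mB_2\mK_2\mB_2^\top$ of \cref{eq.alternative-kernel}.

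The argument is largely bookkeeping, and the only point needing care is confirming that the construction genuinely produces an HC GP in the sense of \cref{def:hodge-compositional-gp}: this amounts to the subspace containments noted above together with the transfer of independence from the sources $\vf_0,\vf_2,\vf_H$ to the components $\vf_G,\vf_C,\vf_H$. If in addition one wants the spectral form $\mK_G=\mU_G\Psi_G(\mLambda_G)\mU_G^\top$, then as in \cref{cor:gradient-of-node-gp} one must take $\mK_0$ to be a spectral function of $\mL_0$ and use that $\mL_0$ and $\mL_{\rmd}$ share the same nonzero eigenvalues; but for the covariance identity itself no such assumption is required.
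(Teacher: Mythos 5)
Your proposal is correct and follows essentially the same route as the paper's own proof: decompose $\vf_1$ via \cref{eq.hodge-composition-expression}, push the node and triangle GPs forward through $\mB_1^\top$ and $\mB_2$ using \cref{cor:gradient-of-node-gp} and its triangle analog \cref{app-cor:curl-adjoint-triangle-gp}, and sum the resulting independent zero-mean Gaussians to get \cref{eq.alternative-kernel}. Your added remarks --- that the cross-covariances vanish only under independence of the sources $\vf_0,\vf_2,\vf_H$, and that the spectral form of $\mK_G$ requires $\mK_0$ to be a spectral function of $\mL_0$ while the covariance identity does not --- are correct refinements of details the paper leaves implicit, not a different argument.
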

See proof in \cref{app:alternative_gp}. 
This alternative HC GP incorporates the Hodge theorem prior in a way that directly relates the node potential and the triangle function. 
It can be applicable when GP priors of node or triangle functions are more discernible.
Similar ideas for general cellular complexes are studied in the concurrent paper by \citet{alain2023}.

\paragraph{Continuous Counterparts} Edge functions can be viewed as discrete analogs of vector fields.
\citet{berlinghieri2023gaussian} studied the models similar to our HC edge GPs in \cref{eq.alternative-kernel} for Euclidean vector fields and the concurrent work by \citet{robertnicoud2024} studies similar models for vector fields on manifolds.


\section{EXPERIMENTS}\label{sec:experiments}
We apply HC GPs for edge-based inference tasks in three applications: foreign currency exchange (forex), ocean current and water supply networks (WSNs). 
We showcase the structured prior on edges in these tasks by comparing them to baselines: 
(i) Euclidean GPs with RBF and \Matern~kernels, and
(ii) Node GPs on the line-graph---built by exchanging the nodes with edges in the original graph \citep{godsilAlgebraicGraphTheory2001}.
To highlight the prior of the Hodge decomposition, we also compare with non-HC GPs.
For each of them, we consider \Matern~and diffusion kernels.
We perform GP regression with Gaussian likelihood for model fitting using the \texttt{GPyTorch} framework \citep{gardner2018gpytorch}. 
We use the root mean squared error (RMSE) to evaluate the predictive mean and the negative log predictive density (NLPD) for prediction uncertainty. 
We refer to \cref{app:experiment_details} for full experimental details. 

\begin{figure*}[t!]
  \begin{subfigure}{0.33\linewidth}
    \includegraphics[width=1\linewidth]{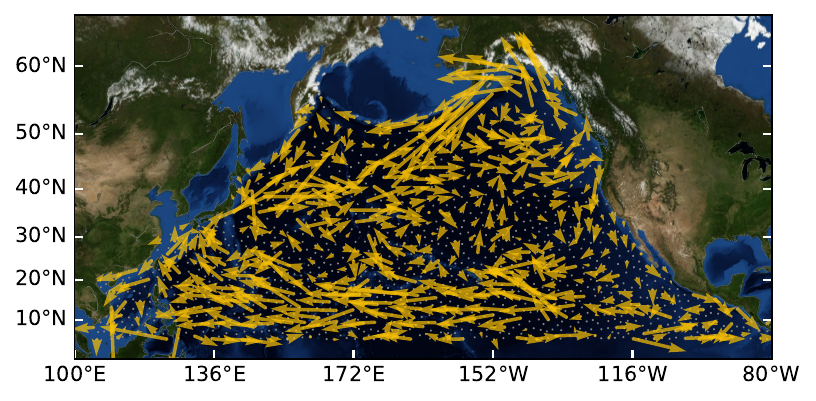}
    \caption{Original ocean current}
  \end{subfigure}
  \begin{subfigure}{0.33\linewidth}
    \includegraphics[width=1\linewidth]{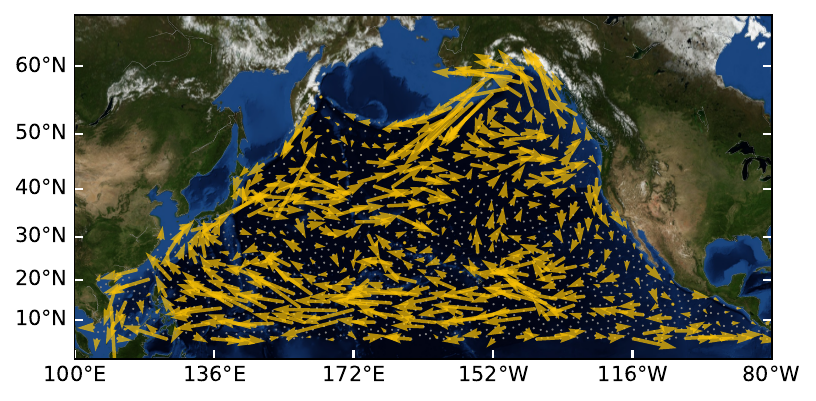}
    \caption{Posterior mean}
    \label{fig-ocean:mean}
  \end{subfigure}
\begin{subfigure}{0.33\linewidth}
    \includegraphics[width=1.08\linewidth]{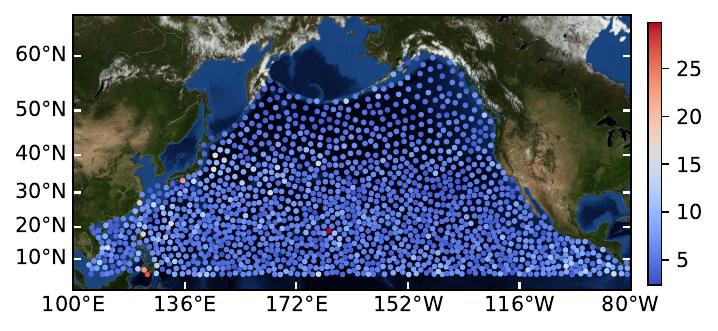}
    \caption{Standard deviation}
    \label{fig-ocean:std_approx}
  \end{subfigure}
  \begin{subfigure}{0.33\linewidth}
    \includegraphics[width=1\linewidth]{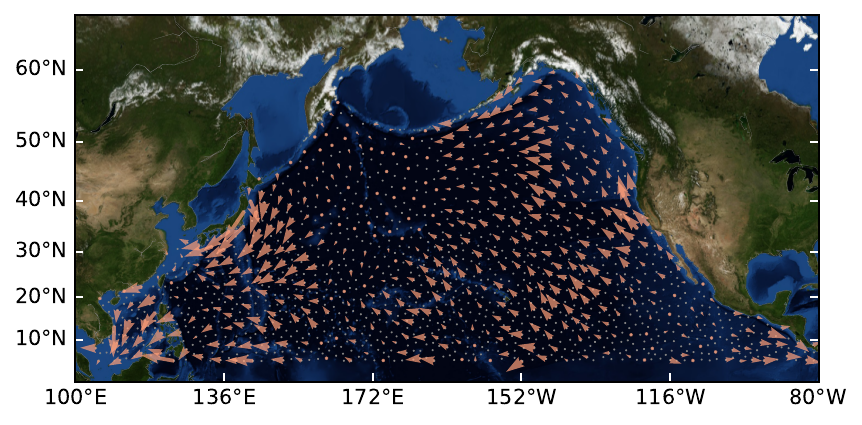}
    \caption{Curl-free component}
    \label{fig-ocean:gradient_sample}
  \end{subfigure}
  \begin{subfigure}{0.33\linewidth}
    \includegraphics[width=1\linewidth]{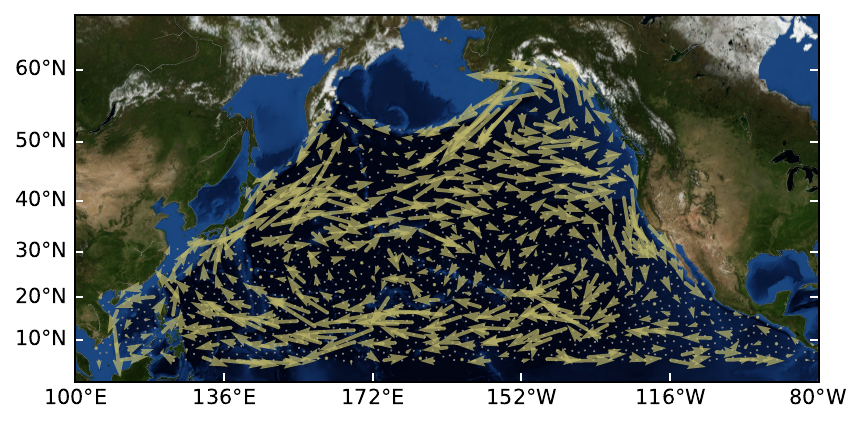}
    \caption{Div-free component}
    \label{fig-ocean:curl_sample}
  \end{subfigure}
  \begin{subfigure}{0.33\linewidth}
    \vspace{-1mm}
    \includegraphics[width=0.95\linewidth]{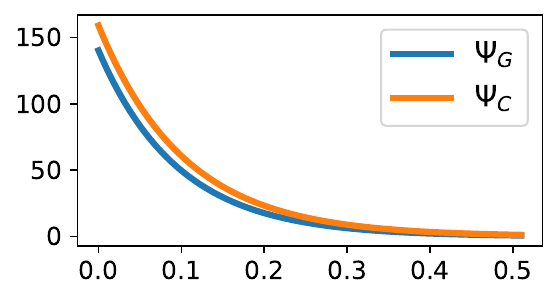}
    \vspace{-1mm}
    \caption{Learned HC kernel}
    \label{fig-ocean:learned_kernel}
  \end{subfigure}
  \caption{(a-b) Ground truth and interpolated ocean current in the vector field domain. (c) Standard deviation approximated by sampling 50 edge flows from the predictive posterior distribution and converted to the vector field domain. (d-e) The curl-free and div-free components directly obtained from the learned kernels. (f) Learned diffusion kernels $\Psi_G(\lambda)$ and $\Psi_C(\lambda)$ of the HC GP in the spectrum of the Hodge Laplacian.}
  \label{fig:ocean_flow_illustration}
\end{figure*}

\subsection{Foreign Currency Exchange} \label{subsec:forex_experiments}
A forex market can be modeled as a network where nodes represent currencies and edges the exchangeable pairs \citep{jiangStatisticalRankingCombinatorial2011}. 
Forex rates in a fair market ideally satisfy the \emph{arbitrage-free} condition: for any currencies $i,j,k$, we have $r^{i/j}r^{j/k} = r^{i/k}$ with $r^{i/j}$ the rate between $i$ and $j$. 
That is, the exchange path $i\to j\to k$ provides no gain or loss over a direct path $i\to k$. 
If we model forex rates as edge flows $f_1(i,j) = \log (r^{i/j})$, this condition can be translated into that $\vf_1$ is a gradient flow, being curl-free, i.e., $f_1(i,j) + f_1(j,k) - f_1(i,k)=0$. 
Here we consider real-world forex data on 2018/10/05 with 25 most traded currencies forming 210 exchangeable pairs and 710 triangles, formed by any three pairwise exchangeable currencies \citep{forex_data,jiaGraphbasedSemiSupervisedActive2019}. 
We randomly sample 20\% of edges for training and test on the rest. 
\begin{table}[!t]
  \centering
  \caption{Forex rates inference results.}
  \resizebox{1\linewidth}{!}{
    \begin{tabular}{lllll}
      \toprule
      \multirow{2}{*}{Method}  & \multicolumn{2}{c}{RMSE } & \multicolumn{2}{c}{NLPD}  \\ 
      \cmidrule(lr){2-3}  \cmidrule(lr){4-5} & Diffusion & \Matern~ &  Diffusion & \Matern~\\
      \midrule      
      Euclidean & $2.17\pm0.13$ & $2.19\pm 0.12$ & $2.12\pm0.07$ & $2.20\pm0.18$\\
      Line-Graph & $2.43\pm 0.07$ & $2.46\pm 0.07$ & $2.28\pm0.04$ & $2.32\pm 0.03$ \\ 
      Non-HC & $2.48\pm0.07$ & $2.47\pm 0.08$ & $2.36\pm 0.07$ & $2.34\pm 0.04$ \\
      HC & $0.08\pm 0.12$ & $0.06\pm 0.12$ & $-3.52\pm0.02$ & $-3.52\pm 0.02$ \\ 
      \bottomrule
      \end{tabular}
  }
  \label{tab:results_1}
  \vspace{0mm}
\end{table}

From \cref{tab:results_1}, we see that HC GPs achieve significantly lower RMSEs with high certainty (small NLPDs), as visualized in \cref{fig:visualization_forex}.
This shows their ability to automatically capture the curl-free nature of the forex rates.
As shown in \cref{fig-forex:spectral_learned_kernel}, the HC \Matern~GP learns that harmonic and curl components should vanish. 
In contrast, the other three give poor predictions, due to:
(i) Euclidean GPs being oblivious of the structure of edge functions; 
(ii) line-graph GPs imposing inappropriate structure through node priors; 
and 
(iii) non-HC GPs being unable to induce the curl-free prior without removing the gradient. 
This results from sharing parameters in their kernels for different Hodge components.
As shown in \cref{fig-forex:spectral_learned_kernel}, the non-HC \Matern~GP learns a nonzero kernel in the whole spectrum, unable to remove the non-arbitrage-free part.

\subsection{Ocean Current Analysis}
We consider the edge-based ocean current learning following the setup in \citet{chenHelmholtzianEigenmapTopological2021}. 
The ocean current velocity fields were converted using the linear integration approximation to edge flows within a $\textSC$ whose nodes are 1500 buoys sampled from North Pacific ocean drifter records in 2010-2019 \citep{lumpkin2019global}.
We apply both non-HC and HC GPs to predict the converted edge flows.
Given the large number of edges ($\sim$20k), we consider a truncated approximation of kernels with eigenpairs associated with the 500 largest eigenvalues \citep{knyazev2001toward}. 
We randomly sample 20\% of edges for training and test on the rest. 

\begin{table}[t!]
  \centering
  \caption{Ocean current inference results.}
  \label{tab:ocean_flow}
  \resizebox{1\linewidth}{!}{
    \begin{tabular}{lllll}
      \toprule
      \multirow{2}{*}{Method}  & \multicolumn{2}{c}{RMSE } & \multicolumn{2}{c}{NLPD}  \\ 
      \cmidrule(lr){2-3} \cmidrule(lr){4-5}  & Diffusion & \Matern~ &  Diffusion & \Matern~\\
      \midrule      
      Euclidean & $1.00\pm0.01$ & $1.00\pm0.00$ & $1.42\pm0.01$ & $1.42\pm0.10$\\
      Line-Graph & $0.99\pm0.00$ & $0.99\pm0.00$ & $1.41\pm0.00$ & $1.41\pm0.00$ \\ 
      Non-HC & $0.35\pm0.00$ & $0.35\pm 0.00$ & $0.33\pm 0.00$ & $0.36\pm 0.03$ \\
      HC & $0.34\pm 0.00$ & $0.35\pm 0.00$ & $0.33\pm0.01$ & $0.37\pm 0.04$ \\ 
      \bottomrule
      \end{tabular}
  }
  \vspace{-1mm}
\end{table}

From \cref{tab:ocean_flow}, we see that HC and non-HC GPs exhibit similar performance. 
This arises from the comparable behavior of the gradient and curl components, as depicted in \cref{fig-ocean:learned_kernel}, where the learned gradient and curl diffusion kernels display close patterns. 
In contrast, Euclidean and line-graph GPs give poor predictions emphasizing the importance of structured edge priors. 

We further convert the predicted edge flows into the vector field domain, as shown in \cref{fig-ocean:mean}, based on \citet{chenHelmholtzianEigenmapTopological2021}.
We see that the predictions capture the pattern of the original velocity field.
We approximate the predicted velocity field uncertainty by computing the average $\ell_2$ distance per location from 50 posterior samples to the mean in the vector field domain. 
As shown in \cref{fig-ocean:std_approx}, the standard deviation estimated this way is small in most locations except for a few exceptions (small islands at the bottom left) where the original field is more discontinuous. 
Moreover, since HC GPs enable the direct recovery of gradient and curl components, we show their corresponding vector fields in \cref{fig-ocean:gradient_sample,fig-ocean:curl_sample}, giving better insights into how ocean currents behave, of particular interest in oceanography.
For example, we can observe the well-known North Pacific gyres including the North Equatorial, Kuroshio and Alaska currents in \cref{fig-ocean:curl_sample}.


\begin{figure*}[t!]
  \makebox[\textwidth][c]{
  \begin{subfigure}{0.33\linewidth}
    \includegraphics[width=1\linewidth]{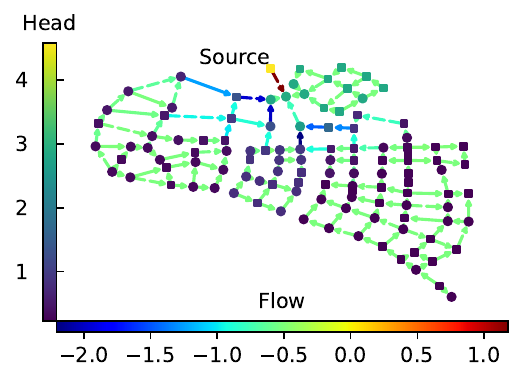} 
    \caption{Ground truth}
    \label{fig-wsn:groundtruth}
  \end{subfigure}
  \begin{subfigure}{0.33\linewidth}
    \includegraphics[width=1\linewidth]{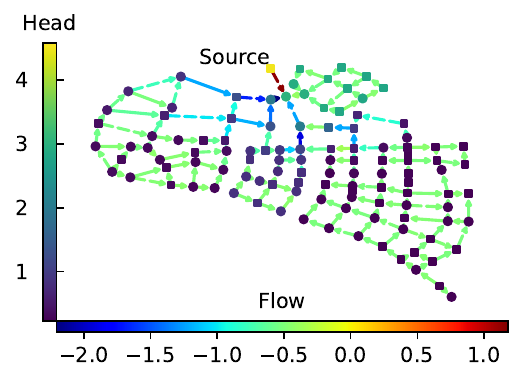} 
    \caption{Mean, HC \Matern}
    \label{fig-wsn:hc-mean}
  \end{subfigure}
  \begin{subfigure}{0.33\linewidth}
    \includegraphics[width=1\linewidth]{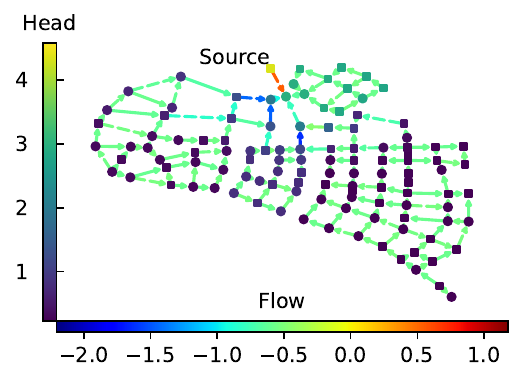} 
    \caption{Mean, non-HC \Matern}
    \label{fig-wsn:non-hc-mean}
  \end{subfigure}
  }
  \makebox[\textwidth][c]{
  \begin{subfigure}{0.33\linewidth}
    \includegraphics[width=1.05\linewidth]{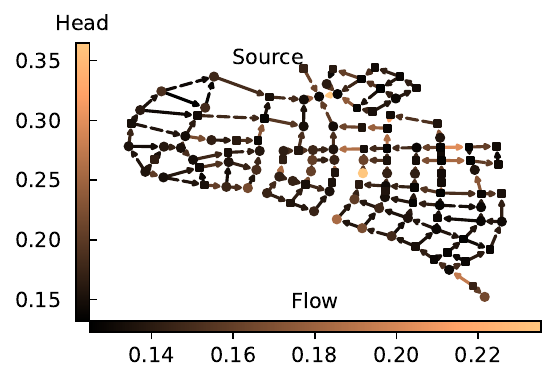} 
    \caption{Std, HC \Matern}
    \label{fig-wsn:hc-std}
  \end{subfigure}
  \begin{subfigure}{0.33\linewidth}
    \includegraphics[width=1.05\linewidth]{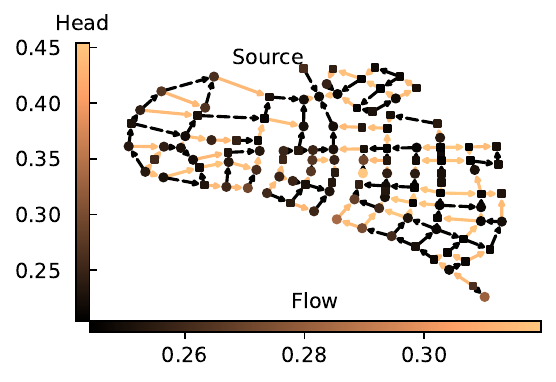} 
    \caption{Std, non-HC \Matern}
    \label{fig-wsn:non-hc-std}
  \end{subfigure}
  \begin{subfigure}{0.33\linewidth}
    \includegraphics[width=0.96\linewidth]{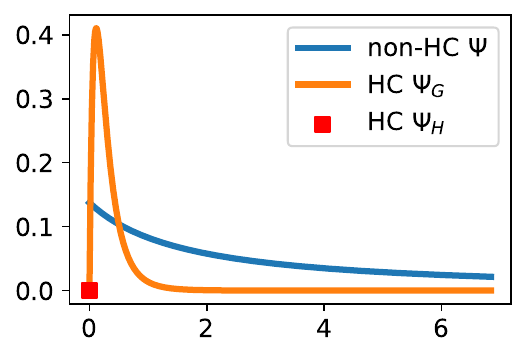} 
    \caption{Learned \Matern~edge kernels} 
    \label{fig-water_network:spectral_kernel}
  \end{subfigure}
  }
  \caption{(a-e) Posterior mean and standard deviation (std) based on the \Matern~node GPs, and the HC and non-HC \Matern~edge GPs. Squared (Circled) nodes represent the node samples for training (testing). Dashed (solid) edges denote the edge samples for training (testing). (f) The learned edge GP kernels in the spectrum, $\Psi(\lambda)$ for non-HC GP and $\Psi_{H}(0), \Psi(\lambda)$ for HC GPs.}
\end{figure*}

\subsection{Water Supply Networks} 
\label{subsec:wsn_experiments}
Network-based methods have been used in WSNs where tanks or reservoirs are represented by nodes, and pipes by edges \citep{zhou2022bridging}. 
By modeling the hydraulic heads as node functions $\vf_0$ and the water flowrates as edge functions $\vf_1$, the commonly used empirical equation connecting the two reads as $\mB_1^\top\vf_0 = \bar{\vf}_1:= \diag(\vr)\vf_1^{1.852}$ where $r_e$ is the resistance of pipe $e$ and the exponentiation is applied element-wise \citep{dini2014new}. 

\begin{table}[t!]
  \centering
  \caption{WSN inference results.}
  \label{tab:water_network}
  \resizebox{1.02\linewidth}{!}{
  \begin{tabular}{lllll}
  \toprule
  \multirow{2}{*}{Method}  & \multicolumn{2}{c}{Node Heads} & \multicolumn{2}{c}{Edge Flowrates} \\
  \cmidrule(lr){2-3} \cmidrule(lr){4-5}   & RMSE  & NLPD & RMSE & NLPD \\
  \midrule
  Diffusion, non-HC & $0.16\pm0.05$ & $0.72\pm2.06$ & $0.32\pm 0.05$  & $0.97\pm1.80$ \\
  \Matern, non-HC  & $0.16\pm 0.04$  & $0.71\pm2.39$ & $0.26\pm 0.05$  & $0.10\pm0.13$ \\ 
  \midrule
  Diffusion, HC  & $0.15\pm0.04$ & $-0.47\pm0.14$ & $0.22\pm0.03$  & $-0.20\pm0.13$\\
  \Matern, HC  & $0.15\pm 0.04$ & $-0.25\pm0.48$ & $0.23\pm 0.03$ & $-0.45\pm0.49$ \\ 
  \bottomrule
  \end{tabular}
  }
  \vspace{-1mm}
\end{table}
We consider the Zhi Jiang WSN with 114 tanks (including one source) and 164 pipes (without triangles, \citet{dandy06}) and simulate a scenario based on \citet{klise2017water}. 
We perform joint state estimation of heads $\vf_0$ and the adjusted flowrates $\bar{\vf}_1$, by modeling them as GPs on nodes and edges, respectively. 
To compare HC and non-HC edge GPs, for a node GP with kernel $\mK_0$, we consider the HC GP as its gradient, as discussed in \cref{cor:gradient-of-node-gp}. 
For the non-HC one, we consider a kernel $\mK_1$ of the same type as $\mK_0$.
We randomly sample 50\% of nodes and edges for training and test on the rest. 

From \cref{tab:water_network}, we see that while the mean predictions of heads remain similar whether we use HC or non-HC edge GPs, the former perform better for edge flows, particularly in the pipes around the source, as shown in \cref{fig-wsn:non-hc-mean,fig-wsn:hc-mean}. 
Moreover, HC GPs have better prediction uncertainty with smaller average NLPDs for both heads and flowrates, as illustrated in \cref{fig-wsn:hc-std,fig-wsn:non-hc-std}. 
This is because HC GPs that we use share parameters with node GPs, helping to calibrate the uncertainty of head predictions. 
They also capture the physical prior of the pipe equation that assumes flowrates are a gradient flow.
As shown in \cref{fig-water_network:spectral_kernel}, the HC \Matern~GP learns a kernel with a trivial harmonic prior and a nonzero gradient prior in small eigenvalues, reflecting the gradient nature of the pipe flowrates. 
Note that due to the randomness of training samples, the WSN, having small edge connectivity, may become disconnected, causing the significant variance in NLPDs.

\section{CONCLUSION}

We introduced Hodge-compositional (HC) Gaussian processes (GPs) for modeling functions on the edges of simplicial 2-complexes. 
These HC GPs are constructed by combining three individual GPs, each designed to capture the gradient, curl and harmonic components of the Hodge decomposition of edge functions. 
This allows them to learn each component separately, making them more expressive and interpretable when compared to alternatives.
Moreover, they can be extended to jointly model signals on nodes, edges and faces of the domain.
We demonstrated their practical potential in learning real-world flow data.


\subsubsection*{Acknowledgements}
MY is supported by the TU Delft AI Labs Programme and
VB by an ETH Zürich Postdoctoral Fellowship. 

\bibliography{MyLibrary.bib}

\newpage
\section*{Checklist}

\begin{enumerate}
  \item For all models and algorithms presented, check if you include:
  \begin{enumerate}
    \item A clear description of the mathematical setting, assumptions, algorithm, and/or model. [Yes] We include the problem setting for general Gaussian process in Background, which holds the same for our problem of edge Gaussian process.  
    \item An analysis of the properties and complexity (time, space, sample size) of any algorithm. [Yes] We provide the complexity about the edge kernel in Appendix.
    \item (Optional) Anonymized source code, with specification of all dependencies, including external libraries. 
    [Yes] 
  \end{enumerate}

  \item For any theoretical claim, check if you include:
  \begin{enumerate}
    \item Statements of the full set of assumptions of all theoretical results. [Yes]
    \item Complete proofs of all theoretical results. [Yes] We provide all proofs and derivations in Appendix.
    \item Clear explanations of any assumptions. [Yes]     
  \end{enumerate}

  \item For all figures and tables that present empirical results, check if you include:
  \begin{enumerate}
    \item The code, data, and instructions needed to reproduce the main experimental results (either in the supplemental material or as a URL). [Yes]
    \item All the training details (e.g., data splits, hyperparameters, how they were chosen). [Yes]
    \item A clear definition of the specific measure or statistics and error bars (e.g., with respect to the random seed after running experiments multiple times). [Yes] All measures are defined and experiments are run for ten times. 
    \item A description of the computing infrastructure used. (e.g., type of GPUs, internal cluster, or cloud provider). [Yes]
  \end{enumerate}
 
  \item If you are using existing assets (e.g., code, data, models) or curating/releasing new assets, check if you include:
  \begin{enumerate}
    \item Citations of the creator If your work uses existing assets. [Yes] We cite the works on all the dataset we use. 
    \item The license information of the assets, if applicable. [Not Applicable]
    \item New assets either in the supplemental material or as a URL, if applicable. [Not Applicable]
    \item Information about consent from data providers/curators. [Not Applicable]
    \item Discussion of sensible content if applicable, e.g., personally identifiable information or offensive content. [Not Applicable]
  \end{enumerate}
 
  \item If you used crowdsourcing or conducted research with human subjects, check if you include:
  \begin{enumerate}
    \item The full text of instructions given to participants and screenshots. [Not Applicable]
    \item Descriptions of potential participant risks, with links to Institutional Review Board (IRB) approvals if applicable. [Not Applicable]
    \item The estimated hourly wage paid to participants and the total amount spent on participant compensation. [Not Applicable]
  \end{enumerate}
 
  \end{enumerate}

\newpage
\onecolumn
\appendix\newpage\markboth{Appendix}{Appendix}
\renewcommand{\thesection}{\Alph{section}}
\numberwithin{equation}{section}
\numberwithin{theorem}{section}
\numberwithin{figure}{section}
\numberwithin{table}{section}
\hsize\textwidth
\linewidth\hsize 
\toptitlebar {\centering
{\Large\bfseries {Supplementary Materials for \\ Hodge-Compositional Edge Gaussian Processes} \par}}
\bottomtitlebar 

\section{BACKGROUND} \label{sc-basics}

\subsection{Algebraic Representation of Simplicial 2-Complexes}
For a $\textSC$ with $N_0$ nodes, $N_1$ edges and $N_2$ triangles in \cref{sec:background}, the entries of $\mB_1\in\R^{N_0\times N_1}$ and $\mB_2\in\R^{N_1\times N_2}$ are given by  
\begin{equation}
  [\mB_1]_{ie} = 
  \begin{cases}
    -1, &\text{for } e=[i,\cdot]\\
    1, &\text{for } e=[\cdot,i] \\
    0, &\text{otherwise. }
  \end{cases}
  \quad 
  [\mB_2]_{et} = 
  \begin{cases}
    1, &\text{for } e=[i,j], t=[i,j,k]\\
    -1, &\text{for } e=[i,k], t=[i,j,k]\\
    1, &\text{for } e=[j,k], t=[i,j,k]\\
    0, &\text{otherwise. }
  \end{cases}
\end{equation}
In the following, we show the two incidence matrices of the SC in \cref{fig:sc_example}. 
\begin{equation}
  \mB_1 = 
  \kbordermatrix{
      & e_1 & e_2 & e_3 & e_4 & e_5 & e_6 & e_7 & e_8 & e_9 & e_{10} \\
    1 & -1 & -1 & -1 & 0 & 0 & 0 & 0 & 0 & 0 & 0 \\
    2 & 1 & 0 & 0 & -1 & -1 & 0 & 0 & 0 & 0 & 0 \\
    3 & 0 & 1 & 0 & 1 & 0 & -1 & -1 & -1 & 0 & 0 \\
    4 & 0 & 0 & 1 & 0 & 0 & 1 & 0 & 0 & 0 & 0 \\
    5 & 0 & 0 & 0 & 0 & 1 & 0 & 1 & 0 & -1 & -1 \\
    6 & 0 & 0 & 0 & 0 & 0 & 0 & 0 & 1 & 1 & 0 \\
    7 & 0 & 0 & 0 & 0 & 0 & 0 & 0 & 0 & 0 & 1 \\
  }, \quad 
  \mB_2 = 
  \kbordermatrix{
      & t_1 & t_2 & t_3 \\
    e_1 & 1 & 0 & 0 \\
    e_2 & -1 & 0 & 0 \\
    e_3 & 0 & 0 & 0 \\
    e_4 & 1 & 1 & 0 \\
    e_5 & 0 & -1 & 0 \\
    e_6 & 0 & 0 & 0 \\
    e_7 & 0 & 1 & 1 \\
    e_8 & 0 & 0 & -1 \\
    e_9 & 0 & 0 & 1 \\
    e_{10} & 0 & 0 & 0 \\
  }
\end{equation}

\section{EDGE GAUSSIAN PROCESSES}
Here we provide the additional details on \cref{sec:edge-gp} and the missing proofs.

\subsection{Derivation of Edge GPs from SPDEs on Edges}
Here we derive the edge \Matern~and diffusion GPs in \cref{eq.simple_matern_diffusion} from the two SPDEs in \cref{eq.spde-edge}. 
\begin{proposition} \label{app:prop-simple-spde-general}
  Given the SPDE with a general differential operator $\Phi(\mL_1)=\mU_1\Phi(\mLambda_1)\mU_1^\top$ and the stochastic Gaussian noise process $\vw_1\sim\gN(\vzero,\mI)$
\begin{equation}
  \Phi(\mL_1)\vf_1 = \vw_1 ,
\end{equation}
its solution is an edge GP 
\begin{equation}
  \vf_1\sim\gG\gP(\vzero,(\Phi^\top(\mL_1)\Phi(\mL_1))^\dagger)
\end{equation}
\end{proposition}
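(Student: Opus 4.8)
The plan is to solve the SPDE explicitly and then read off the law of $\vf_1$ as a linear image of the Gaussian noise. The essential observation is that, because $\mL_1$ is symmetric with eigendecomposition $\mL_1 = \mU_1\mLambda_1\mU_1^\top$ and $\Phi$ acts entrywise on the eigenvalues, the operator $\Phi(\mL_1) = \mU_1\Phi(\mLambda_1)\mU_1^\top$ is itself symmetric. Hence everything can be diagonalized simultaneously in the orthonormal basis $\mU_1$, reducing the matrix identities to scalar statements about the diagonal entries $\Phi(\lambda_i)$.

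First I would define the solution. Since $\Phi(\mL_1)$ may be singular (it has a zero eigenvalue whenever some $\Phi(\lambda_i)=0$), I interpret the SPDE through the Moore--Penrose pseudoinverse and set $\vf_1 = \Phi(\mL_1)^\dagger \vw_1$; for the \Matern~and diffusion choices of \cref{eq.diff-opreators-matern-diffusion} the operator is in fact positive definite, so the pseudoinverse coincides with the ordinary inverse and the solution is unique. As a fixed linear transform of the Gaussian vector $\vw_1\sim\gN(\vzero,\mI)$, the vector $\vf_1$ is Gaussian with mean $\Phi(\mL_1)^\dagger\,\vzero = \vzero$ and covariance
\[
\Cov(\vf_1) = \Phi(\mL_1)^\dagger\,\mI\,\bigl(\Phi(\mL_1)^\dagger\bigr)^\top = \Phi(\mL_1)^\dagger\bigl(\Phi(\mL_1)^\dagger\bigr)^\top.
\]

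The remaining step is the algebraic identity $\Phi(\mL_1)^\dagger\bigl(\Phi(\mL_1)^\dagger\bigr)^\top = \bigl(\Phi^\top(\mL_1)\Phi(\mL_1)\bigr)^\dagger$. I would verify this in the eigenbasis: writing $D=\Phi(\mLambda_1)$, which is real and diagonal, one has $\Phi(\mL_1)^\dagger = \mU_1 D^\dagger \mU_1^\top$ and transposition leaves $D^\dagger$ unchanged, so the left-hand side equals $\mU_1 (D^\dagger)^2\mU_1^\top$. On the other side, symmetry gives $\Phi^\top(\mL_1)\Phi(\mL_1)=\mU_1 D^2\mU_1^\top$, whose pseudoinverse is $\mU_1 (D^2)^\dagger\mU_1^\top$. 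The claim then follows from the elementary diagonal identity $(D^\dagger)^2=(D^2)^\dagger$, valid because for each diagonal entry $d$ both sides equal $d^{-2}$ when $d\neq 0$ and $0$ otherwise.

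The only genuinely delicate point is the treatment of the kernel of $\Phi(\mL_1)$: I would make explicit that the pseudoinverse convention is what selects the minimum-norm solution supported on $\im(\Phi(\mL_1))$, and note that this is exactly the subspace on which the resulting prior places its mass, so no probability leaks into $\ker(\Phi(\mL_1))$. Everything else is routine once the simultaneous diagonalization in $\mU_1$ is in place.
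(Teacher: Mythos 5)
Your proof follows essentially the same route as the paper's: interpret the solution as $\vf_1 = \Phi^\dagger(\mL_1)\vw_1$ and compute the covariance of this linear transform of the Gaussian noise. The only difference is that you explicitly verify the identity $\Phi^\dagger(\mL_1)\bigl(\Phi^\dagger(\mL_1)\bigr)^\top = \bigl(\Phi^\top(\mL_1)\Phi(\mL_1)\bigr)^\dagger$ by simultaneous diagonalization in $\mU_1$, and address the kernel of $\Phi(\mL_1)$ via the minimum-norm convention---steps the paper asserts or leaves implicit---so your argument is correct and, if anything, more complete.
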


\begin{proof}
  By writing out its solution
\begin{equation}
  \vf_1 = \Phi^\dagger(\mL_1) \vw_1,
\end{equation}
which is a random process, we can find its covariance as 
\begin{equation}
  \Cov[\vf_1] = \Phi^\dagger(\mL_1) \Cov[\vw_1] (\Phi^\dagger(\mL_1))^\top =   (\Phi^\top(\mL_1)\Phi(\mL_1))^\dagger 
\end{equation}
\end{proof}

\begin{corollary}
  \Matern~and diffusion edge kernels in \cref{eq.simple_matern_diffusion} given as follows 
  \begin{equation}
    \vf_1 \sim \gG\gP \Bigl(\vzero, \Bigl( \frac{2\nu}{\kappa^2}\mI+\mL_1 \Bigr)^{-\nu} \Bigr), \quad 
    \vf_1 \sim \gG\gP \Bigl(\vzero,e^{-\frac{\kappa^2}{2}\mL_1}\Bigr)
  \end{equation}
  are the solutions of the following two SPDEs, respectively.
  \begin{equation}
    \Bigl( \frac{2\nu}{\kappa^2}\mI+\mL_1 \Bigr)^\frac{\nu}{2} \vf_1 = \vw_1, 
    \quad 
    e^{\frac{\kappa^2}{4}\mL_1} \vf_1 = \vw_1.
  \end{equation}
\end{corollary}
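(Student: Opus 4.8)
The plan is to invoke \cref{app:prop-simple-spde-general} directly and then simplify the resulting covariance $(\Phi^\top(\mL_1)\Phi(\mL_1))^\dagger$ separately for each of the two choices of $\Phi$. The one structural fact I would lean on throughout is that every operator in sight is a function of the symmetric positive semi-definite operator $\mL_1$, defined spectrally through $\mL_1 = \mU_1\mLambda_1\mU_1^\top$ by $\Phi(\mL_1) = \mU_1\Phi(\mLambda_1)\mU_1^\top$. Such operators are therefore symmetric, so $\Phi^\top(\mL_1)=\Phi(\mL_1)$; they mutually commute; and they behave multiplicatively on the spectral side, so that products collapse into the scalar functions applied to the eigenvalues. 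This is what reduces the abstract covariance of \cref{app:prop-simple-spde-general} to the closed forms claimed in \cref{eq.simple_matern_diffusion}.

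First I would treat the \Matern~case. With $\Phi(\mL_1) = (\tfrac{2\nu}{\kappa^2}\mI + \mL_1)^{\nu/2}$, symmetry gives $\Phi^\top(\mL_1)\Phi(\mL_1) = (\tfrac{2\nu}{\kappa^2}\mI + \mL_1)^{\nu}$. Since the eigenvalues of $\tfrac{2\nu}{\kappa^2}\mI + \mL_1$ are $\tfrac{2\nu}{\kappa^2}+\lambda_i \ge \tfrac{2\nu}{\kappa^2} > 0$, this operator is strictly positive definite and hence invertible, so its Moore--Penrose pseudoinverse coincides with the ordinary inverse and the covariance collapses to $(\tfrac{2\nu}{\kappa^2}\mI + \mL_1)^{-\nu}$, exactly the \Matern~edge kernel.

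Next I would carry out the diffusion case in the same way. With $\Phi(\mL_1) = e^{\frac{\kappa^2}{4}\mL_1}$, symmetry together with commutativity gives $\Phi^\top(\mL_1)\Phi(\mL_1) = e^{\frac{\kappa^2}{2}\mL_1}$, whose eigenvalues $e^{\frac{\kappa^2}{2}\lambda_i}$ are all strictly positive; hence the pseudoinverse is again a genuine inverse and the covariance equals $e^{-\frac{\kappa^2}{2}\mL_1}$, the diffusion edge kernel. Both conclusions then match the two SPDEs in the statement, closing the argument.

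I do not anticipate any serious obstacle, as the corollary is essentially a substitution into \cref{app:prop-simple-spde-general} followed by spectral bookkeeping. The only point requiring a moment of care is reconciling the pseudoinverse $\dagger$ of the proposition with the plain negative powers in the statement; this is precisely where I would invoke the observation that both $\tfrac{2\nu}{\kappa^2}\mI + \mL_1$ and $e^{\frac{\kappa^2}{4}\mL_1}$ are full rank, so that $\dagger$ degenerates to the usual inverse. Everything else follows from the functional calculus on $\mL_1$.
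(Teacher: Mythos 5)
Your proposal is correct and follows essentially the same route as the paper, which likewise proves this corollary by substituting the two choices of $\Phi(\mL_1)$ into the general result of \cref{app:prop-simple-spde-general}. The only difference is that you spell out the spectral bookkeeping (symmetry of $\Phi(\mL_1)$, strict positivity of the eigenvalues, and the collapse of the pseudoinverse $\dagger$ to the ordinary inverse) that the paper leaves implicit in the phrase ``by following the procedure,'' which is a welcome clarification rather than a departure.
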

\begin{proof}
  By following the procedure in \cref{app:prop-simple-spde-general}, the proof completes. 
\end{proof}

\subsection{Samples of Gradient and Curl Edge GPs}\label{proof:prop-gp-samples}
Here we discuss the div and curl properties of samples of gradient and curl GPs in \cref{eq.grad-curl-gps}, which completes the proof of \cref{prop:samples-of-grad-curl-gps} . 
\begin{proposition}
  Consider the gradient and curl GPs
  \begin{equation} \label{app-eq.grad-curl-gps}
    \vf_G \sim \gG\gP(\vzero, \mK_G),  \quad \vf_C \sim \gG\gP(\vzero, \mK_C) 
  \end{equation}
  where the gradient kernel and the curl kernel are 
  \begin{equation}
    \mK_G = \mU_G \Psi_G(\mLambda_G) \mU_G^\top, \quad \mK_C = \mU_C \Psi_C(\mLambda_C) \mU_C^\top. 
  \end{equation}
  Their prior samples are, respectively, curl-free and div-free. 
\end{proposition}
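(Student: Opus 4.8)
The plan is to exploit two facts that are already in place: (i) a centered Gaussian vector is supported on the column space of its covariance, and (ii) the covariances $\mK_G$ and $\mK_C$ have column spaces equal to the gradient and curl Hodge subspaces, on which the discrete curl and divergence operators vanish by the nilpotency identities $\mB_2^\top\mB_1^\top=\vzero$ and $\mB_1\mB_2=\vzero$. The whole argument is therefore linear-algebraic: reduce the probabilistic statement to a support claim, then apply the boundary identities.

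First I would make the support claim explicit through the eigenfactorization of the kernel. Since $\Psi_G(\mLambda_G)$ is diagonal with nonnegative entries, I can sample $\vf_G=\mU_G\,\Psi_G(\mLambda_G)^{1/2}\vz$ with $\vz\sim\gN(\vzero,\mI)$, where $\Psi_G(\mLambda_G)^{1/2}$ denotes the entrywise square root; a direct covariance computation recovers $\mK_G=\mU_G\Psi_G(\mLambda_G)\mU_G^\top$. Every realization is thus a linear combination of the columns of $\mU_G$, so $\vf_G\in\spn(\mU_G)=\im(\mB_1^\top)$ with probability one. Consequently $\vf_G=\mB_1^\top\vf_0$ for some node potential $\vf_0$, and applying the curl operator gives $\textCurl\,\vf_G=\mB_2^\top\mB_1^\top\vf_0=\vzero$ via the discrete calculus identity $\textCurl\,\textGrad=\mB_2^\top\mB_1^\top=\vzero$.

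The curl GP is handled symmetrically: the same representation yields $\vf_C=\mU_C\,\Psi_C(\mLambda_C)^{1/2}\vz\in\spn(\mU_C)=\im(\mB_2)$, so $\vf_C=\mB_2\vf_2$ for some triangle function $\vf_2$, whence $\textDiv\,\vf_C=\mB_1\mB_2\vf_2=\vzero$ by the adjoint identity $\mB_1\mB_2=(\mB_2^\top\mB_1^\top)^\top=\vzero$. The harmonic case, included in the full statement of \cref{prop:samples-of-grad-curl-gps}, follows the same template once one notes $\spn(\mU_H)=\ker(\mL_1)=\ker(\mB_1)\cap\ker(\mB_2^\top)$, which holds because positive semidefiniteness of $\mL_1=\mB_1^\top\mB_1+\mB_2\mB_2^\top$ makes $\mL_1\vf=\vzero$ equivalent to $\mB_1\vf=\vzero$ and $\mB_2^\top\vf=\vzero$ holding simultaneously.

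I do not expect a serious obstacle; the only point deserving care is the support statement (i). It must be phrased so that the conclusion holds for \emph{every} realization rather than merely in distribution, which the explicit factorization $\vf=\mU\Psi^{1/2}\vz$ delivers automatically because it exhibits each sample as an element of $\spn(\mU)$. With that representation in hand, the Hodge identities close the argument immediately.
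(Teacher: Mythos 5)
Your proof is correct and follows essentially the same route as the paper's: both exhibit samples via the square-root factorization $\vf_\Box = \mU_\Box \Psi_\Box^{1/2}(\mLambda_\Box)\vz$, observe that every realization therefore lies in $\spn(\mU_\Box)$, and conclude via the Hodge identities (the paper writes this as the orthogonality $\mB_2^\top\mU_G=\vzero$, which is the same fact as your $\spn(\mU_G)=\im(\mB_1^\top)$ combined with $\mB_2^\top\mB_1^\top=\vzero$). Your treatment of the harmonic case via $\ker(\mL_1)=\ker(\mB_1)\cap\ker(\mB_2^\top)$ is a welcome addition that the paper's appendix proof omits, but it does not change the character of the argument.
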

\begin{proof}
  We focus on the case of gradient GPs.
  First, we can decompose the gradient kernel in terms of $\mU_1 = [\mU_H\,\,\mU_G\,\,\mU_C]$ as 
    \begin{equation}
      \mK_G = 
      \mU_1
      \begin{pmatrix}
        \vzero & & \\
         & \Psi_G(\mLambda_G) & \\
        & & \vzero \\ 
      \end{pmatrix}
      \mU_1^\top .
    \end{equation}
    From a vector $\vv=(v_1,\dots,v_{N_1})^\top$ of variables following independent normal distribution, we can draw a random sample of gradient function as 
    \begin{equation}
      \vf_G = \mU_1 \diag([\vzero,\Psi_G^\frac{1}{2}(\mLambda_G),\vzero]) \vv 
    \end{equation}
    where $\diag([\va,\vb,\vc])$ is the diagonal matrix with $(\va,\vb,\vc)^\top$ on its diagonal. 

    Therefore, their curls are 
    \begin{equation}
      \textCurl\, \vf_G = \mB_2^\top \mU_1 \diag([\vzero,\Psi_G^\frac{1}{2}(\mLambda_G),\vzero]) = \mB_2^\top \mU_G \Psi_G^{\frac{1}{2}}(\mLambda_G) = \vzero. 
    \end{equation}
    Likewise, we can show the samples of a curl GP are div-free. 
    \begin{remark}
      An alternative proof can follow by studying the curl of the gradient GP which is another GP on triangles as given later by \cref{app-prop:div-curl-gp}.
      The kernel $\mB_2^\top\mK_G\mB_2$ is zero, due to the orthogonality $\mB_2^\top\mU_G = \vzero$. 
      Thus, the curl of a gradient GP is a zero GP on triangles, as well as its samples. Similarly, one can show the div of a curl GP is a zero GP on nodes, thus, its samples are zero. 
    \end{remark}
\end{proof}

\subsection{Derivation of Gradient and Curl GPs from SPDEs} \label{app:derivation-grad-curl-gps}
Here we provide proofs for \cref{prop:spde-grad-curl-gp}, deriving \Matern~and diffusion gradient/curl GPs from their SPDE representations.
\begin{proposition}\label{app-prop:curl-gp-spde}
  Given a scaled curl white noise  $\vw_C\sim\gN(\vzero,\mW_C)$ where $\mW_C = \sigma_C^2\mU_C\mU_C^\top$, consider the following SPDE on edges: 
  \begin{equation} 
    \Phi_C(\mL_{\rmu}) \vf_C = \vw_C, 
  \end{equation}
  with differential operators
  \begin{equation}
      \Phi_C(\mL_{\rmu}) = \Big(\frac{2\nu_C}{\kappa_C^2} \mI + \mL_{\rmu} \Big)^{\frac{\nu_C}{2}}, \quad \Phi_C(\mL_{\rmu})=e^{\frac{\kappa_C^2}{4}\mL_{\rmu}}.
  \end{equation}
  The respective solutions give the curl edge GPs with \Matern~kernel and diffusion kernel
  \begin{equation} 
    \vf_C \sim \gG\gP \Big(\vzero, \sigma_C^2 \mU_C  \Big(\frac{2\nu_C}{\kappa_C^2} \mI + \mL_{\rmu} \Big)^{-\nu_C} \mU_C^\top \Big), \quad 
    \vf_C \sim \gG\gP \Big(\vzero, \sigma_C^2 \mU_C  e^{-\frac{\kappa^2_C}{2}} \mU_C^\top \Big).
  \end{equation}
\end{proposition}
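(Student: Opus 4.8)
The plan is to mirror the derivation in \cref{app:prop-simple-spde-general} but to exploit the fact that both the scaled noise $\vw_C$ and the operator $\Phi_C(\mL_{\rmu})$ respect the curl subspace $\spn(\mU_C) = \im(\mB_2)$. First I would record the eigenstructure of the up-Laplacian: writing $\mL_{\rmu} = \mU_1 \mLambda_{\rmu} \mU_1^\top$ with $\mU_1 = [\mU_H\,\,\mU_G\,\,\mU_C]$, the diagonal $\mLambda_{\rmu}$ carries the nonzero eigenvalues $\mLambda_C$ on the curl block and zeros on the harmonic and gradient blocks, since $\ker(\mL_{\rmu}) = \spn([\mU_H\,\,\mU_G])$. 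Because $\Phi_C(0) > 0$ for both the \Matern~symbol $(2\nu_C/\kappa_C^2)^{\nu_C/2}$ and the diffusion symbol $e^{0}=1$, the operator $\Phi_C(\mL_{\rmu}) = \mU_1 \Phi_C(\mLambda_{\rmu}) \mU_1^\top$ is positive definite and hence invertible, so the SPDE admits the unique solution $\vf_C = \Phi_C(\mL_{\rmu})^{-1} \vw_C$.

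Next I would compute the covariance directly. Since $\Phi_C(\mL_{\rmu})$ is symmetric, $\Cov[\vf_C] = \Phi_C(\mL_{\rmu})^{-1} \mW_C \Phi_C(\mL_{\rmu})^{-1}$. Substituting $\mW_C = \sigma_C^2 \mU_C \mU_C^\top$ and using orthonormality of $\mU_1$ together with $\mU_H^\top \mU_C = \mU_G^\top \mU_C = \vzero$, the outer factors pick out exactly the curl block: the cross terms vanish and one is left with $\Cov[\vf_C] = \sigma_C^2 \mU_C \Phi_C(\mLambda_C)^{-2} \mU_C^\top$.

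Finally I would specialize the symbol. For the \Matern~operator $\Phi_C(\mLambda_C)^{-2} = (\tfrac{2\nu_C}{\kappa_C^2} \mI + \mLambda_C)^{-\nu_C}$, recovering the \Matern~curl kernel of \cref{eq.grad-curl-matern-kernel}; for the diffusion operator $\Phi_C(\mLambda_C) = e^{\kappa_C^2 \mLambda_C/4}$ gives $\Phi_C(\mLambda_C)^{-2} = e^{-\kappa_C^2 \mLambda_C/2}$, recovering the diffusion spectral density $\Psi_C(\mLambda_C) = \sigma_C^2 e^{-\kappa_C^2 \mLambda_C/2}$ of \cref{eq.diffusion_kernel_spectra}. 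The gradient case follows verbatim with $\mL_{\rmd}$, $\mU_G$, $\mLambda_G$ and a gradient-scaled noise in place of the curl objects.

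The main point to make explicit — more a conceptual subtlety than a technical obstacle — is why the scaling $\mW_C = \sigma_C^2 \mU_C \mU_C^\top$ is what confines the process to the curl subspace. Since $\Phi_C(\mL_{\rmu})$ is full rank, feeding it an isotropic white noise would produce a GP supported on all of $\R^{N_1}$, namely the non-HC GP of \cref{eq.simple_matern_diffusion}; it is precisely the projection built into $\mW_C$, preserved by the subspace-invariance of $\Phi_C(\mL_{\rmu})^{-1}$, that annihilates the harmonic and gradient blocks and yields a kernel whose image is $\im(\mB_2)$.
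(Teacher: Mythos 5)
Your proof is correct and follows essentially the same route as the paper's: solve the SPDE by inverting the positive definite operator, compute the covariance $\Phi_C(\mL_{\rmu})^{-1}\mW_C\,\Phi_C(\mL_{\rmu})^{-1}$, and use the Hodge eigendecomposition so that the scaled noise annihilates the harmonic and gradient blocks, leaving $\sigma_C^2\,\mU_C\,\Phi_C(\mLambda_C)^{-2}\mU_C^\top$. The differences are purely presentational---you justify a true inverse where the paper writes a pseudoinverse and use the subspace-intertwining identity $\Phi_C(\mL_{\rmu})^{-1}\mU_C = \mU_C\Phi_C(\mLambda_C)^{-1}$ in place of the paper's explicit block-matrix multiplication---and your diffusion kernel $\sigma_C^2\,\mU_C\, e^{-\kappa_C^2\mLambda_C/2}\,\mU_C^\top$ correctly carries the factor $\mLambda_C$ that is typographically missing from the proposition's statement.
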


\begin{proof}
 First, consider the \Matern~curl GP case. 
 The corresponding SPDE has the form 
 \begin{eqnarray}
   \Big(\frac{2\nu_C}{\kappa_C^2} \mI + \mL_{\rmu} \Big)^{\frac{\nu_C}{2}} \vf_C = \vw_C, 
 \end{eqnarray}
 with a solution $\vf_C = \Phi_C^\dagger(\mL_{\rmu})\vw_C$.

 Given the scaled curl Gaussian noise process $\vw_C\sim \gG(\vzero, \mW_C)$ with $\mW_C=\sigma_C^2\mU_C\mU_C^\top$, 
 the solution $\vf_C$ is an edge GP following $\vf_C\sim\gG\gP(\vzero,\Cov[\vf_C])$ with the covariance of solution $\vf_C$ as 
 \begin{equation} \label{app-eq:cov-1}
  \Cov[\vf_C] = \Big(\frac{2\nu_C}{\kappa_C^2} \mI + \mL_{\rmu} \Big)^{-\frac{\nu_C}{2}} \mW_C 
  \Big(\frac{2\nu_C}{\kappa_C^2} \mI + \mL_{\rmu} \Big)^{-\frac{\nu_C}{2}}.
 \end{equation}
 Note that we have 
 \begin{equation} \label{app-eq.Wc-decomp}
  \mW_C =
  \begin{pmatrix}
    \mU_H & \mU_G & \mU_C 
  \end{pmatrix}
  \begin{pmatrix}
    \vzero & & \\
    & \vzero & \\ 
    & & \sigma_C^2 \mI 
  \end{pmatrix}
  \begin{pmatrix}
    \mU_H & \mU_G & \mU_C 
  \end{pmatrix}^\top.
 \end{equation}
 Moreover, $\mL_\rmu$ can be decomposed by $\mU_1$ as follows
 \begin{equation}
  \mL_\rmu = 
  \begin{pmatrix}
    \mU_H & \mU_G & \mU_C 
  \end{pmatrix}
  \begin{pmatrix}
    \vzero & & \\
    & \vzero & \\ 
    & & \mLambda_C 
  \end{pmatrix}
  \begin{pmatrix}
    \mU_H & \mU_G & \mU_C 
  \end{pmatrix}^\top,
 \end{equation}
 which follows that 
 \begin{equation} \label{app-eq.matern-diff-operator-decomp}
  \Big(\frac{2\nu_C}{\kappa_C^2} \mI + \mL_{\rmu} \Big)^{-\frac{\nu_C}{2}} = 
  \begin{pmatrix}
    \mU_H & \mU_G & \mU_C 
  \end{pmatrix}
  \begin{pmatrix}
    \Big(\frac{2\nu_C}{\kappa_C^2} \mI \Big)^{-\frac{\nu_C}{2}} & & \\
    & \Big(\frac{2\nu_C}{\kappa_C^2} \mI \Big)^{-\frac{\nu_C}{2}} & \\
    & & \Big(\frac{2\nu_C}{\kappa_C^2} \mI + \mLambda_C \Big)^{-\frac{\nu_C}{2}}
  \end{pmatrix}
  \begin{pmatrix}
    \mU_H & \mU_G & \mU_C 
  \end{pmatrix}^\top .
 \end{equation}
 By plugging \cref{app-eq.Wc-decomp} and \cref{app-eq.matern-diff-operator-decomp} into \cref{app-eq:cov-1}, we can then express the covariance as 
 \begin{equation}
  \begin{aligned}
    \Cov[\vf_C] & = 
    \begin{pmatrix}
      \mU_H & \mU_G & \mU_C 
    \end{pmatrix}
    \begin{pmatrix}
      \vzero & & \\
      & \vzero & \\
      & & \sigma_C^2 \Big(\frac{2\nu_C}{\kappa_C^2} \mI + \mLambda_C \Big)^{-\nu_C}
    \end{pmatrix}
    \begin{pmatrix}
      \mU_H & \mU_G & \mU_C 
    \end{pmatrix}^\top  \\
    & =  \mU_C \sigma_C^2 \Big(\frac{2\nu_C}{\kappa_C^2} \mI + \mLambda_C \Big)^{-\nu_C} \mU_C^\top 
  \end{aligned}
 \end{equation}
 which returns the \Matern~curl GP $\vf_C \sim \gG\gP \Big(\vzero, \sigma_C^2 \mU_C  \Big(\frac{2\nu_C}{\kappa_C^2} \mI + \mL_{\rmu} \Big)^{-\nu_C} \mU_C^\top \Big)$. 

 Second, consider the following SPDE 
 \begin{equation}
 e^{\frac{\kappa_C^2}{4}\mL_{\rmu}} \vf_C = \vw_C.
 \end{equation}
 Following the same procedure as above, we have its solution as
 \begin{equation}
  \vf_C \sim \gG\gP \Big(\vzero, \sigma_C^2 \mU_C  e^{-\frac{\kappa^2_C}{2}} \mU_C^\top \Big)
 \end{equation}
 which is the diffusion curl GP. 
\end{proof}

\begin{proposition}
  Given a scaled gradient white noise  $\vw_G\sim\gN(\vzero,\mW_G)$ where $\mW_G = \sigma_G^2\mU_G\mU_G^\top$, consider the following SPDE on edges: 
  \begin{equation} 
    \Phi_G(\mL_{\rmd}) \vf_G = \vw_G, 
  \end{equation}
  with differential operators
  \begin{equation}
      \Phi_G(\mL_{\rmd}) = \Big(\frac{2\nu_G}{\kappa_G^2} \mI + \mL_{\rmd} \Big)^{\frac{\nu_G}{2}}, \quad \Phi_G(\mL_{\rmd})=e^{\frac{\kappa_G^2}{4}\mL_{\rmd}}.
  \end{equation}
  The respective solutions give the curl edge GPs with \Matern~kernel and diffusion kernel
  \begin{equation} 
    \vf_G \sim \gG\gP \Big(\vzero, \sigma_G^2 \mU_G  \Big(\frac{2\nu_G}{\kappa_G^2} \mI + \mL_{\rmd} \Big)^{-\nu_G} \mU_G^\top \Big) \quad 
    \vf_G \sim \gG\gP \Big(\vzero, \sigma_G^2 \mU_G  e^{-\frac{\kappa^2_G}{2}} \mU_G^\top \Big).
  \end{equation}
\end{proposition}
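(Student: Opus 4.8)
The plan is to follow the proof of the curl case in \cref{app-prop:curl-gp-spde} verbatim, replacing the up Laplacian $\mL_{\rmu}$ by the down Laplacian $\mL_{\rmd}$, the curl eigenvectors $\mU_C$ by the gradient eigenvectors $\mU_G$, and $\mLambda_C$ by $\mLambda_G$. Everything reduces to one structural observation: under the Hodge reorganization $\mU_1 = [\mU_H\,\,\mU_G\,\,\mU_C]$, the operator $\mL_{\rmd} = \mB_1^\top\mB_1$ is block diagonal whose only nonzero block lives on the gradient subspace, namely
\begin{equation}
  \mL_{\rmd} = \mU_1 \diag([\vzero,\mLambda_G,\vzero]) \mU_1^\top .
\end{equation}

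First I would write the solution of the SPDE as $\vf_G = \Phi_G^\dagger(\mL_{\rmd})\vw_G$ and, since $\vw_G$ is Gaussian, read off $\vf_G\sim\gG\gP(\vzero,\Cov[\vf_G])$ with
\begin{equation}
  \Cov[\vf_G] = \Phi_G^\dagger(\mL_{\rmd})\,\mW_G\,\bigl(\Phi_G^\dagger(\mL_{\rmd})\bigr)^\top ,
\end{equation}
exactly as in the curl derivation.

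To justify the displayed eigendecomposition of $\mL_{\rmd}$, I would note that $\mL_{\rmd}$ annihilates both $\mU_H$ and $\mU_C$: the harmonic block vanishes because $\spn(\mU_H)=\ker(\mL_1)\subseteq\ker(\mL_{\rmd})$, and the curl block vanishes because $\spn(\mU_C)=\im(\mB_2)$ together with the identity $\mB_1\mB_2=\vzero$ gives $\mB_1\mU_C=\vzero$, hence $\mL_{\rmd}\mU_C=\mB_1^\top\mB_1\mU_C=\vzero$. Consequently both $\mW_G$ and $\Phi_G^\dagger(\mL_{\rmd})$ are simultaneously diagonalized by $\mU_1$ with the same block partition, and the scaling block $\sigma_G^2\mI$ of $\mW_G$ sits precisely where $\mL_{\rmd}$ has its nonzero spectrum $\mLambda_G$. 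Multiplying the three block-diagonal factors then collapses the harmonic and curl blocks to zero and leaves
\begin{equation}
  \Cov[\vf_G] = \sigma_G^2\,\mU_G\Bigl(\tfrac{2\nu_G}{\kappa_G^2}\mI+\mLambda_G\Bigr)^{-\nu_G}\mU_G^\top
\end{equation}
for the \Matern~operator, and analogously $\sigma_G^2\,\mU_G\, e^{-\frac{\kappa_G^2}{2}\mLambda_G}\,\mU_G^\top$ for the diffusion operator, which are the claimed gradient kernels.

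The only point needing care---and the closest thing to an obstacle---is recognizing that the zero blocks of $\mW_G$ suppress any contribution of $\Phi_G^\dagger$ on $\spn(\mU_H)$ and $\spn(\mU_C)$, so the nontrivial action of $\Phi_G^\dagger$ off the gradient subspace is irrelevant to the final covariance. Because this is a direct transcription of the already-established curl argument (with $\mL_{\rmd}$, $\mU_G$, $\mLambda_G$ in place of $\mL_{\rmu}$, $\mU_C$, $\mLambda_C$), no genuinely new difficulty arises.
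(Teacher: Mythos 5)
Your proposal is correct and matches the paper's approach exactly: the paper proves this proposition by stating that it follows the curl case (\cref{app-prop:curl-gp-spde}) verbatim with $\mL_{\rmd}$, $\mU_G$, $\mLambda_G$ in place of $\mL_{\rmu}$, $\mU_C$, $\mLambda_C$, which is precisely your plan. Your explicit justification that $\mL_{\rmd}$ annihilates both $\mU_H$ (since $\ker(\mL_1)\subseteq\ker(\mL_{\rmd})$) and $\mU_C$ (via $\mB_1\mB_2=\vzero$) fills in a detail the paper leaves implicit, but it is the same argument.
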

\begin{proof}
  The proof follows \cref{app-prop:curl-gp-spde} likewise. 
\end{proof}

\subsection{Proof of Properties of HC Edge GPs} \label{app:proof-properties-hc-gp}
Here we provide proofs for \cref{lemma:property-hc-gp}, which directly follow from \cref{def:hodge-compositional-gp}.
\begin{proof}
    For an edge GP $\vf_1$ with covariance kernel $\mK_1$, due to the fact that the HC edge kernel $\mK_1$ is built using all the orthonormal basis of the edge function space $\mU_1$, its realizations give all possible edge functions. This is analogous to Karhunen-Loève theorem for GPs with Mercer kernels.
    For the second point that $\mK_1 = \mK_H + \mK_G + \mK_C$ and the three Hodge GPs mutually independent, this results from the construction of $\vf_1$ and the orthogonality of the three Hodge GPs. 
\end{proof}

\subsection{Posterior Distributions of Hodge Components} \label{app-subsec:posterior-hodge-components}
Here we discuss the posterior distribution of the three Hodge components from the posterior prediction of the edge function. 
As the construction of our HC edge GPs is essentially a sum of three independent functions, we can follow \citet[Section 2.4]{duvenaud2014automatic} modeling the sums of Euclidean functions. 
Denote $\vf_1(\vx)$ and $\vf_1(\vx^*)$ the function values, respectively, at training locations $\vx=[x_1,\dots,x_n]^\top$ and query locations $\vx^* = [x_1^*,\dots,x_n^*]^\top$. 
We first write down the joint prior distribution over the three Hodge components and the edge function. 
\begin{equation}
    \begin{bmatrix}
        \vf_H(\vx) \\ \vf_H(\vx^*) \\ \vf_G(\vx) \\ \vf_G(\vx^*) \\ 
        \vf_C(\vx) \\ \vf_C(\vx^*) \\ \vf_1(\vx) \\ \vf_1(\vx^*)
    \end{bmatrix}
    \sim \gN 
    \begin{pmatrix}
        \vzero, &
        \begin{bmatrix}
            \mK_H & \mK_H^* &  &  &  &  & \mK_H & \mK_H^* \\
            \mK_H^{*\top} & \mK_H^{**} & & & & & \mK_H^* & \mK_H^{**} \\ 
            & & \mK_G & \mK_G^* & & & \mK_G & \mK_G^* \\ 
            & & \mK_G^{*\top} & \mK_G^{**}  & & & \mK_G^* & \mK_G^{**} \\ 
            & & & & \mK_C & \mK_C^* & \mK_C & \mK_C^* \\
            & & & & \mK_C^{*\top} & \mK_C^{**} & \mK_C^* & \mK_C^{**} \\ 
            \mK_H & \mK_H^{*\top} & \mK_G & \mK_G^{*\top} & \mK_C & \mK_C^{*\top} & \mK_1 & \mK_1^* \\ 
            \mK_H^{*\top} & \mK_H^{**} & \mK_G^{*\top} & \mK_G^{**} & \mK_C^{*\top} & \mK_C^{**} & \mK_1^{*\top} & \mK_1^{**}
        \end{bmatrix}
    \end{pmatrix}
\end{equation}
where we represent the kernel matrices by $\mK_1 = k_1(\vx,\vx), \mK_1^*=k_1(\vx,\vx^*)$ and $\mK_1^{**}=k_1(\vx^*,\vx^*)$, and likewise for the other kernel matrices.
Given this joint distribution, we can obtain the posterior distributions of the three Hodge components as follows 
\begin{subequations}
\begin{align}
    \vf_H(\vx^*) | \vf_1(\vx) & \sim \gN \Big(\mK_H^{*\top}\mK_1^{-1}\vf_1(\vx), \mK_H^{**} - \mK_H^{*\top}\mK_1^{-1}\mK_H^* \Big) \\
    \vf_G(\vx^*) | \vf_1(\vx) & \sim \gN \Big(\mK_G^{*\top}\mK_1^{-1}\vf_1(\vx), \mK_G^{**} - \mK_G^{*\top}\mK_1^{-1}\mK_G^* \Big) \\
    \vf_C(\vx^*) | \vf_1(\vx) & \sim \gN \Big(\mK_C^{*\top}\mK_1^{-1}\vf_1(\vx), \mK_C^{**} - \mK_C^{*\top}\mK_1^{-1}\mK_C^* \Big) 
\end{align}
\end{subequations}
From these posterior distributions, we can directly obtain the means and the uncertainties of the Hodge components of the predicted edge function.

\subsection{Edge Fourier Feature Perspective} \label{app:fourier-feature-perspective}
\paragraph{Edge Fourier transform}
From the edge eigen-feature perspective, any edge function can be viewed as a linear combination of eigenvectors in $\mU$, that is, 
\begin{equation}
  \vf_1=\sum_{i=1}^{N_1}\tilde{f}_{1,i} \vu_i=\mU_1\tilde{\vf}_1 \text{ with } \tilde{\vf}_1 = \mU^\top \vf_1
\end{equation}
where $\tilde{\vf}_1$ is known as the (edge) Fourier feature of $\vf_1$ and $\tilde{f}_{1,i}$ is the $i$-th Fourier coefficient at eigenvalue $\lambda_i$. 
These eigenvalues carry the notion of frequency \citep{barbarossaTopologicalSignalProcessing2020}. 
Particularly, based on the reorganized eigenvector matrix $\mU_1 = [\mU_H\,\,\mU_G\,\,\mU_C]$ and the associated eigenvalues $\mLambda_1=\diag(\mLambda_H,\mLambda_G,\mLambda_C)$, we have that any $\lambda_G$ measures the squared $\ell_2$-norm of the divergence while $\lambda_C$ measures the squared $\ell_2$-norm of the curl:
$\lambda_G = \vu_G^\top\mL_1\vu_G = \vu_G^\top\mL_\rmd\vu_G = \lVert\mB_1\vu_G\rVert_2^2$, and  
$\lambda_C = \vu_C^\top\mL_1\vu_C = \vu_C^\top\mL_\rmu\vu_C = \lVert\mB_2^\top\vu_C\rVert_2^2,$
and a zero eigenvalue $\lambda_H=0$ corresponding to harmonic eigenvector $\mu_H$ has zero total divergence and curl, as discussed by \citep{yangFiniteImpulseResponse2021,yangSimplicialConvolutionalFilters2022}. 
Therefore,  the Fourier coefficients at eigenvalues in different Hodge subspaces measure the weights of the corresponding Fourier basis in $\vf$, each basis associated with different total divergence or total curl. That is, we have the edge Fourier representation as  
\begin{equation}
  \tilde{\vf}_1 = \mU_1^\top \vf_1 = [\tilde{\vf}_H^\top,\tilde{\vf}_G^\top, \tilde{\vf}_C^\top]^\top \text{ with } \tilde{\vf}_H = \mU_H^\top\vf_1, \,\, \tilde{\vf}_G = \mU_G^\top\vf_1,  \,\, \tilde{\vf}_C = \mU_C^\top\vf_1.
\end{equation}
\begin{remark}
  This provides as a spectral tool to understand the edge functions. 
  The harmonic Fourier feature $\tilde{\vf}_H$ measures the extent of harmonic Fourier basis $\mU_H$ in $\vf$, reflecting how harmonic $\vf_1$ is.
  The gradient Fourier feature $\tilde{\vf}_G$ measures the extent of gradient Fourier basis $\mU_G$ in $\vf_1$, reflecting how divergent $\vf_1$ is, where each basis in $\mU_G$ has different total divergence. 
  The curl Fourier feature $\tilde{\vf}_C$ measures the extent of curl Fourier basis $\mU_C$ in $\vf_1$, reflecting how rotational $\vf_1$ is, where each basis in $\mU_C$ has different total curl. 
\end{remark}

\begin{corollary}[Fourier feature perspective of edge GPs]
  Let $\vf_1\sim \gG\gP(\vzero,\mK_1)$ be an edge Gaussian process with kernel diagonalizable by $\mU_1$. Then, given the edge Fourier transform $\tilde{\vf}_1=\mU_1^\top\vf_1$, its Fourier coefficients $\{\tilde{f}_{1,i}\}_{i=1}^N$ are independently distributed Gaussian variables
  \begin{equation}
    \tilde{f}_{1,i} \sim \gN(0, \vu_i^\top \mK_1 \vu_i), \text{ for }i=1,\dots,N. 
  \end{equation}
\end{corollary}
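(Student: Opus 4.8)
The plan is to exploit the fact that the Fourier transform $\tilde{\vf}_1 = \mU_1^\top \vf_1$ is a deterministic linear image of the Gaussian vector $\vf_1$, so that $\tilde{\vf}_1$ is itself jointly Gaussian and is therefore completely characterized by its mean and covariance. First I would invoke the standard closure property: if $\vf_1 \sim \gN(\vzero, \mK_1)$ and $\tilde{\vf}_1 = \mU_1^\top \vf_1$ for a fixed matrix $\mU_1$, then $\tilde{\vf}_1$ is Gaussian with mean $\mU_1^\top \vzero = \vzero$ and covariance $\mU_1^\top \mK_1 \mU_1$. Consequently every marginal $\tilde{f}_{1,i}$ is a zero-mean Gaussian, and its variance is the $i$-th diagonal entry of $\mU_1^\top \mK_1 \mU_1$, namely $\vu_i^\top \mK_1 \vu_i$, which is exactly the variance claimed in the statement.

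Next I would establish that the covariance matrix is diagonal. The hypothesis that $\mK_1$ is diagonalizable by $\mU_1$ means precisely that $\mK_1 = \mU_1 \Psi(\mLambda_1) \mU_1^\top$ for some spectral function of the eigenvalues; since $\mU_1$ is orthogonal, $\mU_1^\top \mU_1 = \mI$, and hence $\mU_1^\top \mK_1 \mU_1 = \Psi(\mLambda_1)$, a diagonal matrix. Therefore the covariance matrix of $\tilde{\vf}_1$ is diagonal, so the off-diagonal covariances $\Cov(\tilde{f}_{1,i}, \tilde{f}_{1,j}) = \vu_i^\top \mK_1 \vu_j$ vanish for all $i \neq j$.

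The only step that genuinely requires care — and which I would state explicitly rather than gloss over — is the passage from uncorrelatedness to independence. For arbitrary random variables a diagonal covariance does not by itself yield independence; here it does only because $\tilde{\vf}_1$ is jointly Gaussian, being a linear transform of a Gaussian vector, and for jointly Gaussian variables pairwise uncorrelatedness is equivalent to mutual independence. Invoking this equivalence factorizes the joint density into a product of the individual $\gN(0, \vu_i^\top \mK_1 \vu_i)$ densities, which gives the mutual independence of $\{\tilde{f}_{1,i}\}_{i=1}^N$ and completes the argument.
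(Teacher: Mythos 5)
Your proof is correct and follows essentially the same route as the paper's: both apply the closure of Gaussianity under the linear map $\mU_1^\top$, observe that diagonalizability of $\mK_1$ by $\mU_1$ makes the transformed covariance $\mU_1^\top\mK_1\mU_1$ diagonal, and conclude independence. Your explicit remark that uncorrelatedness yields independence only because $\tilde{\vf}_1$ is jointly Gaussian is a welcome refinement of a step the paper states more tersely, but it is not a different argument.
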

\begin{proof}
  Using the affine transformation preserving Gaussian, we have 
  \begin{equation}
    \tilde{\vf}_1 \sim \gG\gP(\vzero, \mU_1^\top\mK_1\mU_1).
  \end{equation}
  Since the kernel $\mK_1$ can be diagonalized by $\mU_1$, the kernel $\mU_1^\top\mK_1\mU_1$ is a diagonal matrix, implying the independence between variables in $\tilde{\vf}_1$. Thus, a variable $ \tilde{f}_{1,i}$ follows normal distribution $\gN(0, \vu_i^\top \mK_1 \vu_i)$.  
\end{proof}
This corollary indicates that an edge GP can be viewed as an affine transformation by $\mU_1$ of a collection of independent Gaussian variables, $\tilde{\vf}_1 = [ \tilde{f}_{1,1},\dots, \tilde{f}_{1,N_1}]^\top$, which are the Fourier coefficients of $\vf$.
The prior distribution of certain Fourier coefficient is the  prior imposed on the corresponding divergent or rotational part of the function $\vf$.
This allows us to compare HC and non-HC edge GPs from the following perspective. 

\begin{proposition} 
  Suppose the Hodge Laplacian $\mL_1$ has eigenpairs $(\lambda,\vu_G)$ and $(\lambda,\vu_C)$, i.e., $\lambda$ is associated to both gradient and curl subspaces. Let $\vf_1\sim \gG\gP(\vzero,\mK_1)$ be an edge Gaussian process. Denote the Fourier coefficients of $\vf_1$ at $\lambda_G$ and $\lambda_C$ as $\tilde{f}_G$ and $\tilde{f}_C$, respectively.
  Then, a non-Hodge-compositional GP with $\mK_1=\Psi(\mL_1)$ imposes the same prior variance on $\tilde{f}_G$ and $\tilde{f}_C$, i.e.,
  \begin{equation}
    \Var[\tilde{f}_G] = \Var[\tilde{f}_C] = \Psi(\lambda).
  \end{equation}
  Instead, a Hodge-compositional GP with $\mK_1$ in \cref{eq.hodge-gp-per-comp} imposes different variances on two coefficients 
  \begin{equation}
    \Var[\tilde{f}_G] = \Psi_G(\lambda) \text{ and } \Var[\tilde{f}_C] = \Psi_C(\lambda).
  \end{equation}
\end{proposition}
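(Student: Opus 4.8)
The plan is to reduce both assertions to the Fourier-feature corollary established just above, which states that for any edge GP $\vf_1 \sim \gG\gP(\vzero, \mK_1)$ the $i$-th Fourier coefficient satisfies $\tilde{f}_{1,i} \sim \gN(0, \vu_i^\top \mK_1 \vu_i)$. Consequently $\Var[\tilde{f}_G] = \vu_G^\top \mK_1 \vu_G$ and $\Var[\tilde{f}_C] = \vu_C^\top \mK_1 \vu_C$, so the whole proof amounts to evaluating these two quadratic forms for each of the two kernel families, using that the columns of $\mU_1$ are orthonormal.

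First I would handle the non-HC kernel $\mK_1 = \Psi(\mL_1) = \mU_1 \Psi(\mLambda_1) \mU_1^\top$. Since $\vu_G$ and $\vu_C$ are both unit eigenvectors of $\mL_1$ for the common eigenvalue $\lambda$, spectral calculus gives $\mK_1 \vu_G = \Psi(\lambda)\vu_G$ and $\mK_1 \vu_C = \Psi(\lambda)\vu_C$. Taking inner products and using $\vu_G^\top \vu_G = \vu_C^\top \vu_C = 1$ yields $\Var[\tilde{f}_G] = \Var[\tilde{f}_C] = \Psi(\lambda)$, the degenerate case in which a single spectral value is forced on both coefficients.

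Next I would treat the HC kernel $\mK_1 = \mK_H + \mK_G + \mK_C$ with $\mK_\Box = \mU_\Box \Psi_\Box(\mLambda_\Box) \mU_\Box^\top$ as in \cref{eq.hodge-gp-per-comp}. The key is the orthogonality of the Hodge subspaces: since $\vu_G \in \spn(\mU_G)$ is orthogonal to every column of $\mU_H$ and $\mU_C$, we have $\mK_H \vu_G = \vzero$ and $\mK_C \vu_G = \vzero$, while $\mK_G \vu_G = \Psi_G(\lambda)\vu_G$ because $\vu_G$ is the eigenvector of $\mL_\rmd$ attached to $\lambda$. Hence $\vu_G^\top \mK_1 \vu_G = \Psi_G(\lambda)$, and the symmetric computation gives $\vu_C^\top \mK_1 \vu_C = \Psi_C(\lambda)$, establishing the two claimed variances.

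The main obstacle is not analytic but conceptual bookkeeping: one must keep the shared scalar $\lambda$ distinct from the two orthogonal eigenvectors it labels, and recognize that it is precisely the block structure of the HC kernel --- routing $\vu_G$ through $\Psi_G$ and $\vu_C$ through $\Psi_C$ via Hodge orthogonality --- that breaks the degeneracy inherent in $\Psi(\mL_1)$. Everything else is a direct quadratic-form evaluation requiring no estimates.
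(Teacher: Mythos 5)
Your proposal is correct and follows essentially the same route as the paper: both reduce the claim to the Fourier-feature corollary $\tilde{f}_{1,i} \sim \gN(0, \vu_i^\top \mK_1 \vu_i)$ and then evaluate the quadratic forms, using spectral calculus for $\Psi(\mL_1)$ in the non-HC case and Hodge-subspace orthogonality for the block kernel $\mK_H + \mK_G + \mK_C$ in the HC case. Your write-up is merely more explicit than the paper's terse version, spelling out $\mK_H\vu_G = \mK_C\vu_G = \vzero$ and $\mK_G\vu_G = \Psi_G(\lambda)\vu_G$, which the paper compresses into a single appeal to the definition.
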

\begin{proof}
  For a non-HC edge GP with kernel $\Psi(\mL_1)$, its Fourier coefficients $\tilde{f}_G$ and $\tilde{f}_C$ at a common $\lambda$ follows the normal distribution with a variance $\Psi(\lambda)$, which follows from the nature of kernel function $\Psi$ mapping each $\lambda$ to exactly one value $\Psi(\lambda)$. However, for a HC edge GP, we have 
  \begin{equation}
    \tilde{f}_G\sim\gN(0,\vu_G^\top\mK_1\vu_G), \quad \tilde{f}_C\sim\gN(0,\vu_C^\top\mK_1\vu_C). 
  \end{equation}
  Using \cref{def:hodge-compositional-gp} [cf. \cref{eq.hodge-gp-per-comp}], we have $\vu_G^\top\mK_1\vu_G = \Psi_G(\lambda)$ and $\vu_C^\top\mK_1\vu_C = \Psi_C(\lambda)$, which are two different values, arising from the individually parametrized kernels $\mK_G$ and $\mK_C$. 
\end{proof}
This edge Fourier feature perspective directly shows that non-HC GPs impose the same prior on two Fourier coefficients, which are however associated with two different Hodge subspaces. 
This prohibits individual learning for the gradient and curl parts of edge functions particularly associated to the same eigenvalue. 
Instead, HC edge GPs do not have this limitation.

\subsection{Diffusion on Edges} \label{app:diffusion_on_edges}
Here we provide the details on the connection of diffusion HC edge GPs to edge diffusion equations, as well as an illustration of diffusion process on edges.
Consider the diffusion equation on the edge space 
\begin{equation}  
  \odv{\vphi(t)}{t} =  -(\mu \mL_{\rmd} + \gamma \mL_{\rmu}) \vphi(t)
\end{equation}
where $\mu,\gamma>0$.
Given an initial value $\vphi(0)$, we obtain a solution 
\begin{equation}
  \vphi|_{t=\tau} = e^{-(\mu\tau\mL_\rmd + \gamma\tau\mL_\rmu)}\vphi(0),
\end{equation}
When $\sigma_G^2 = \sigma_C^2 = \sigma_H^2=1$, the diffusion kernel can be written as 
\begin{equation}
  \mK_1 = e^{-(\frac{\kappa_G^2}{2} \mL_{\rmd} + \frac{\kappa_C^2}{2} \mL_{\rmu})}
\end{equation}
which is the Green's function of above diffusion equation.
In \cref{fig:diffusion_demonstration}, we illustrate the diffusion processes on nodes and on edges, started at a random location. 
When the graph is connected, the node diffusion converges to the harmonic state where all nodes are constant. 
Instead, the harmonic state of the edge diffusion gives an edge flow which is div- and curl-free, cycling around the 1-dimensional ``hole'' of the $\textSC$ \citep{munkresElementsAlgebraicTopology2018}.



\begin{figure}[ht!]
\makebox[1\textwidth][l]{
  \hspace{-10pt}
  \includegraphics[width=0.245\linewidth]{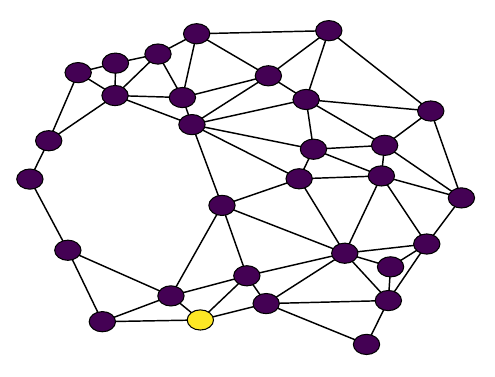}
    \hspace{-10pt}
  \includegraphics[width=0.245\linewidth]{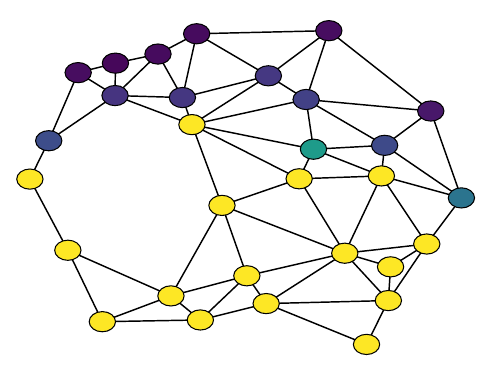}
    \hspace{-10pt}
  \includegraphics[width=0.245\linewidth]{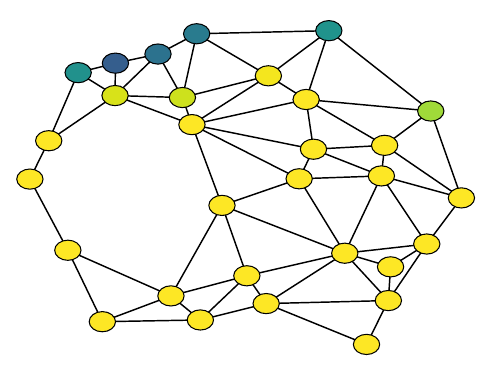}
    \hspace{-10pt}
  \includegraphics[width=0.245\linewidth]{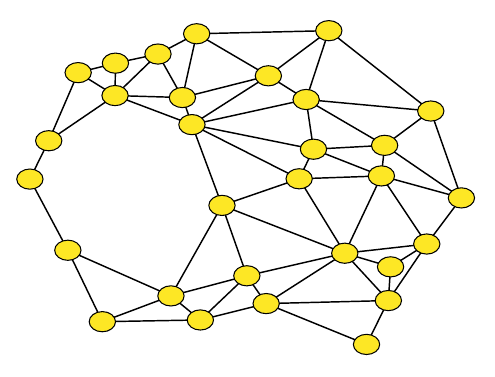} 
    \hspace{-10pt}
  \includegraphics[width=0.046\linewidth]{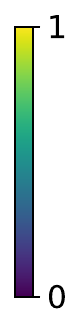} 
  }
\makebox[1\textwidth][l]{
\hspace{-10pt}
  \includegraphics[width=0.245\linewidth]{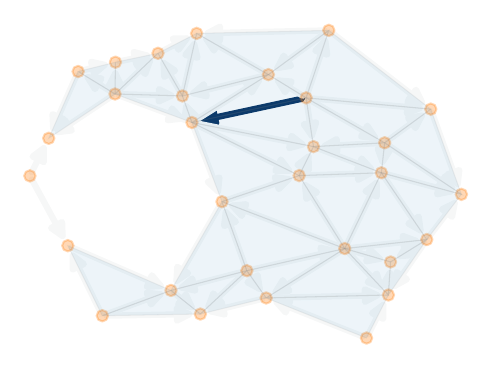}
  \hspace{-10pt}
  \includegraphics[width=0.245\linewidth]{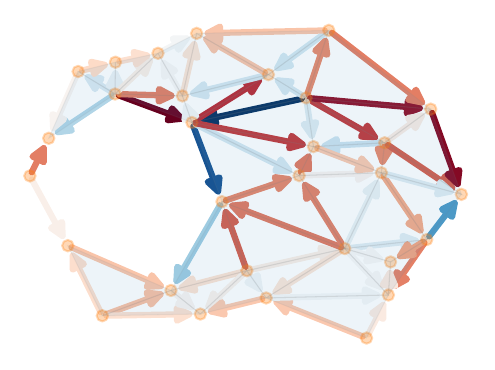} 
  \hspace{-10pt}
  \includegraphics[width=0.245\linewidth]{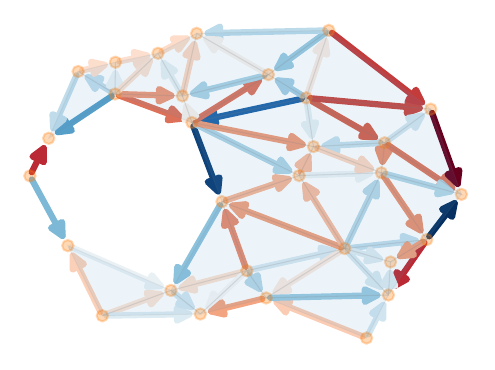} 
  \hspace{-10pt}
  \includegraphics[width=0.245\linewidth]{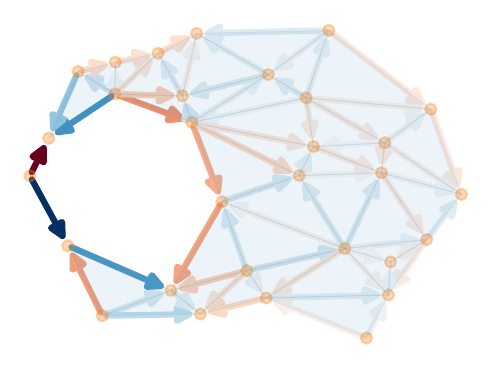}
  \hspace{-10pt}
\includegraphics[width=0.055\linewidth]{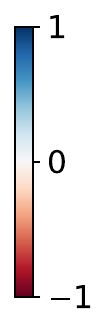} 
  }
  \caption{Node \figtop~and edge \figbottom~diffusion processes (started at one random location \figleft, then two intermediate states \figcenter~and harmonic state \figright).}
  \label{fig:diffusion_demonstration}
\end{figure}

\subsection{Complexity of Edge GPs} \label{app-subsec:complexity}
Here we discuss their complexity when training, e.g., in Gaussian process regression, and the complexity of sampling from them.
Note that the complexity of graph GPs naturally apply to edge GPs.

\paragraph{Complexity when Training}
The \Matern~and diffusion kernels can be trained in a scalable way. 
Due to their decreasing eigenvalues, we can consider the $l$ largest eigenvalues of the kernel matrices with off-the-shelf eigen-solvers, e.g., Lanczos algorithm. 
The recent work on Krylov subspace methods to accelerate graph kernels by \citet{erbKrylovSubspaceMethods2023} can be extended to edge kernels. 
Moreover, other computational techniques applicable for graph GPs in \citet[Section 3.1]{borovitskiyMatErnGaussian2021} can be adopted as well. 

\paragraph{Complexity when Sampling from Edge GPs}
Given an edge GP, as well as the eigenpairs for constructing the edge kernel, we can follow the procedure in \cref{proof:prop-gp-samples} to sample an edge function. That is, from a vector $\vv=(v_1,\dots,v_{N_1})^\top$ of variables following independent normal distribution, a sample of the edge function can be given by 
\begin{equation}
      \vf_1 = [\mU_H \,\, \mU_G \,\, \mU_C] \, \diag([\Psi_H^\frac{1}{2}(\mLambda_H),\Psi_G^\frac{1}{2}(\mLambda_G),\Psi_C^\frac{1}{2}(\mLambda_C)]) \vv 
\end{equation}
which has a complexity of $\gO(N_1^2)$ (matrix-vector multiplication).
Furthermore, the discussion on improving sampling efficiency in graph GP models by \citet[Section 4.7]{nikitinNonseparableSpatiotemporalGraph2022} naturally applies to our proposed edge GPs as well.

\subsection{Interaction between Node, Edge and Triangle GPs} \label{app:interaction-node-edge}
Here we provide the proof for \cref{cor:gradient-of-node-gp}, showing the gradient of a node GP is an edge GP. 
\begin{proof}
  Given a node GP $\vf_0\sim\gG\gP(\vzero,\mK_0)$, using the derivative of a GP is also a GP, its gradient $\vf_G=\mB_1^\top\vf_0$ is an edge GP whose kernel can be found as 
  \begin{equation}
    \mK_G = \Cov[\vf_G] = \mB_1^\top \Cov[\vf_0] \mB_1 = \mB_1^\top\mK_0\mB_1.
  \end{equation}
  By definition, $\mL_0 = \mB_1\mB_1^\top$ and $\mL_\rmd = \mB_1^\top \mB_1$ are isospectral, having the same nonzero eigenvalues. 
  Furthermore, using $\mK_0 = \Psi_0(\mL_0)$, we can write above covariance as 
  \begin{equation}
    \mK_G = \mB_1^\top \Psi(\mB_1\mB_1^\top) \mB_1 = \mB_1^\top\mB_1 \Psi_0(\mB_1^\top\mB_1) = \mL_\rmd \Psi_0(\mL_\rmd)
  \end{equation}
  where the second equality can be shown by using the definition of analytic functions of matrix \citep[Corollary 1.34]{higham2008functions}. 
  Furthermore, relying on the eigendecomposition 
  \begin{equation}
    \mL_\rmd = 
    \begin{pmatrix}
      \mU_H & \mU_G & \mU_C 
    \end{pmatrix}
    \begin{pmatrix}
      \vzero & & \\
      & \mLambda_G & \\ 
      & & \vzero
    \end{pmatrix}
    \begin{pmatrix}
      \mU_H & \mU_G & \mU_C 
    \end{pmatrix}^\top,
  \end{equation}
  we can obtain 
  \begin{equation}
    \mK_G = \mU_G \mLambda_G \Psi_0(\mLambda)\mU_G^\top ,
  \end{equation}
  which gives the gradient kernel function $\Psi_G(\mLambda_G) = \mLambda_G\Psi_0(\mLambda_G)$. 
\end{proof}

In the following we provide the respective corollaries for other derivative operations of interest, where the proofs can directly follow from the fact that derivatives preserve Gaussianity. 
\begin{corollary}[Curl of a triangle GP] \label{app-cor:curl-adjoint-triangle-gp}
  Suppose a triangle function $\vf_2$ is a GP $\vf_2\sim\gG\gP(\vzero,\mK_2)$ with $\mK_2=\Psi_2(\mL_2)=\mU_2 \Psi_2(\mLambda_2)\mU_2^\top$ given the eigendecomposition $\mL_2=\mU_2\mLambda_2\mU_2^\top$. 
  Then, its curl is an edge GP $\vf_C\sim\gG\gP(\vzero,\mK_C)$ where $\mK_C = \mU_C\Psi_C(\mLambda_C)\mU_C^\top$ with 
  \begin{equation}
    \Psi_C(\mLambda_C) = \mLambda_C \Psi_2(\mLambda_C).
  \end{equation}
\end{corollary}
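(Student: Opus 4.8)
The plan is to mirror the proof of \cref{cor:gradient-of-node-gp} almost verbatim, under the substitutions $\mB_1^\top \mapsto \mB_2$, $\mL_0 \mapsto \mL_2$ and $\mL_\rmd \mapsto \mL_\rmu$, since the curl-adjoint $\vf_C = \mB_2\vf_2$ plays exactly the role that the gradient $\vf_G = \mB_1^\top\vf_0$ played there. First I would observe that $\mB_2$ is a fixed (deterministic) linear map, so $\vf_C = \mB_2\vf_2$ is an affine image of the Gaussian $\vf_2$ and hence itself a zero-mean GP. Its kernel is then read off directly as $\mK_C = \Cov[\vf_C] = \mB_2\,\Cov[\vf_2]\,\mB_2^\top = \mB_2\mK_2\mB_2^\top$, and the remaining task is purely algebraic: to show this equals $\mU_C\,\mLambda_C\,\Psi_2(\mLambda_C)\,\mU_C^\top$.

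Next I would substitute $\mK_2 = \Psi_2(\mL_2) = \Psi_2(\mB_2^\top\mB_2)$ and invoke the push-through identity for analytic matrix functions, $\mB_2\,g(\mB_2^\top\mB_2) = g(\mB_2\mB_2^\top)\,\mB_2$, which holds by \citep[Corollary 1.34]{higham2008functions} (the same tool used in the gradient case). This gives $\mK_C = \Psi_2(\mB_2\mB_2^\top)\,\mB_2\mB_2^\top = \Psi_2(\mL_\rmu)\,\mL_\rmu = \mL_\rmu\,\Psi_2(\mL_\rmu)$, where the last step uses that $\Psi_2(\mL_\rmu)$ and $\mL_\rmu$ commute. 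Here I would explicitly note that $\mL_2 = \mB_2^\top\mB_2$ and $\mL_\rmu = \mB_2\mB_2^\top$ are isospectral, so evaluating $\Psi_2$ on the nonzero spectrum of $\mL_\rmu$ is consistent with its original definition on $\mL_2$.

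Finally I would diagonalize $\mL_\rmu$ in the Hodge basis $\mU_1 = [\mU_H\,\,\mU_G\,\,\mU_C]$, where $\mL_\rmu$ vanishes on $\mathrm{span}(\mU_H)$ and $\mathrm{span}(\mU_G)$ and carries the nonzero eigenvalues $\mLambda_C$ on $\mathrm{span}(\mU_C)$. Plugging this into $\mL_\rmu\,\Psi_2(\mL_\rmu)$ yields $\mK_C = \mU_C\,\mLambda_C\,\Psi_2(\mLambda_C)\,\mU_C^\top$, from which I read off $\Psi_C(\mLambda_C) = \mLambda_C\,\Psi_2(\mLambda_C)$, completing the claim.

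The only point requiring care — though it is more a bookkeeping subtlety than a genuine obstacle — is the treatment of the zero eigenvalues: $\Psi_2(\mL_\rmu)$ need not vanish on the harmonic and gradient directions (it equals $\Psi_2(0)$ there), but the prefactor $\mL_\rmu$ annihilates those contributions, so the final kernel is supported entirely on $\spn(\mU_C) = \im(\mB_2)$. This is exactly consistent with \cref{prop:samples-of-grad-curl-gps}, which guarantees that samples of a curl GP are divergence-free, and it confirms that the construction lands in the correct Hodge subspace.
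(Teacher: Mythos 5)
Your proof is correct and takes essentially the same approach as the paper: the paper only proves \cref{cor:gradient-of-node-gp} explicitly (via the covariance of a fixed linear map, the push-through identity of \citet[Corollary 1.34]{higham2008functions}, and the isospectrality of the two Laplacians) and states that this corollary follows likewise, which is precisely the substitution $\mB_1^\top \mapsto \mB_2$, $\mL_0 \mapsto \mL_2$, $\mL_\rmd \mapsto \mL_\rmu$ you carry out. Your explicit remark that the prefactor $\mL_\rmu$ annihilates the $\Psi_2(0)$ contributions on the harmonic and gradient directions is a welcome clarification but does not alter the argument.
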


\begin{proposition}[Div and Curl of edge GPs]\label{app-prop:div-curl-gp}
  Let $\vf_1\sim \gG\gP(\vzero,\mK)$ be a Hodge-compositional edge Gaussian process in \cref{def:hodge-compositional-gp}. Then, its divergence and curl are Gaussian processes on nodes and triangles, respectively, as follows 
  \begin{equation}
      \mB_1 \vf \sim \gG\gP(\vzero, \mB_1\mK_G\mB_1^\top), \quad 
      \mB_2^\top \vf \sim \gG\gP (\vzero, \mB_2^\top \mK_C\mB_2).
  \end{equation}
\end{proposition}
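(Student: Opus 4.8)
The plan is to exploit that divergence ($\mB_1\,\cdot$) and curl ($\mB_2^\top\,\cdot$) are fixed linear maps, so they send the Gaussian process $\vf_1$ to Gaussian processes on the node and triangle spaces, and then to compute the resulting covariances explicitly, using that the two ``wrong'' Hodge kernels are annihilated by $\mB_1$ (resp.\ $\mB_2^\top$) on the left. First I would record that, since $\mB_1$ is a fixed operator and affine maps preserve Gaussianity, $\mB_1\vf_1$ is again a Gaussian process with zero mean and covariance $\mB_1\mK_1\mB_1^\top$; likewise $\mB_2^\top\vf_1\sim\gG\gP(\vzero,\mB_2^\top\mK_1\mB_2)$. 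By \cref{lemma:property-hc-gp} we have $\mK_1=\mK_H+\mK_G+\mK_C$, so it suffices to show that $\mB_1$ kills the harmonic and curl kernels and $\mB_2^\top$ kills the harmonic and gradient kernels, leaving exactly $\mB_1\mK_G\mB_1^\top$ and $\mB_2^\top\mK_C\mB_2$.

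The heart of the argument is this annihilation, and it rests entirely on the two topological identities together with the harmonic characterization. Writing $\mK_C=\mU_C\Psi_C(\mLambda_C)\mU_C^\top$ and recalling $\spn(\mU_C)=\im(\mB_2)$, every column of $\mU_C$ has the form $\mB_2\vc$; since $\mB_1\mB_2=\vzero$ (the adjoint of $\textCurl\,\textGrad=\mB_2^\top\mB_1^\top=\vzero$), I get $\mB_1\mU_C=\vzero$ and hence $\mB_1\mK_C\mB_1^\top=\vzero$. For the harmonic part, any column $\vu_H$ of $\mU_H$ lies in $\ker(\mL_1)$, and from $\vu_H^\top\mL_1\vu_H=\lVert\mB_1\vu_H\rVert_2^2+\lVert\mB_2^\top\vu_H\rVert_2^2=0$ I conclude $\mB_1\mU_H=\vzero$ and $\mB_2^\top\mU_H=\vzero$, so both $\mB_1\mK_H\mB_1^\top=\vzero$ and $\mB_2^\top\mK_H\mB_2=\vzero$. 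The symmetric computation, using $\spn(\mU_G)=\im(\mB_1^\top)$ and $\mB_2^\top\mB_1^\top=\vzero$, gives $\mB_2^\top\mU_G=\vzero$ and thus $\mB_2^\top\mK_G\mB_2=\vzero$.

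Assembling these, $\mB_1\mK_1\mB_1^\top=\mB_1\mK_G\mB_1^\top$ and $\mB_2^\top\mK_1\mB_2=\mB_2^\top\mK_C\mB_2$, which are precisely the claimed covariances. I expect no serious obstacle here: the only point that could tempt one into a more delicate argument is whether the harmonic and curl \emph{samples} vanish under $\mB_1$ almost surely, but by working at the level of covariance matrices rather than individual sample paths I sidestep that entirely, and the identities $\mB_1\mB_2=\vzero$, $\mB_2^\top\mB_1^\top=\vzero$ do all the work. An alternative, equivalent route would be the pathwise one used in \cref{proof:prop-gp-samples}: decompose $\vf_1=\vf_G+\vf_C+\vf_H$, note $\mB_1\vf_C=\mB_1\mB_2\vf_2=\vzero$ and $\mB_1\vf_H=\vzero$ so that $\mB_1\vf_1=\mB_1\vf_G$, and then transform; I would mention this as a cross-check but present the covariance-level computation as the main proof.
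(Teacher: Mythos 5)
Your proof is correct and follows essentially the same route as the paper, whose stated justification is simply that derivative operations (fixed linear maps) preserve Gaussianity, with the resulting covariances read off directly. Your explicit verification that $\mB_1$ annihilates $\mK_H$ and $\mK_C$ while $\mB_2^\top$ annihilates $\mK_H$ and $\mK_G$ --- via $\mB_1\mB_2=\vzero$, $\mB_2^\top\mB_1^\top=\vzero$ and the harmonic characterization $\lVert\mB_1\vu_H\rVert_2^2+\lVert\mB_2^\top\vu_H\rVert_2^2=0$ --- just makes explicit the covariance computation the paper leaves implicit, using the same orthogonality facts (e.g.\ $\mB_2^\top\mU_G=\vzero$) that the paper invokes elsewhere.
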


 
\begin{remark}
  These interactions between GPs on nodes, edges and triangles provide us alternative ways to construct gradient and curl edge GPs [cf. \cref{cor:gradient-of-node-gp,app-cor:curl-adjoint-triangle-gp}], as well as construct appropriate node GPs and triangle GPs. 
  They are more applicable when the underlying physical relationships exist between the corresponding functions and the GP priors on the original simplices are easier to construct.
\end{remark}

\subsection{Alternative Hodge-compositional Edge GPs} \label{app:alternative_gp}
Here we provide the proof for \cref{prop:alternative-construction} giving an alternative way to build HC edge GPs. 
\begin{proof}
  From the Hodge decomposition, we can write an edge function as 
  \begin{equation}
    \vf_1 = \vf_H + \mB_1^\top\vf_0 + \mB_2\vf_2.
  \end{equation}
  where $\vf_0$ and $\vf_2$ are some node and triangle functions.
  Then, the proof can be completed by using the results from \cref{cor:gradient-of-node-gp,app-cor:curl-adjoint-triangle-gp}. 
\end{proof}

\subsection{Alternative HC Edge GPs from SPDEs on Edges}
While gradient and curl edge GPs in \cref{def:hodge-compositional-gp} can be linked to their SPDEs as discussed by \cref{prop:spde-grad-curl-gp}, we can also obtain the alternatively constructed counterparts in \cref{cor:gradient-of-node-gp,app-cor:curl-adjoint-triangle-gp} from SPDEs. 
Again, we consider the \Matern~family.
\begin{corollary}\label{app-cor:spde-node-noise}
  Suppose a node function $\vf_0$ is a graph (node) \Matern~GP $\vf_0\sim\gG\gP(\vzero,\mK_0)$ with 
  \begin{equation}
    \mK_0=\Psi_0(\mL_0)=\Big(\frac{2\nu_0}{\kappa^2_0}\mI + \mL_0 \Big)^{-\nu_0}.
  \end{equation}
  Then, \cref{cor:gradient-of-node-gp} gives us its gradient as a gradient edge GP $\vf_G\sim\gG\gP(\vzero,\mK_G)$  with 
  \begin{equation}
    \mK_G = \mL_\rmd \Big(\frac{2\nu_0}{\kappa^2_0}\mI + \mL_\rmd \Big)^{-\nu_0}.
  \end{equation}
  Furthermore, the gradient GP $\vf_G$ is the solution of the following SPDE 
  \begin{equation}
    \Big(\frac{2\nu_0}{\kappa^2_0}\mI + \mL_\rmd \Big)^{\frac{\nu_0}{2}} \vf_G = \mB_1^\top \vw_0 
  \end{equation}
  where $\vw_0$ is a standard Gaussian noise on nodes following $\vf_0\sim\gN(\vzero,\mI)$.
\end{corollary}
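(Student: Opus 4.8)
The statement bundles two claims: identifying the gradient kernel $\mK_G$ and exhibiting the SPDE whose solution is $\vf_G$. The plan is to get the first part as an immediate specialization of \cref{cor:gradient-of-node-gp}, and to establish the second part by pushing the node SPDE through the incidence matrix $\mB_1^\top$, using the intertwining identity between analytic functions of $\mL_0 = \mB_1\mB_1^\top$ and of $\mL_{\rmd} = \mB_1^\top\mB_1$.

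For the kernel, I would invoke \cref{cor:gradient-of-node-gp} with $\Psi_0(\lambda) = \big(\tfrac{2\nu_0}{\kappa_0^2} + \lambda\big)^{-\nu_0}$, so that the node kernel is precisely the graph \Matern~kernel $\mK_0 = \Psi_0(\mL_0)$. That corollary gives $\mK_G = \mU_G \Psi_G(\mLambda_G)\mU_G^\top$ with $\Psi_G(\mLambda_G) = \mLambda_G\Psi_0(\mLambda_G)$; substituting $\Psi_0$ and re-expressing in operator form on the gradient subspace (where $\mL_{\rmd}$ carries the nonzero eigenvalues $\mLambda_G$ and vanishes elsewhere) yields $\mK_G = \mL_{\rmd}\big(\tfrac{2\nu_0}{\kappa_0^2}\mI + \mL_{\rmd}\big)^{-\nu_0}$, exactly the claimed expression. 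This step is essentially bookkeeping already carried out in the proof of \cref{cor:gradient-of-node-gp}.

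For the SPDE, I would start from the node \Matern~SPDE $\big(\tfrac{2\nu_0}{\kappa_0^2}\mI + \mL_0\big)^{\nu_0/2}\vf_0 = \vw_0$ with $\vw_0\sim\gN(\vzero,\mI)$, so that $\vf_G = \mB_1^\top\vf_0$. Left-multiplying by $\mB_1^\top$ gives $\mB_1^\top\big(\tfrac{2\nu_0}{\kappa_0^2}\mI + \mL_0\big)^{\nu_0/2}\vf_0 = \mB_1^\top\vw_0$, and the goal is to move $\mB_1^\top$ to the right of the operator. The key is the intertwining relation $\mB_1^\top g(\mL_0) = g(\mL_{\rmd})\mB_1^\top$, which follows from the elementary identity $\mB_1^\top(\mB_1\mB_1^\top)^k = (\mB_1^\top\mB_1)^k\mB_1^\top$ for all $k\ge 0$ (by induction) extended to analytic $g$ through the matrix functional calculus. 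Applying it with $g(\lambda) = \big(\tfrac{2\nu_0}{\kappa_0^2} + \lambda\big)^{\nu_0/2}$ turns the left side into $\big(\tfrac{2\nu_0}{\kappa_0^2}\mI + \mL_{\rmd}\big)^{\nu_0/2}\mB_1^\top\vf_0 = \big(\tfrac{2\nu_0}{\kappa_0^2}\mI + \mL_{\rmd}\big)^{\nu_0/2}\vf_G$, so that $\vf_G$ solves $\big(\tfrac{2\nu_0}{\kappa_0^2}\mI + \mL_{\rmd}\big)^{\nu_0/2}\vf_G = \mB_1^\top\vw_0$, as claimed.

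The main obstacle is justifying the intertwining relation for the non-integer power $g(\lambda) = \big(\tfrac{2\nu_0}{\kappa_0^2}+\lambda\big)^{\nu_0/2}$: $g$ must be a well-defined analytic function on the spectra of both $\mL_0$ and $\mL_{\rmd}$. This holds because $2\nu_0/\kappa_0^2 > 0$ keeps the argument bounded away from zero on $[0,\infty)$, and because $\mL_0$ and $\mL_{\rmd}$ are isospectral on their nonzero eigenvalues, so the same branch of the power applies to both; the polynomial identity then lifts by the functional calculus. As a consistency check that sidesteps this subtlety, I would alternatively solve the SPDE directly as $\vf_G = \big(\tfrac{2\nu_0}{\kappa_0^2}\mI + \mL_{\rmd}\big)^{-\nu_0/2}\mB_1^\top\vw_0$ and compute $\Cov[\vf_G] = \big(\tfrac{2\nu_0}{\kappa_0^2}\mI + \mL_{\rmd}\big)^{-\nu_0/2}\mL_{\rmd}\big(\tfrac{2\nu_0}{\kappa_0^2}\mI + \mL_{\rmd}\big)^{-\nu_0/2} = \mK_G$, using $\Cov[\mB_1^\top\vw_0] = \mL_{\rmd}$, which confirms the SPDE solution reproduces the kernel from the first part.
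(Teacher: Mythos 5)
Your proposal is correct, and it pivots on exactly the same key identity as the paper's proof: the intertwining relation $\mB_1^\top g(\mL_0) = g(\mL_\rmd)\,\mB_1^\top$, which the paper justifies via its proof of \cref{cor:gradient-of-node-gp} (citing Higham's Corollary 1.34) and which you justify by polynomial induction lifted through the functional calculus. The only real difference is the direction of the argument. The paper solves the edge SPDE, $\vf_G = \big(\tfrac{2\nu_0}{\kappa_0^2}\mI + \mL_\rmd\big)^{-\nu_0/2}\mB_1^\top\vw_0$, intertwines to rewrite this as $\mB_1^\top\big(\tfrac{2\nu_0}{\kappa_0^2}\mI + \mL_0\big)^{-\nu_0/2}\vw_0$, and then computes $\Cov[\vf_G] = \mB_1^\top\big(\tfrac{2\nu_0}{\kappa_0^2}\mI + \mL_0\big)^{-\nu_0}\mB_1 = \mL_\rmd\big(\tfrac{2\nu_0}{\kappa_0^2}\mI + \mL_\rmd\big)^{-\nu_0} = \mK_G$; that is, it verifies that the SPDE solution carries the gradient kernel. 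You instead run the logic forwards: left-multiply the node \Matern~SPDE by $\mB_1^\top$ and intertwine, concluding that $\vf_G = \mB_1^\top\vf_0$ satisfies the edge SPDE; since the operator $\big(\tfrac{2\nu_0}{\kappa_0^2}\mI + \mL_\rmd\big)^{\nu_0/2}$ is invertible, this identifies the unique solution of the SPDE with the gradient GP, whose kernel you obtain separately from \cref{cor:gradient-of-node-gp}. Your forward derivation has the merit of explaining where the edge SPDE and its noise term $\mB_1^\top\vw_0$ come from --- they are the image of the node SPDE under the gradient operator --- while the paper's computation is a self-contained verification that does not require coupling $\vf_0$ to $\vw_0$. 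Note finally that your ``consistency check'' (using $\Cov[\mB_1^\top\vw_0] = \mL_\rmd$ and commuting functions of $\mL_\rmd$) is, up to which side of the intertwining one evaluates on, precisely the paper's proof, so your write-up in fact contains both routes.
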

\begin{proof}
  First, we can solve the SPDE with the following solution 
  \begin{equation}
    \vf_G =  \Big(\frac{2\nu_0}{\kappa^2_0}\mI + \mL_\rmd \Big)^{-\frac{\nu_0}{2}} \mB_1^\top \vw_0 
    = 
    \mB_1^\top \Big(\frac{2\nu_0}{\kappa^2_0}\mI + \mL_0 \Big)^{-\frac{\nu_0}{2}} \vw_0 
  \end{equation}
  where the second equality follows from the definition of $\mL_0$ and $\mL_\rmd$. 
  Given that $\vw_0$ is a GP, so is $\vf_G$ and we can study its covariance as 
  \begin{equation}
    \begin{aligned}
      \Cov[\vf_G] & = \mB_1^\top \Big(\frac{2\nu_0}{\kappa^2_0}\mI + \mL_0 \Big)^{-\frac{\nu_0}{2}} \Cov[\vw_0] \Big(\frac{2\nu_0}{\kappa^2_0}\mI + \mL_0 \Big)^{-\frac{\nu_0}{2}} \mB_1 \\
      & = \mB_1^\top \Big(\frac{2\nu_0}{\kappa^2_0}\mI + \mL_0 \Big)^{-\nu_0} \mB_1 \\
      & = \mL_\rmd \Big(\frac{2\nu_0}{\kappa^2_0}\mI + \mL_\rmd \Big)^{-\nu_0}
    \end{aligned}
  \end{equation}
  which completes the proof. 
\end{proof}

For completeness, we give the corollary relating the curl \Matern~edge GP obtained from some triangle GP to its SPDE representation. 
\begin{corollary}
  Suppose a triangle function $\vf_2$ is a triangle \Matern~GP $\vf_2\sim\gG\gP(\vzero,\mK_2)$ with 
  \begin{equation}
    \mK_2=\Psi_2(\mL_2)=\Big(\frac{2\nu_2}{\kappa^2_2}\mI + \mL_2 \Big)^{-\nu_2}.
  \end{equation}
  Then, \cref{app-cor:curl-adjoint-triangle-gp} gives us its curl adjoint as a curl edge GP $\vf_C\sim\gG\gP(\vzero,\mK_C)$  with 
  \begin{equation}
    \mK_C = \mL_\rmu \Big(\frac{2\nu_2}{\kappa^2_2}\mI + \mL_\rmu \Big)^{-\nu_2}.
  \end{equation}
  Furthermore, the curl GP $\vf_C$ is the solution of the following SPDE 
  \begin{equation}
    \Big(\frac{2\nu_2}{\kappa^2_2}\mI + \mL_\rmu \Big)^{\frac{\nu_2}{2}} \vf_C = \mB_2 \vw_2 
  \end{equation}
  where $\vw_2$ is a standard Gaussian noise on triangles following $\vf_2\sim\gN(\vzero,\mI)$.
\end{corollary}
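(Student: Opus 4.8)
The plan is to mirror the proof of \cref{app-cor:spde-node-noise} almost verbatim, exchanging the roles of $\mB_1^\top$ with $\mB_2$ and of the Laplacian pair $(\mL_0,\mL_\rmd)$ with the dual pair $(\mL_2,\mL_\rmu)$. The statement has two parts: identifying the kernel $\mK_C$, and checking that the stated SPDE has this curl GP as its solution. For the first part I would simply invoke \cref{app-cor:curl-adjoint-triangle-gp} with $\Psi_2(\mL_2)=(\tfrac{2\nu_2}{\kappa_2^2}\mI+\mL_2)^{-\nu_2}$. That corollary gives the curl kernel spectrum $\Psi_C(\mLambda_C)=\mLambda_C\Psi_2(\mLambda_C)$, so substituting yields $\Psi_C(\mLambda_C)=\mLambda_C(\tfrac{2\nu_2}{\kappa_2^2}\mI+\mLambda_C)^{-\nu_2}$ and hence $\mK_C=\mL_\rmu(\tfrac{2\nu_2}{\kappa_2^2}\mI+\mL_\rmu)^{-\nu_2}$, as claimed.

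For the second part I would first solve the SPDE explicitly. Since $\tfrac{2\nu_2}{\kappa_2^2}>0$, the operator $(\tfrac{2\nu_2}{\kappa_2^2}\mI+\mL_\rmu)^{\nu_2/2}$ is positive definite and invertible, so the solution is $\vf_C=(\tfrac{2\nu_2}{\kappa_2^2}\mI+\mL_\rmu)^{-\nu_2/2}\mB_2\vw_2$, which is Gaussian because it is a linear image of $\vw_2$. The key step is the push-through identity for analytic matrix functions: since $\mL_2=\mB_2^\top\mB_2$ and $\mL_\rmu=\mB_2\mB_2^\top$, one has $g(\mL_\rmu)\mB_2=\mB_2\,g(\mL_2)$ for every analytic $g$, which rewrites the solution as $\vf_C=\mB_2(\tfrac{2\nu_2}{\kappa_2^2}\mI+\mL_2)^{-\nu_2/2}\vw_2$. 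Then, using $\Cov[\vw_2]=\mI$, the covariance collapses to $\Cov[\vf_C]=\mB_2(\tfrac{2\nu_2}{\kappa_2^2}\mI+\mL_2)^{-\nu_2}\mB_2^\top=\mL_\rmu(\tfrac{2\nu_2}{\kappa_2^2}\mI+\mL_\rmu)^{-\nu_2}$, where I apply the push-through identity a second time together with $\mB_2\mB_2^\top=\mL_\rmu$. This matches the $\mK_C$ obtained in the first part, completing the argument.

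There is essentially no hard obstacle here, since this is the exact triangle-to-edge dual of the already-established node-to-edge result. The one point deserving care is justifying the matrix-function commutation $g(\mB_2\mB_2^\top)\mB_2=\mB_2\,g(\mB_2^\top\mB_2)$; I would cite it from the same source used in the node case (Corollary 1.34 of Higham) or verify it directly on a singular value decomposition $\mB_2=\mathbf{U}\mathbf{\Sigma}\mathbf{V}^\top$, where it reduces to the entrywise identity $g(\mathbf{\Sigma}\mathbf{\Sigma}^\top)\mathbf{\Sigma}=\mathbf{\Sigma}\,g(\mathbf{\Sigma}^\top\mathbf{\Sigma})$ valid for any analytic $g$ regardless of $g(0)$. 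I would also remark that $\mL_2$ and $\mL_\rmu$ are isospectral on their nonzero eigenvalues, so the resulting kernel is supported exactly on the curl subspace $\spn(\mU_C)=\im(\mB_2)$, confirming that $\vf_C$ is a genuine curl edge GP.
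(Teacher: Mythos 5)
Your proposal is correct and takes essentially the same route as the paper: the paper's own proof of this corollary simply defers to the node-case argument of \cref{app-cor:spde-node-noise}, which is precisely what you reproduce---solving the SPDE explicitly, applying the push-through identity $g(\mB_2\mB_2^\top)\mB_2=\mB_2\,g(\mB_2^\top\mB_2)$, and matching the resulting covariance $\mL_\rmu\bigl(\tfrac{2\nu_2}{\kappa_2^2}\mI+\mL_\rmu\bigr)^{-\nu_2}$ against the kernel given by \cref{app-cor:curl-adjoint-triangle-gp}. Your extra care in justifying the matrix-function commutation (via Higham's Corollary 1.34 or an SVD) is the same justification the paper invokes in its node-to-edge proof, so nothing is missing.
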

\begin{proof}
    The proof can follow the same procedure as above for \cref{app-cor:spde-node-noise}.
\end{proof}

\section{EXPERIMENTS} \label{app:experiment_details}
Here we provide additional details on the three experiments presented in the main text. 

\paragraph{Experimental Setup}
In our three experiments we consider the regression tasks and implement GP regression using the \texttt{GPyTorch} library \citep{gardner2018gpytorch}.
We optimize the marginal log likelihood loss for 1000 iterations with the \texttt{ADAM} optimizer where the learning rate is set to the default value of 0.001. 
We run each experiment 10 times with hyperparameters randomly initialized.
We report evaluation metrics averaged over 10 experiments and the respective standard deviations.  
All experiments are run on a NVIDIA GeForce RTX 3080 GPU with 10GB of memory.

\paragraph{Line-graph Construction}
Given the incidence matrix $\mB_1$ of the original graph, the adjacency matrix and the corresponding graph Laplacian of the line-graph can be found as $\mA_{lg}:= |\mB_1^\top\mB_1-2\mI|$ and $\mL_{lg} = \diag(\mA_{lg}\vone)-\mA_{lg}$.

\subsection{Additional Details for the Forex Experiment}
In the forex experiment, we obtain the data from \textit{Foreign Exchange Data} by \textit{Oanda Corporation}\footnote{\url{https://www.oanda.com/}.}. 
The data was collected at 2018/20/05 17:00 UTC by \citet{jiaGraphbasedSemiSupervisedActive2019}.
It includes the pairwise exchange rates between the 25 most traded currencies, which form 210 exchangeable pairs.
With them as nodes and edges, we then construct an unweighted $\textSC$ by including the triangles formed by any three pairwise exchangeable currencies. 
For an edge $\{i,j\}$ connecting currencies $i,j$, we assign its orientation based on an alphabetical order of their currency names, and likewise for a triangle. 
For each exchangeable pair, we consider the underlying edge flow as $f_1(i,j) = \log r^{i/j}$, translating the arbitrage-free condition to curl-free condition, where $r^{i/j}$ is the midpoint between ask and bid prices. 
\cref{app-fig:forex_different_training_ratio} shows the prediction RMSEs using different GP models with respect to training ratios from 0.1 to 0.5 with a step 0.05, as well as the learned \Matern~kernels. 

For visualizing the predictions using different models, we consider a smaller market for better visibility where we first randomly removed seven currencies then half of the exchangeable pairs, resulting 18 currencies and 77 pairs, as shown in \cref{app-fig:forex-smaller-market}.

\begin{figure}[t!]
  \begin{subfigure}{0.33\linewidth}
    \includegraphics[width=1\linewidth]{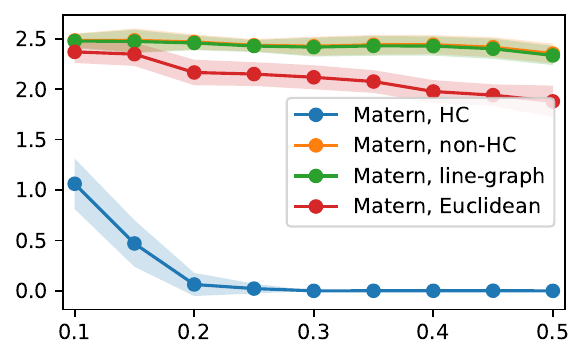} 
    \caption{RMSEs versus training ratios}
  \end{subfigure}
  \begin{subfigure}{0.33\linewidth}
    \includegraphics[width=1\linewidth]{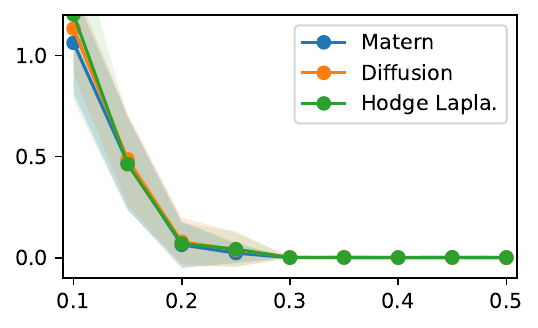} 
    \caption{RMSEs versus training ratios}
  \end{subfigure}
  \begin{subfigure}{0.3\linewidth}
    \includegraphics[width=1\linewidth]{figures/forex/learned_kernel_spectral.pdf} 
    \caption{Learned \Matern~kernels}
  \end{subfigure}
  \caption{(a) Forex prediction RMSEs of different GPs using \Matern~kernels with respect to training ratios. (b) Forex prediction RMSEs of HC GPs using different edge kernels with respect to training ratios. (c) Learned HC and non-HC \Matern~kernels in the spectrum for a training ratio of $0.2$.}
  \label{app-fig:forex_different_training_ratio}
\end{figure}

\begin{figure}[t!]
  \begin{subfigure}{0.32\linewidth}
    \includegraphics[width=1\linewidth]{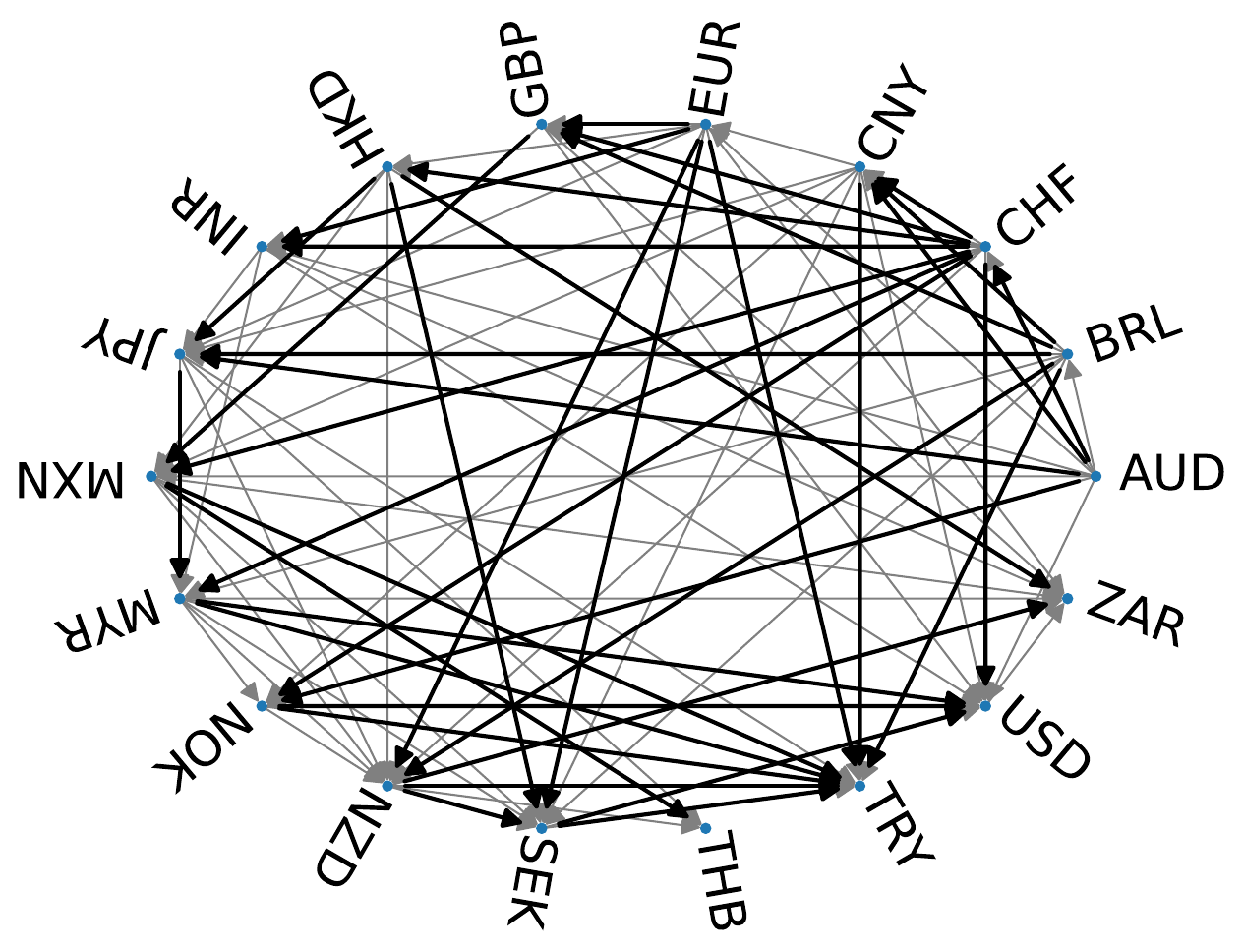} 
    \caption{Forex market $\textSC$}
  \end{subfigure}
  \begin{subfigure}{0.33\linewidth}
    \includegraphics[width=1\linewidth]{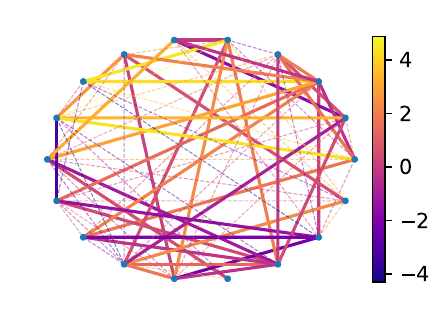} 
    \caption{Ground truth}
  \end{subfigure}
  \begin{subfigure}{0.33\linewidth}
    \includegraphics[width=1\linewidth]{figures/forex_smaller/mean.pdf} 
    \caption{HC \Matern, posterior mean}
  \end{subfigure}
  \begin{subfigure}{0.33\linewidth}
    \includegraphics[width=1\linewidth]{figures/forex_smaller/joint_matern_mean.pdf} 
    \caption{non-HC \Matern, posterior mean}
  \end{subfigure}
  \begin{subfigure}{0.34\linewidth}
    \includegraphics[width=1\linewidth]{figures/forex_smaller/std.pdf} 
    \caption{HC \Matern, posterior std}
  \end{subfigure}
  \begin{subfigure}{0.33\linewidth}
    \includegraphics[width=0.98\linewidth]{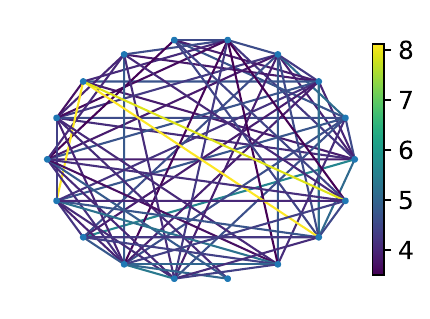} 
    \caption{Prior variance of HC \Matern~GP}
  \end{subfigure}
  \caption{(a-e): Visualization of forex rates predictions in a smaller market. (f): Prior variance of the learned HC \Matern~GP. Note that (b-d) and (f) are the same as the ones in the main content [cf. \cref{fig:visualization_forex}]. Here we show them with a better resolution.}
  \label{app-fig:forex-smaller-market}
\end{figure}

\subsection{Additional Details for Ocean Current Analysis}
In the second experiment, we consider the ocean drifter data, also known as \textit{Global Lagrangian Drifter Data}, which was collected by NOAA Atlantic Oceanographic and Meteorological Laboratory\footnote{\url{http://www.aoml.noaa.gov/envids/gld/}.}.
Each point in the dataset is a buoy at a specific time, with buoy ID, location (in latitude and longitude), date/time, velocity and water temperature. 
We consider the buoys that were in the North Pacific ocean dated from 2010 to 2019 with a size of around three million.
The dataset itself is a 3D point cloud after converting the location to the \textit{earth-centered, earth-fixed} (ECEF) coordinate system. 
We follow the procedure in \citet{chen2021decomposition} to first sample 1,500 buoys furthest from each other, then construct a weighted $\textSC$ as a Vietoris-Rips (VR) complex with $N_1$ around 20k and $N_2$ around 90k.
We then convert the velocity field into flows on the edges of $\textSC$ by using the linear integration approximation \citep{chenHelmholtzianEigenmapTopological2021}.
We randomly sample $20\%$ of the edges for training and test on the rest. 
To efficiently construct the edge kernels, we use eigensolver in \citet{knyazev2001toward}, implemented using the \texttt{megaman} library \citep{mcqueen2016megaman}, to compute the eigenpairs associated to the 500 largest eigenvalues. 
We evaluate the prediction mean and uncertainty in the edge flow domain, reported in \cref{app-tab:ocean-flow-full-results}.
Furthermore, we obtain the gradient and curl components of the edge flow of the prediction as in \cref{app-subsec:posterior-hodge-components}.
We visualize the predictions in the edge flow domain in \cref{app-fig:ocean-flow-full-figures-edge-flow}.
We see that both HC and non-HC edge diffusion GPs give close performance and they capture the general pattern of the edge flow. 
Moreover, the standard deviation is small in most of the locations except few locations (small islands around the lower left corner) where the edge flows (velocity fields) exhibit more discontinuities due to the boundary.

We further convert the edge flows back into vector fields, as shown in \cref{app-fig:ocean-flow-full-figures}. 
We refer to \citet{chenHelmholtzianEigenmapTopological2021} for this procedure. 
We also approximate the standard deviation of the velocity field prediction by sampling 50 edge flows from the posterior distribution and converting them to the vector field domain, followed by computing the average $\ell_2$ distance between the samples and the mean per location, as shown in \cref{app-fig:std_pointwise_approx}.




\begin{table}[t!]
  \centering
  \caption{Ocean current inference results.} \label{app-tab:ocean-flow-full-results}
  \resizebox{0.8\linewidth}{!}{
    \begin{tabular}{lllllll}
      \toprule
      \multirow{2}{*}{Method}  & \multicolumn{3}{c}{RMSE } & \multicolumn{3}{c}{NLPD}  \\ 
      \cmidrule(lr){2-4} \cmidrule(lr){5-7}  & Diffusion & \Matern~& Hodge Laplacian  &  Diffusion & \Matern~& Hodge Laplacian \\
      \midrule      
      Euclidean & $1.00\pm0.01$ & $1.00\pm0.00$ & --- & $1.42\pm0.01$ & $1.42\pm0.10$ & --- \\
      Line-Graph & $0.99\pm0.00$ & $0.99\pm0.00$ & --- & $1.41\pm0.00$ & $1.41\pm0.00$ & --- \\ 
      Non-HC & $0.35\pm0.00$ & $0.35\pm 0.00$ & $0.35\pm 0.00$ & $0.33\pm 0.00$ & $0.36\pm0.03$ & $0.33\pm0.01$\\
      HC & $0.34\pm 0.00$ & $0.35\pm 0.00$ & $0.35\pm 0.00$ & $0.33\pm0.01$ & $0.37\pm 0.04$ & $0.33\pm 0.01$\\ 
      \bottomrule
      \end{tabular}
  }
\end{table}

\begin{figure*}[hpt!]
  \begin{subfigure}{0.5\linewidth}
    \includegraphics[width=1\linewidth]{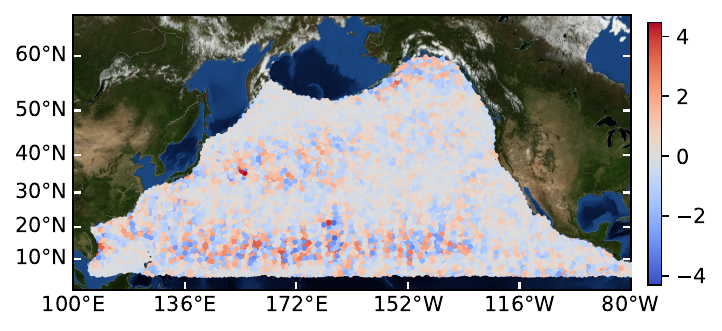}
    \caption{Original ocean current}
  \end{subfigure}
  \begin{subfigure}{0.5\linewidth}
    \includegraphics[width=1\linewidth]{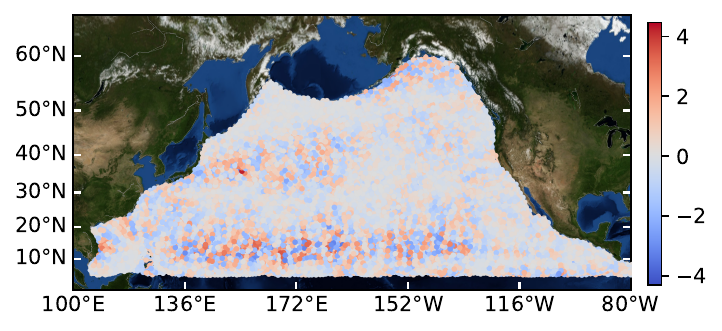}
    \caption{Predicted ocean current}
  \end{subfigure}
  \begin{subfigure}{0.5\linewidth}
    \includegraphics[width=1\linewidth]{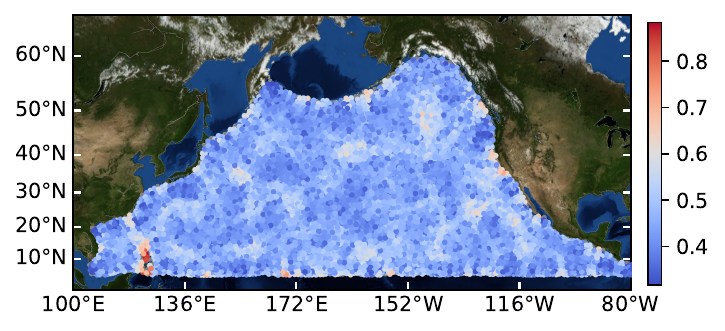}
    \caption{Standard deviation}
  \end{subfigure}
  \begin{subfigure}{0.5\linewidth}
    \includegraphics[width=1\linewidth]{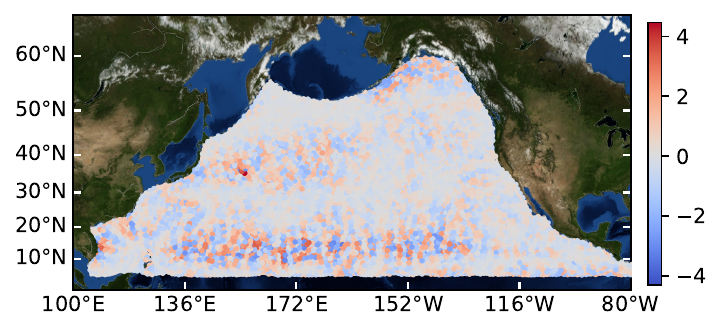}
    \caption{Predicted ocean current, non-HC diffusion}
    \label{app-fig:ocean-flow-sample-edge-flow}
  \end{subfigure}
  \begin{subfigure}{0.5\linewidth}
    \includegraphics[width=1\linewidth]{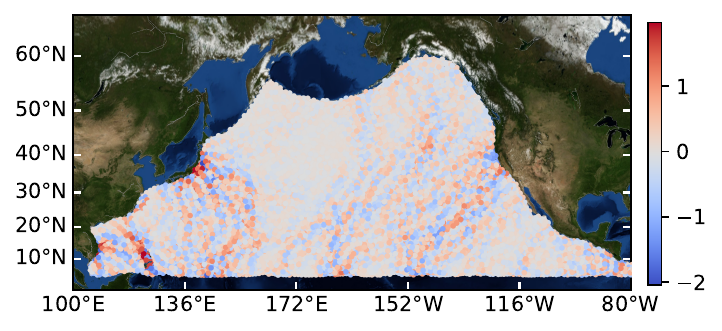}
    \caption{Original gradient ocean current}
  \end{subfigure}
  \begin{subfigure}{0.5\linewidth}
    \includegraphics[width=1\linewidth]{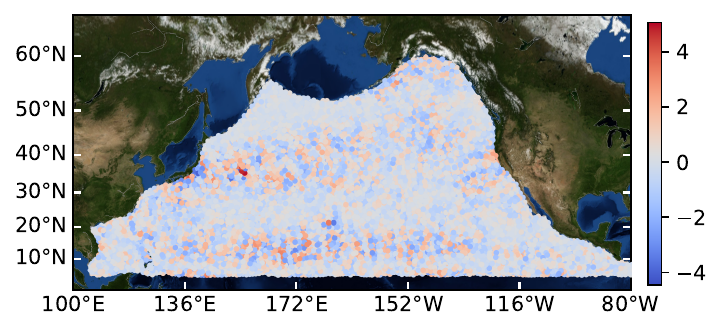}
    \caption{Original curl ocean current}
  \end{subfigure}
  \begin{subfigure}{0.5\linewidth}
    \includegraphics[width=1\linewidth]{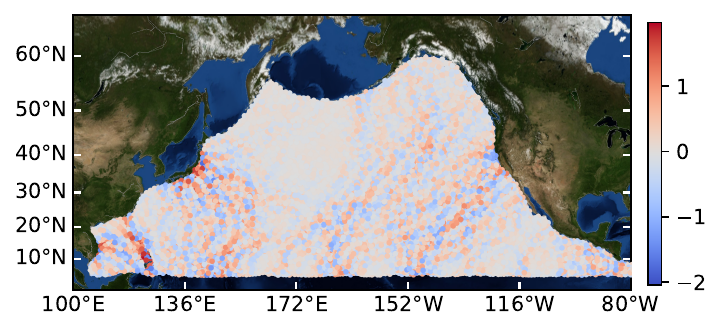}
    \caption{Predicted gradient flow}
  \end{subfigure}
  \begin{subfigure}{0.5\linewidth}
    \includegraphics[width=1\linewidth]{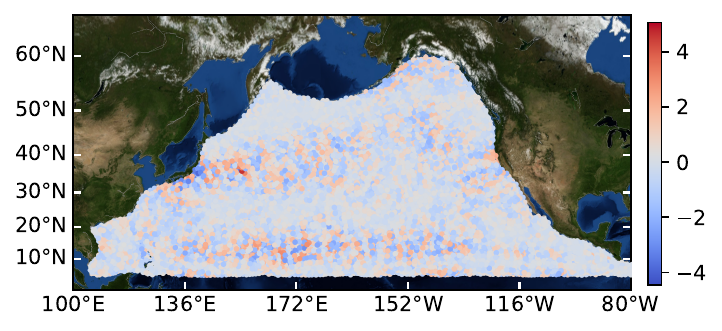}
    \caption{Predicted curl flow}
  \end{subfigure}
  \caption{(a-h) Results for ocean current prediction with 20\% training ratio in the edge flow domain. Note that we highlight the edge flow values on the middle points of the edges.}
  \label{app-fig:ocean-flow-full-figures-edge-flow}
\end{figure*}

\begin{figure*}[hpt!]
  \begin{subfigure}{0.5\linewidth}
    \includegraphics[width=1\linewidth]{figures/ocean_flow/noisy_vector_field.pdf}
    \caption{Original ocean current}
  \end{subfigure}
  \begin{subfigure}{0.5\linewidth}
    \includegraphics[width=1\linewidth]{figures/ocean_flow/posterior_vector_field.pdf}
    \caption{Posterior mean}
  \end{subfigure}
  \begin{subfigure}{0.5\linewidth}
    \includegraphics[width=1\linewidth]{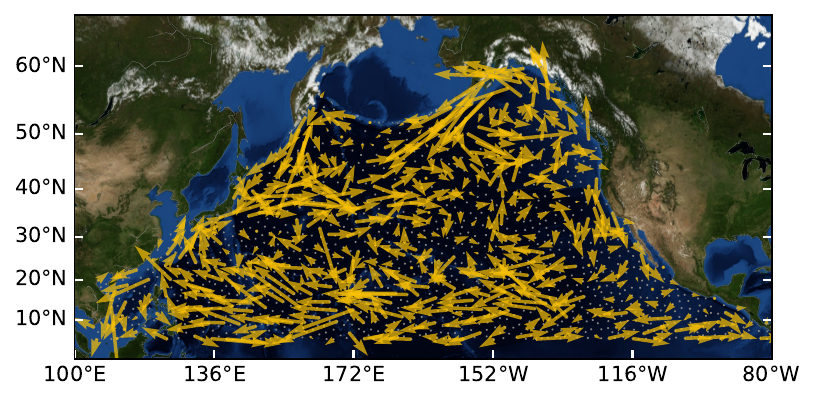}
    \caption{Sample from posterior distribution}
    \label{app-fig:ocean-flow-sample}
  \end{subfigure}
  \begin{subfigure}{0.52\linewidth}
    \includegraphics[width=1.02\linewidth]{figures/ocean_flow/std_pointwise_approx.pdf}
    \caption{Standard deviation approximated using 50 samples}
    \label{app-fig:std_pointwise_approx}
  \end{subfigure}
  \begin{subfigure}{0.5\linewidth}
    \includegraphics[width=1\linewidth]{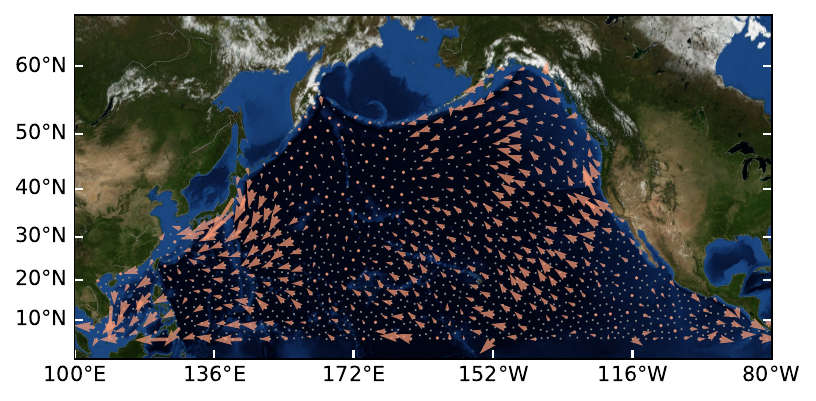}
    \caption{Original curl-free component}
  \end{subfigure}
  \begin{subfigure}{0.5\linewidth}
    \includegraphics[width=1\linewidth]{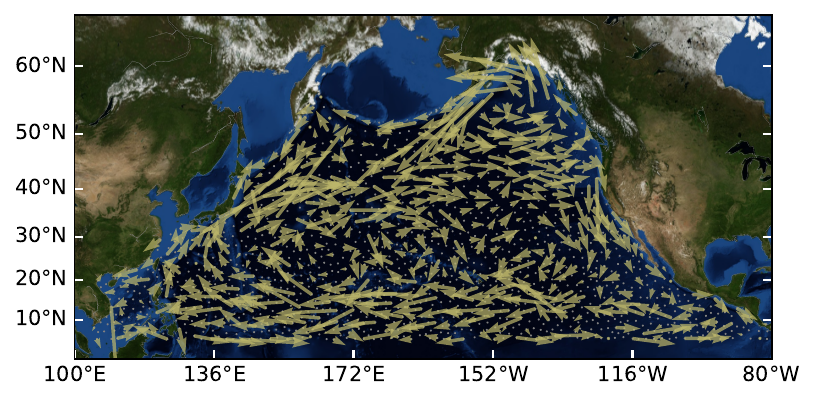}
    \caption{Original div-free component}
  \end{subfigure}
  \begin{subfigure}{0.5\linewidth}
    \includegraphics[width=1\linewidth]{figures/ocean_flow/grad_vector_field.pdf}
    \caption{Predicted curl-free component}
  \end{subfigure}
  \begin{subfigure}{0.5\linewidth}
    \includegraphics[width=1\linewidth]{figures/ocean_flow/curl_vector_field.pdf}
    \caption{Predicted div-free component}
  \end{subfigure}
  \caption{(a-h) Results for ocean current prediction with 20\% training ratio in the vector field domain. Note that (a-b) and (g-h) are the same as in \cref{fig:ocean_flow_illustration}. We show them here for reader's convenience.}
  \label{app-fig:ocean-flow-full-figures}
\end{figure*}

\subsection{Additional Details for Water Supply Networks}
We obtain the Zhi Jiang WSN from \citet{dandy06} which contains 114 nodes (113 tanks and 1 source reservoir) and 164 edges (water pipes), no triangles considered. 
We build an unweighted graph based on the topology of this WSN. 
We model the hydraulic heads as functions on nodes $\vf_0$ and water flowrates as functions on edges $\vf_1$. 
A WSN is often governed by the following equations
\begin{equation}
  \text{mass conservation}: \mB_1\vf_1 = \vq, 
  \text{ and }
  \text{Hazen-Williams equation}: [\mB_1^\top\vf_0](e) = \bar{\vf}_1(e):=r_e f_1(e)^{1.852}
\end{equation}
for a pipe $e$, where $\vq\in\R^{N_0}$ is the demand on nodes, $r_e$ is the roughness of pipe $e$ \citep{dini2014new}.
We then use the \texttt{WNTR} library \citep{klise2017water} to simulate a scenario generating the states of node heads and edge flowrates given the pipe roughnesses and the node demands.
The latter are sampled uniformly from $0$ to $10$ (unit $\rm{liter}/s$), modeling the read-world demand. 

We consider the joint state estimation of both heads (using node GPs) and the adjusted flowrates $\bar{\vf}_1$ (using edge GPs). 
Specifically, our GP models are 
\begin{equation}
  \begin{pmatrix}
    \vf_0 \\ 
    \bar{\vf}_1 
  \end{pmatrix}
  \sim \gG\gP
  \Bigg( 
  \begin{pmatrix}
    \vzero \\
    \vzero
  \end{pmatrix},
  \begin{pmatrix}
    \mK_0 & \\ 
    & \mK_1
  \end{pmatrix}
  \Bigg).
\end{equation}
We choose the \Matern~and diffusion node GPs \citep{borovitskiyMatErnGaussian2021}.
For HC edge GPs, we leverage the physical prior to model $\mK_1 = \mB_1^\top\mK_0\mB_1$ as discussed in \cref{cor:gradient-of-node-gp}, while for non-HC edge GPs, we choose them as in \cref{eq.simple_matern_diffusion}, of the same type as node GPs.
We randomly sample $50\%$ of the nodes and edges for training and use the rest for test. 
Note that the WSN has small edge connectivity. 
The randomness of the training set may disconnect the graph, which may deteriorate the performance, causing the large variance in the metrics.

\end{document}